\newcommand{\kzedit}[1]{{\color{black}#1}}
\begin{document}

\title{Model-Based Multi-Agent RL in Zero-Sum Markov Games with Near-Optimal Sample Complexity}

\author{\name Kaiqing Zhang
\email kaiqing@umd.edu \\
     \addr  University of Maryland, College Park\\
       College Park, MD 20740, USA
       \AND
       \name Sham M. Kakade \email sham@seas.harvard.edu \\
      \addr Harvard University\\
	Cambridge, MA 02138, USA
       \AND
       \name Tamer Ba\c{s}ar \email basar1@illinois.edu \\
        \addr University of Illinois at Urbana-Champaign\\
       Urbana, IL 61801, USA
       \AND
       \name Lin F. Yang \email linyang@ee.ucla.edu \\
        \addr University of California, Los Angeles\\
       Los Angeles, CA 90095, USA
       }

\editor{Joelle Pineau}

\maketitle

\begin{abstract}
	Model-based reinforcement learning (RL), which finds an optimal policy after establishing an empirical model, has long been recognized  as one of the cornerstones of RL. 
	It is especially suitable for multi-agent RL (MARL), as it naturally decouples the \emph{learning} and the \emph{planning} phases, and avoids the 
	\emph{non-stationarity} problem when all agents are improving their policies simultaneously. Though intuitive and widely-used, the sample complexity of model-based MARL algorithms has not been fully investigated. In this paper, we aim   to address the fundamental question about its sample complexity. 
	We study arguably the most basic MARL setting: two-player discounted zero-sum  Markov games, given only access to a generative model. We  show that 
	model-based MARL achieves a sample complexity of $\tilde \cO(|\cS||\cA||\cB|(1-\gamma)^{-3}\epsilon^{-2})$ for finding the Nash equilibrium (NE) \emph{value} up to some $\epsilon$ error, and the $\epsilon$-NE \emph{policies} with a smooth planning oracle, where $\gamma$ is the discount factor, and $\cS,\cA,\cB$ denote the state space, and the action spaces for the two agents. 
	We further show that such a sample bound is \emph{minimax-optimal} (up to  logarithmic  factors) if the algorithm is \emph{reward-agnostic}, where the algorithm queries state transition samples without reward knowledge, by establishing a matching lower bound. This is in contrast to the usual \emph{reward-aware} setting, where  the sample complexity lower bound is $\tilde\Omega(|\cS|(|\cA|+|\cB|)(1-\gamma)^{-3}\epsilon^{-2})$, and this model-based approach is \emph{near-optimal} with only a gap on the $|\cA|,|\cB|$ dependence. 
	Our results  not only illustrate the sample-efficiency of this basic  model-based MARL approach, but also elaborate on the fundamental tradeoff between its power (easily handling the reward-agnostic case) and limitation (less adaptive and suboptimal in $|\cA|,|\cB|$), which particularly arises in the multi-agent context.     
\end{abstract}  

\begin{keywords}
	Multi-Agent RL, Zero-Sum Markov Games, Near-Optimal Sample Complexity
\end{keywords}

\section{Introduction}\label{sec:intro}

Recent years have witnessed numerous successes of reinforcement learning (RL) in many applications, e.g., playing strategy games \citep{OpenAI_dota,alphastarblog}, playing the game of Go \citep{silver2016mastering, silver2017mastering}, autonomous driving \citep{shalev2016safe}, and  security \citep{nguyen2019deep,zhang2019non}. Most of these successful applications involve more than one decision-maker, giving birth to the surging interests and efforts in studying multi-agent RL (MARL) recently, especially on the theoretical side \citep{wei2017online,zhang2018fully,sidford2019solving,zhang2019policy,xie2020learning,shah2020reinforcement,bai2020provable,bai2020near}. See also comprehensive surveys on MARL in \cite{busoniu2008comprehensive,zhang2019multi,nguyen2020deep}.

In general MARL, all agents affect both the state  transition and the rewards of each other,  while each agent may possess different, sometimes even totally conflicting objectives. Without knowledge of  the model, the agents have to resort to data to either estimate the model, improve their own policy, and/or infer other agents' policies. 
One fundamental challenge in MARL is the  emergence of \emph{non-stationarity} during the learning  process \citep{busoniu2008comprehensive,zhang2019multi}: when multiple agents improve their policies concurrently and directly using samples, the environment  becomes non-stationary from each agent's perspective. This has  posed great challenge to development of  effective MARL algorithms based on single-agent ones, especially \emph{model-free} ones, as the condition for guaranteeing convergence in the latter fails to hold in MARL.   \kzedit{One tempting remedy for this non-stationarity issue} is the simple while intuitive method --- model-based\footnote{\kzedit{Note that we here follow the convention of {\it model-based} approach in the generative model setting \citep{azar2013minimax,yang2019optimality,li2020breaking}, which separates these two stages explicitly. In general, model-based RL approaches  do not have to separate the two stages, see e.g., Bayesian RL \citep{poupart2006analytic,ghavamzadeh2015bayesian}, and model-based RL in online exploration settings  \citep{azar2017minimax,bai2020provable}.}} MARL: one first estimates an   empirical model  using data, and then finds the optimal, more specifically, equilibrium policies in this empirical model, via planning. Model-based MARL naturally decouples the \emph{learning} and \emph{planning} phases, and can be incorporated with \emph{any} black-box planning algorithm  that is efficient, e.g., value iteration \citep{shapley1953stochastic} and  (generalized) policy iteration \citep{patek1997stochastic,perolat2015approx}. More importantly, after estimating the model, this approach can potentially handle \emph{more than one} MARL tasks with different reward functions but a  common transition model, without re-sampling the data. Being able to handle this \emph{reward-agnostic} case greatly expands the power of such a model-based approach.

Though intuitive and widely-used, rigorous theoretical justifications for these model-based MARL methods are relatively rare. In this work, our goal is to answer the following standing question: how good is the performance of this na\"{i}ve ``plug-in'' method in terms of non-asymptotic sample complexity? To this end, we focus on arguably the most basic MARL setting  since \cite{littman1994markov}: two-player discounted zero-sum Markov games (MGs) with simultaneous-move agents,  given only access to a generative model. This generative model allows agents to sample the MG, and query the next state from the transition process, given any state-action pair as input. The generative model setting has been  a benchmark in RL when studying the sample efficiency of algorithms \citep{kearns1999finite,kakade2003sample,azar2013minimax,sidford2018near,yang2019optimality}. 
Indeed, this model allows for the study of sample-based multi-agent planning over a long horizon, and helps develop better understanding of the statistical properties of the algorithms, decoupled from the exploration complexity.

Motivated by recent minimax optimal complexity results for single-agent model-based RL \citep{yang2019optimality}, we address  the question above with a positive answer: the model-based MARL approach can achieve near-minimax optimal sample complexity --- in terms of dependencies on the size of the state space, the horizon, and the desired accuracy --- for finding both the Nash equilibrium (NE) value and the NE policies. We also provide a separation in the achievable sample complexity, unique to the multi-agent setting, where, with regards to the dependencies on the number of actions, the na\"ive model-based approach is sub-optimal.
A detailed description is provided next.

\paragraph{Contributions.} We establish the sample complexities of model-based MARL  in zero-sum discounted Markov games, when a generative model is available. 
First, observing that the sampling process in this setting is agnostic to the reward function, we distinguish between   two algorithmic frameworks: \emph{reward-aware} and \emph{reward-agnostic} cases, depending on whether the reward is revealed \emph{before} or \emph{after} the sampling. The model-based approach can inherently handle both cases, especially the latter case with multiple reward functions, without re-sampling the data. 
Second, by establishing lower bounds for both cases, we show that there is indeed a separation in sample complexity, which is unique in the multi-agent setting. 
Third, we show that up to some logarithmic  factors, the model-based approach is indeed minimax optimal in all parameters  in the more challenging reward-agnostic case, and has only a gap on the $|\cA|,|\cB|$ (both agents' action space size) dependence in the reward-aware case.
This separation and the (near-)minimax results have not only justified the sample efficiency of this simple approach, but also highlighted both its power (easily handling multiple reward functions known in hindsight) and its limitation (less adaptive and can hardly  achieve optimal complexity with reward knowledge), particularly arising in the multi-agent RL context.  These results are first-of-their-kind in model-based MARL, and among the first (near-)minimax results in general MARL,  to the best of our knowledge. We also believe that this separation may shed some light on the choice of model-free and model-based approaches in various MARL scenarios in practice, and provide new understandings for algorithm-design in  other MARL settings, e.g., with no generative model, and going beyond two-player zero-sum MGs.

\paragraph{Related Work.}
Stemming from the formative work \cite{littman1994markov}, MARL has been mostly studied under the framework of   Markov games \citep{shapley1953stochastic}. There has been no     shortage of provably convergent MARL algorithms ever since then \citep{littman2001friend,hu2003nash,greenwald2003correlated}.  However, most of these early results are Q-learning-based (thus model-free) and asymptotic, with no sample complexity guarantees. To establish \emph{non-asymptotic} results, \cite{perolat2015approx,perolat2016use,perolat2016learning,yang2019theoretical,zhang2018finite} have studied the sample complexity of \emph{batch} model-free MARL methods. There are also increasing interests in policy-based (thus also model-free) methods for solving special MGs with non-asymptotic convergence guarantees \citep{perolat2018actor,srinivasan2018actor,zhang2019policy}.  No result on the (near-)minimax optimality of these  complexities has been established prior to the present work. 

Specific to the two-player zero-sum setting, \cite{jia2019feature} and \cite{sidford2019solving} have considered \emph{turn-based  MGs}, a special case of the simultaneous-move MGs considered  here, with a generative model. Specifically,  \cite{sidford2019solving} established near-optimal sample complexity of $\tilde \cO((1-\gamma)^{-3}\epsilon^{-2})$  for a variant of Q-learning for this setting. More recently, \cite{bai2020provable,xie2020learning} have established both regret and sample complexity guarantees for episodic zero-sum MGs, without a generative model, with focus on efficient exploration.    The work in \cite{shah2020reinforcement} also focused on the turn-based setting, and combined  Monte-Carlo Tree Search and supervised learning to find the NE values. 
In contrast, model-based MARL theory has relatively limited literature.      \cite{brafman2002r} proposed the R-MAX  algorithm for average-reward MGs, with polynomial sample complexity. \cite{wei2017online}  developed a model-based upper confidence algorithm with polynomial sample complexities for the same setting.    
These methods differ from ours, as they are either specific model-free approaches, or  not clear yet if they are (near-)minimax optimal in the corresponding setups.  
Concurrent to our work, \cite{bai2020near} developed \emph{model-free} algorithms with near-optimal sample complexities in episodic settings without a generative model. The results are optimal in $|\cS|,|\cA|,|\cB|$ dependence, but not in the horizon $H$.  \kzedit{Finally, we note that MARL in Markov games  is not restricted to the competitive setting of two-player zero-sum, and the studies in (multi-player)  cooperative/potential settings  \citep{leonardos2021global,zhang2021gradient,ding2022independent,sayin2022fictitious} and general-sum settings  \citep{hu2003nash,liu2021sharp,jin2021v,mao2022improving,mao2023provably}  also exist, and is not the focus of the present paper.}

In the single-agent regime, there has been extensive literature on  non-asymptotic efficiency of RL in MDPs;  see \cite{kearns1999finite,kakade2003sample,strehl2009reinforcement,jaksch2010near,azar2013minimax,osband2014model,dann2015sample,azar2017minimax,wang2017randomized,sidford2018near,jin2018q,li2020breaking}. Amongst them, we highlight the minimax optimal ones:  \cite{azar2013minimax} and \cite{azar2017minimax} have provided minimax optimal results for sample complexity and regret in the settings with and without a generative model, respectively. Specifically,  \cite{azar2013minimax} has shown  that to achieve the $\epsilon$-optimal  \emph{value} in Markov decision processes (MDPs),  at least $\tilde\Omega(|\cS||\cA|(1-\gamma)^{-3}\epsilon^{-2})$ samples are needed, for $\epsilon\in(0,1]$. They also showed that to find an $\epsilon$-optimal \emph{policy}, the same minimax complexity order in $1-\gamma$ and $\epsilon$ can be attained, if $\epsilon\in (0,(1-\gamma)^{-1/2}|\cS|^{-1/2}]$ and the total sample complexity is $\tilde\cO(|\cS|^2|\cA|)$, which is in fact \emph{linear} in the model size. Later, \cite{sidford2018near} has proposed a Q-learning based approach to attain this lower bound and remove the extra dependence on $|\cS|$, for $\epsilon\in(0,1]$. More recently,   \cite{yang2019optimality} developed new techniques based on \emph{absorbing MDPs}, to show that model-based RL also   achieves the lower bound for finding an $\epsilon$-optimal \emph{policy}, with a larger $\epsilon$ range of $(0,(1-\gamma)^{-1/2}]$\footnote{While preparing the present work, \cite{li2020breaking} has further improved the minimax optimal results in \cite{yang2019optimality}, in that they cover the entire range of sample sizes. We believe the improvement can also be incorporated in the MARL setting here, which is left as our future work.}. Finally, our separation of the reward-agnostic case is motivated by the recent novel framework of reward-free RL in \cite{jin2020reward}.

\section{Preliminaries}\label{sec:prelim}

\paragraph{Zero-Sum  Markov Games.}

Consider a zero-sum MG\footnote{We will hereafter  refer to this model simply  as a \emph{MG}.} $\cG$ characterized by $(\cS,\cA,\cB,P,r,\gamma)$, where $\cS$ is the state space; $\cA,\cB$ are the action spaces of agents $1$ and $2$, respectively; $P:\cS\times\cA\times\cB\to \Delta(\cS)$  denotes the transition probability of states; $r: \cS\times\cA\times \cB\to[0,1]$ denotes the reward function\footnote{Our results can be generalized to other ranges of reward function by a 
 standard reduction, see e.g., \cite{sidford2018near}, and randomized reward functions.} of agent $1$ (thus $-r$ is the bounded reward function of agent $2$); and $\gamma\in[0,1)$ is the discount  factor. The goal of agent $1$ (agent $2$) is to maximize (minimize) the long-term accumulative discounted reward. In MARL, the agents aim to achieve this goal using data samples collected from the model.   

At each time $t$, agent $1$ (agent $2$) has a stationary (not necessarily deterministic) policy $\mu:\cS\to\Delta(\cA)$ ($\nu:\cS\to\Delta(\cB)$), where $\Delta(\cX)$ denotes the space of all probability measures over $\cX$,  
so that $a_t\sim\mu(\cdot\given s_t)$ ($b_t\sim\nu(\cdot\given s_t)$). The state makes a transition   from $s_t$ to $s_{t+1}$ following the probability distribution  $P(\cdot\given s_t,a_t,b_t)$, given 
$(a_t,b_t)$. 
As in the MDP model, one can define the  \emph{state-value function} under a pair of joint policies $(\mu,\nu)$ as 
\$
V^{\mu,\nu}(s):=\EE_{a_t\sim \mu(\cdot\given s_t),b_t\sim\nu(\cdot\given s_t)}\bigg[\sum_{t\geq 0}\gamma^tr(s_t,a_t,b_t)\bigggiven s_0=s\bigg]. 
\$
Note that $V^{\mu,\nu}(s)\in[0,1/(1-\gamma)]$ for any $s\in\cS$ as $r\in[0,1]$, and the expectation is taken over the random trajectory produced by the joint policy $(\mu,\nu)$. 
Also, the \emph{state-action/Q-value function} 
under $(\mu,\nu)$ is  defined by
\small  
\$
Q^{\mu,\nu}(s,a,b)&:=\EE_{a_t\sim \mu(\cdot\given s_t),b_t\sim\nu(\cdot\given s_t)}\bigg[\sum_{t\geq 0}\gamma^tr(s_t,a_t,b_t)\bigggiven s_0=s,a_0=a,b_0=b\bigg]. 
\$
\normalsize
The solution concept  considered  is the (approximate) \emph{Nash equilibrium}, as defined below. 

\begin{definition}[($\epsilon$-)Nash Equilibrium]\label{def:NE}
	For a zero-sum MG $(\cS,\cA,\cB,P,r,\gamma)$, a \emph{Nash equilibrium policy} pair $(\mu^*,\nu^*)$ satisfies the following pair of inequalities\footnote{In game theory, this pair is commonly referred to as \emph{saddle-point inequalities}.}  for any $s\in\cS$, $\mu\in\Delta(\cA)^{|\cS|}$, and $\nu\in\Delta(\cB)^{|\cS|}$ 
	\#\label{equ:def_NE}
	V^{\mu,\nu^*}(s)\leq V^{\mu^*,\nu^*}(s)\leq V^{\mu^*,\nu}(s). 
	\#
	If \eqref{equ:def_NE} holds with some $\epsilon>0$ relaxation, i.e., for some policy $(\mu',\nu')$, such that 
	\#\label{equ:def_approx_NE}
	V^{\mu,\nu'}(s)-\epsilon\leq V^{\mu',\nu'}(s)\leq V^{\mu',\nu}(s)+\epsilon,
	\#
	then $(\mu',\nu')$ is an \emph{$\epsilon$-Nash equilibrium policy}  pair. 
\end{definition}

By \cite{shapley1953stochastic,patek1997stochastic},  there  exists a Nash equilibrium policy pair $(\mu^*,\nu^*)\in\Delta(\cA)^{|\cS|}\times \Delta(\cB)^{|\cS|}$ for two-player  discounted zero-sum MGs. The state-value $V^*:=V^{\mu^*,\nu^*}$ is referred to as the \emph{value of the game}. The corresponding Q-value function is denoted by $Q^*$. 
The objective of the two agents is to find the NE policy of the MG, namely, to solve the saddle-point problem
\#\label{equ:def_saddle_point} 
\max_{\mu}\min_{\nu}\quad V^{\mu,\nu}(s),
\#
for every $s\in\cS$,  where the order of $\max$ and $\min$ can be interchanged \citep{von1953theory,shapley1953stochastic}. 
For notational convenience, for any policy $(\mu,\nu)$, we define
\#\label{equ:def_best_response}
V^{\mu,*}=\min_{\nu}V^{\mu,\nu},\qquad\qquad\qquad V^{*,\nu}=\max_{\mu}V^{\mu,\nu},
\#
and denote the corresponding optimizers by $\nu(\mu)$ and $\mu(\nu)$, respectively. We refer to these values and optimizers as the \emph{best-response} values and policies, given $\mu$ and $\nu$, respectively.

\paragraph{Reward-Aware v.s. Reward-Agnostic.}
We first differentiate between two algorithmic mechanisms in the generative-model setting. 
In the reward-aware case, the reward function is either known to the agents  \citep{azar2013minimax,sidford2018near,yang2019optimality,sidford2019solving,li2020breaking}, or can at least be estimated from data. The reward knowledge  can thus be used to potentially guide the sampling process, making the algorithm  
 \emph{adaptive}. In the reward-agnostic case, reward knowledge is not used to guide sampling, and is possibly only revealed  after the sampling. 
This especially  fits in the scenario when there is more than one  reward function of interest, or when the reward function is engineered  iteratively, since it can now handle a class of reward functions that are not pre-specified, without re-sampling the data for each of them. {Existing works in single-agent settings have no such a separation \citep{azar2013minimax,sidford2018near, agarwal2019optimality,li2020breaking}, as the sample complexity of estimating the reward function is typically of lower order, and the reward function is thus assumed to be known. In particular, the model-based approaches in \cite{azar2013minimax, agarwal2019optimality,li2020breaking} are reward-agnostic, while the model-free approaches in \cite{sidford2018near,sidford2019solving} are reward-aware. Interestingly, in two-agent Markov games, whether the reward is known beforehand or not may lead to different sample complexity lower-bounds, as we will see  in \S\ref{sec:lower_bound}. We thus point out this separation here for clarity. }

\begin{remark}[Reward-Agnostic \& Reward-Free]
	The reward-agnostic case we advocate here is closely related to the recent novel algorithmic framework of \emph{reward-free} RL   \citep{jin2020reward}, where there are also two phases, \emph{exploration} and \emph{planning}, while trajectories are only collected in the exploration phase, without any reward knowledge, and various reward functions are fed to the algorithm for evaluation in the planning phase. One key difference is that the reward-free setting aims to be effective for \emph{all} reward function in the planning phase \emph{simultaneously}, while the reward-agnostic setting only focuses on handling the underlying single-reward (or a few, e.g., polynomial number of,   reward functions) that is not pre-specified.  
	Being less general than the pure reward-free setting, the sample complexity bounds are thus possibly better, as we will show   in \S\ref{sec:res}. 
\end{remark}




\paragraph{Model-Based Approach with Generative Model.}
As a standard setting, suppose that we have access to a \emph{generative model/sampler}, which can provide us with samples $s'\sim P(\cdot\given s,a,b)$ for any $(s,a,b)$. The model-based MARL algorithm simply calls the sampler $N$ times at each state-joint-action pair $(s,a,b)$, and constructs an empirical estimate of the transition model $P$, denoted by  $\hat{P}$, following
\$
\hat{P}(s'\given s,a,b)=\frac{\text{count}(s',s,a,b)}{N}. 
\$  
Here $\text{count}(s',s,a,b)$ is the number of times the state-action pair $(s,a,b)$ forces a transition  to state $s'$. 
Note that the reward function is not estimated, in either the reward-aware or reward-agnostic cases, as for the former, the sample complexity of estimating $r$ is only a lower order term, and $r$ is thus typically   assumed to be known \citep{azar2013minimax,sidford2018near,yang2019optimality,li2020breaking}; while for the latter, no reward information is even available in the sampling processes. This model-based approach via estimating $P$ inherently handles both cases. Such a model-estimation can be implemented by both agents independently. 

\paragraph{Planning Oracle.}

The reward function, together with the empirical transition model $\hat{P}$ and the components $(\cS,\cA,\cB,\gamma)$ in the true model $\cG$, constitutes an empirical game model $\hat{\cG}$. As in \cite{azar2013minimax,yang2019optimality,jin2020reward,li2020breaking} for single-agent RL, we assume  that an efficient planning oracle is available, which takes $\hat{\cG}$ as input, and outputs a policy pair $(\hat{\mu},\hat{\nu})$. This oracle decouples the statistical and computational aspects of the empirical  model $\hat{\cG}$. The output policy pair, referred to as being \emph{near-equilibrium}, is assumed to satisfy certain $\epsilon_{opt}$-order of equilibrium, in terms of value functions,    
and we  evaluate the performance of $(\hat{\mu},\hat{\nu})$ on the original MG $\cG$.  
Common planning algorithms include value iteration \citep{shapley1953stochastic} and  (generalized) policy iteration \citep{patek1997stochastic,perolat2015approx}, which are efficient in finding the ($\epsilon$-)NE of $\hat{\cG}$. In addition, it is not hard to have an oracle that is smooth in generating policies, i.e., the change of the approximate NE policies can be bounded by the changes of the NE value. See our Definition \ref{def:smooth_oracle} later for a formal statement. \kzedit{Finally, we  note that our definition of {\it model-based}  approach in the generative-model-setting 
follows from that in   \citep{azar2013minimax,yang2019optimality,li2020breaking}, which separates these two stages explicitly. In general, model-based RL approaches  do not have to separate the two stages, see e.g., Bayesian RL \citep{ghavamzadeh2015bayesian}, and model-based RL in online exploration settings \citep{azar2017minimax,bai2020provable}.}

\section{Main Results}\label{sec:res}

We now introduce the main results of this paper. For notational convenience, we use $\hat{V}^{\mu,\nu}$, $\hat{V}^{\mu,*}$, $\hat{V}^{*,\nu}$, and $\hat{V}^{*}$ to denote the value under $(\mu,\nu)$, the best-response value under $\mu$ and $\nu$, and the NE value, under the empirical game model $\hat{\cG}$, respectively. A similar convention is also used for Q-functions.

\subsection{Lower Bounds}\label{sec:lower_bound}

We first establish lower bounds on both approximating the NE value function and learning the $\epsilon$-NE policy pair, in both reward-aware and reward-agnostic cases.

\begin{lemma}[Lower Bound for Reward-Aware Case]\label{thm:lb_informal}
	Let $\cG$ be an unknown zero-sum MG, and the agents learn in a reward-aware  case, i.e., the reward knowledge is available during sampling. Then,  there exist $\epsilon_0,\delta_0>0$,  such that for all $\epsilon\in(0,\epsilon_0)$, $\delta\in(0, \delta_0)$, the sample complexity of learning an $\epsilon$-NE policy pair, or an $\epsilon$-approximate NE value, i.e., finding a $\hat{Q}$ such that $\|\hat{Q}-Q^*\|_\infty\leq \epsilon$ for $\cG$,  with a generative model with probability at least $1-\delta$, is $\tilde\Omega\big({|\cS|(|\cA|+|\cB|)}{(1-\gamma)^{-3}\epsilon^{-2}}\log({1}/{\delta})\big).$  
\end{lemma}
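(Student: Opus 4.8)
The plan is to establish the bound by reduction to the known minimax lower bound for single-agent MDPs with a generative model, namely the $\tilde\Omega\big(|\cS||\cA|(1-\gamma)^{-3}\epsilon^{-2}\log(1/\delta)\big)$ bound of \cite{azar2013minimax}, which already holds in the reward-known regime. The key structural observation is that a two-player zero-sum MG degenerates into a single-agent MDP whenever one player's action is made irrelevant to both the transition kernel and the reward. I would therefore embed the hard MDP instances of \cite{azar2013minimax} into two families of ``dummy-player'' MGs and argue that computing the NE of these games is at least as hard as solving the underlying MDPs, so that the MG sample complexity inherits the MDP lower bound along each action coordinate separately.

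Concretely, for the $|\cA|$-dependence I would take a family of MGs in which $P(\cdot\given s,a,b)$ and $r(s,a,b)$ do not depend on $b\in\cB$; each such MG is precisely a hard MDP instance over $(\cS,\cA)$ replicated across the irrelevant action $\cB$. Two facts make the reduction exact. First, the MG generative model is sample-count and information-theoretically equivalent to that of the embedded MDP: any query $(s,a,b)$ returns an i.i.d.\ draw from $P(\cdot\given s,a)$, so querying distinct $b$ at a fixed $(s,a)$ yields nothing beyond additional MDP samples at $(s,a)$, and the total sample count is preserved. Second, since $V^{\mu,\nu}$ is independent of $\nu$ and equals the MDP value $V^{\mu}_{\mathrm{MDP}}$, the NE value of the MG coincides with the optimal MDP value ($Q^*(s,a,b)=Q^*_{\mathrm{MDP}}(s,a)$ for all $b$), and the $\epsilon$-NE condition \eqref{equ:def_approx_NE} for agent $1$ collapses to $V^{\mu'}_{\mathrm{MDP}}\geq \max_\mu V^{\mu}_{\mathrm{MDP}}-\epsilon$, i.e.\ $\mu'$ is $\epsilon$-optimal in the MDP. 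Hence any algorithm that outputs an $\epsilon$-approximate NE value or an $\epsilon$-NE policy on every MG must in particular solve the embedded MDP, inheriting the $\tilde\Omega\big(|\cS||\cA|(1-\gamma)^{-3}\epsilon^{-2}\log(1/\delta)\big)$ bound; the constants $\epsilon_0,\delta_0$ are inherited from the admissible ranges in \cite{azar2013minimax}.

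A symmetric construction, making $a\in\cA$ irrelevant and letting agent $2$ face a minimization MDP over $(\cS,\cB)$, yields the matching $\tilde\Omega\big(|\cS||\cB|(1-\gamma)^{-3}\epsilon^{-2}\log(1/\delta)\big)$ bound. Since a correct algorithm must succeed in the worst case over all MG instances, its sample complexity is at least the maximum of the two bounds, and using $\max(|\cA|,|\cB|)\geq (|\cA|+|\cB|)/2$ gives the claimed $\tilde\Omega\big(|\cS|(|\cA|+|\cB|)(1-\gamma)^{-3}\epsilon^{-2}\log(1/\delta)\big)$. Alternatively, one may split $\cS$ into two halves and place an $|\cA|$-action dummy-$b$ MDP on one half and a $|\cB|$-action dummy-$a$ MDP on the other; since the $\ell_\infty$ value guarantee and the per-state $\epsilon$-NE condition force both halves to be solved, and samples are not shared across the disjoint state-action sets, this forces the additive $(|\cA|+|\cB|)$ dependence within a single instance.

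The embedding itself is routine; the two points that need care are (i) confirming that reward-awareness confers no advantage, which is automatic because \cite{azar2013minimax} already proves the MDP bound with the reward known, so the adaptivity afforded by reward knowledge cannot break the embedded hardness; and (ii) the sample-count bookkeeping showing that the $|\cB|$-fold (resp.\ $|\cA|$-fold) redundancy of queries in the degenerate game gives no statistical gain over the MDP generative model. I expect (ii) to be the only genuinely delicate step, since one must rule out that an adaptive scheme exploiting the product action structure could beat the per-$(s,a)$ MDP lower bound; the resolution is precisely that all such queries are draws from the same kernel $P(\cdot\given s,a)$, so any MG algorithm using $n$ samples is simulated by an MDP algorithm using $n$ samples.
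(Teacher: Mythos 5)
Your proposal is correct and takes essentially the same route as the paper's proof: both embed the hard MDP instances of \cite{azar2013minimax} (and \cite{feng2019does}) into a zero-sum MG with a dummy player whose actions affect neither the transitions nor the reward, observe that computing an $\epsilon$-NE value/policy then collapses to $\epsilon$-optimal MDP planning under an information-theoretically equivalent generative model, and recover the additive dependence via $\max\{|\cA|,|\cB|\}\geq(|\cA|+|\cB|)/2$. Your extra bookkeeping on the sampling simulation and the alternative split-state construction elaborate on, but do not depart from, the paper's (much terser) argument.
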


The proof of Lemma  \ref{thm:lb_informal}, via a straightforward adaptation from the lower bounds for MDPs \citep{azar2013minimax,feng2019does}, is provided in \S\ref{sec:append_reward_aware_proof}.  \kzedit{In particular, one can design a two-player zero-sum Markov game such that one of the players is dummy -- she has no control on the reward nor the transition dynamics. Then, the existing lower bound in \cite{azar2013minimax,feng2019does} for MDPs leads to the desired result.} Note that as in \cite{azar2013minimax,sidford2018near,sidford2019solving,yang2019optimality,li2020breaking}, the reward function is \emph{known} in this case. 
As we will show momentarily, our  sample complexity is tight in $1-\gamma$ and $|\cS|$, while has a gap in $|\cA|,|\cB|$ dependence ($\tilde\cO(|\cA||\cB|)$ versus $\tilde\Omega(|\cA|+|\cB|)$). 
{In \S\ref{sec:append_reward_aware_proof}, we discuss that the $\tilde\Omega(|\cA|+|\cB|)$ lower bound may not be improved in this reward-aware case, and might be attainable by \emph{model-free} algorithms instead (as $\tilde\cO(|\cA||\cB|)$ is inherent in model-based approaches due to estimating $P$). Interestingly, in the concurrent work \cite{bai2020near}, under a different MARL setting, such an $\tilde\Omega(|\cA|+|\cB|)$ complexity is indeed shown to be attainable by a model-free algorithm with \emph{online} updates.}

On the other hand, note that our model-based approach can inherently also handle the more challenging reward-agnostic case. Indeed, estimating the transition model $P$ seems a bit of an overkill for the reward-aware case, in terms of sample complexity. A natural two-part question then becomes: what is the sample complexity lower bound in this more challenging  reward-agnostic case, and can the model-based approach attain it? We formally answer the first part of the question in the following theorem, whose proof is deferred to \S\ref{sec:append_reward_agnostic_proof}, and answer the second part in \S\ref{sec:res_approx_NE_value} and \S\ref{sec:res_approx_NE_policy}.

 \begin{theorem}[Lower Bound for Reward-Agnostic Case]\label{thm:lb_reward_agnostic}
 	Let $\cG$ be an unknown zero-sum MG, and the agents learn in a reward-agnostic case, i.e., they first call  the generative model for sampling, without reward knowledge, and then are fed with the reward function $r$ in $\cG$,  for finding either an $\epsilon$-NE policy pair, or an $\epsilon$-approximate NE value for $\cG$.  
 	Then,  there exist $\epsilon_0,\delta_0>0$,  such that for all $\epsilon\in(0,\epsilon_0)$, $\delta\in(0, \delta_0)$, the sample complexity of achieving either goal  with probability at least $1-\delta$, is 
 	\$
 	\tilde\Omega\bigg(\frac{|\cS||\cA||\cB|}{(1-\gamma)^{3}\epsilon^{2}}\log\Big(\frac{1}{\delta}\Big)\bigg).
 	\$ 
 \end{theorem}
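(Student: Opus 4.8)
The plan is to reduce the reward-agnostic learning problem to a large collection of independent hidden-bit hypothesis tests, one attached to each state-joint-action triple $(s,a,b)$, and then to exploit the defining feature of the reward-agnostic protocol: the learner must commit to its sample allocation \emph{before} the reward is revealed, hence before it knows which triple will actually govern the value of the game. This is precisely what breaks the additive $\tilde\Omega(|\cA|+|\cB|)$ structure of the reward-aware Lemma~\ref{thm:lb_informal} and upgrades it to the multiplicative $|\cA||\cB|$: in the reward-aware case the known reward lets the algorithm concentrate its queries on the $O(|\cA|+|\cB|)$ transitions that matter, whereas in the reward-agnostic case an adversary who fixes the reward in hindsight can designate \emph{any} of the $|\cS||\cA||\cB|$ triples as the critical one, forcing the learner to hedge by covering all of them uniformly.

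\textbf{Construction.} Building on the hard MDP family of \cite{azar2013minimax}, I would plant, independently at each triple $(s,a,b)$, a hidden bit $\theta_{s,a,b}\in\{0,1\}$ encoded in a transition bias: from $(s,a,b)$ the chain moves to a ``good'' absorbing state $g$ with probability $p_{s,a,b}$ and to a ``bad'' absorbing state $\bar g$ with probability $1-p_{s,a,b}$, where $p_{s,a,b}=p^{-}$ if $\theta_{s,a,b}=0$ and $p_{s,a,b}=p^{+}$ if $\theta_{s,a,b}=1$, with $p^{+}-p^{-}\asymp \epsilon(1-\gamma)$. Crucially these transitions do not reference any reward, as required. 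The standard two-point (Le~Cam) analysis then shows that distinguishing $\theta_{s,a,b}=0$ from $\theta_{s,a,b}=1$ with constant confidence requires $\Omega\big((1-\gamma)^{-3}\epsilon^{-2}\big)$ samples at that single triple, owing to the variance of the reward-to-go accumulated over the effective horizon $(1-\gamma)^{-1}$.

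\textbf{Reward selection and value sensitivity.} After the sampling budget is spent, I would reveal a reward indexed by a target triple $(s,a^{*},b^{*})$, equivalently placing a distribution over targets independent of the samples. The reward assigns value $1$ to $g$, value $0$ to $\bar g$, and payoffs to the off-target joint actions at $s$ chosen so that $(a^{*},b^{*})$ becomes the unique saddle point of the one-shot matrix game at $s$ regardless of the hidden bit---e.g.\ by making $a^{*}$ dominant for the maximizer and $b^{*}$ dominant for the minimizer through suitably dominating and dominated entries. Under this reward the value at $s$ is an affine, strictly increasing function of $p_{s,a^{*},b^{*}}$ with sensitivity $\asymp(1-\gamma)^{-1}$, so that an $\epsilon$-accurate NE value, and likewise an $\epsilon$-NE policy (which must commit to the better action), determines the hidden bit $\theta_{s,a^{*},b^{*}}$.

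\textbf{Averaging, amplification, and the main obstacle.} The final step is a pigeonhole over triples: if the total sample count is below $c\,|\cS||\cA||\cB|(1-\gamma)^{-3}\epsilon^{-2}$, a constant fraction of triples each receive $o\big((1-\gamma)^{-3}\epsilon^{-2}\big)$ samples, and for any such under-sampled triple the posterior on its bit stays bounded away from $0$ and $1$, so by the two-point argument any estimator of $\theta_{s,a^{*},b^{*}}$, and hence of the value, errs with constant probability whenever the target lands on it; averaging over the sample-independent choice of target yields a constant failure probability, and the usual independent-repetition argument supplies the $\log(1/\delta)$ factor, giving $\tilde\Omega\big(|\cS||\cA||\cB|(1-\gamma)^{-3}\epsilon^{-2}\log(1/\delta)\big)$. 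I expect the delicate part to be the reward-selection step: one must verify that a single post-hoc reward can isolate exactly one joint-action transition as the sole determinant of the value \emph{uniformly over the hidden bits}, while keeping all off-target entries genuinely dominated so the saddle point does not drift, and that the remaining $|\cA||\cB|-1$ transitions leak no information that could help identify the target bit. Getting the constants in $p^{+}-p^{-}$ and in the dominating gaps to align---so that $\epsilon$-accuracy is equivalent to recovering the bit---is what makes the dependence come out multiplicative rather than additive, and is where I would spend the most care.
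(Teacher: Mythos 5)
Your high-level plan—per-triple hypothesis tests that a reward-agnostic learner must hedge over, combined with a change-of-measure/pigeonhole step—is the same strategy the paper uses in \S\ref{sec:append_reward_agnostic_proof}, but two concrete gaps prevent your sketch from proving the stated theorem.

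First, your gadget cannot produce the $(1-\gamma)^{-3}$ dependence. In your construction a triple $(s,a,b)$ jumps in \emph{one step} to absorbing states $g$ (value $\Theta((1-\gamma)^{-1})$) and $\bar g$ (value $0$), so $Q(s,a,b)=\gamma p_{s,a,b}\cdot\Theta((1-\gamma)^{-1})$: the sensitivity to $p$ is only $\Theta((1-\gamma)^{-1})$, hence an $\epsilon$ value gap forces $p^{+}-p^{-}\asymp\epsilon(1-\gamma)$, and distinguishing two Bernoullis at that separation needs only $\cO\big(p(1-p)/[\epsilon(1-\gamma)]^{2}\big)\leq \cO\big((1-\gamma)^{-2}\epsilon^{-2}\big)$ samples. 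Indeed, the plug-in estimator with that many samples per triple already solves your entire family, so \emph{no} argument can extract a $(1-\gamma)^{-3}\epsilon^{-2}$ lower bound from it; your appeal to "variance of the reward-to-go accumulated over the effective horizon" is vacuous here because the chain is absorbed after one step and nothing accumulates. The paper avoids this with a self-loop gadget: the intermediate state $y_{1,x,a,b}$ returns to itself with probability $p\approx\gamma$, giving value $\gamma\iota/(1-\gamma p)$ with sensitivity $\Theta((1-\gamma)^{-2})$ in $p$ while the Bernoulli variance $p(1-p)$ is only $\Theta(1-\gamma)$; together these yield the per-triple requirement $\Theta\big((1-\gamma)\cdot[(1-\gamma)^{2}\epsilon]^{-2}\big)=\Theta\big((1-\gamma)^{-3}\epsilon^{-2}\big)$.

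Second, your reward-selection step proves (at best) the bound for the \emph{value} objective, not for the $\epsilon$-NE \emph{policy} objective that the theorem also claims. You deliberately design the post-hoc reward so that $(a^{*},b^{*})$ is a dominant-strategy saddle point \emph{regardless of the hidden bit}; then the NE policy is identical under both hypotheses, and an algorithm that outputs $(a^{*},b^{*})$ with zero samples is already an exact NE policy on your whole family—so the family witnesses no policy lower bound whatsoever. The paper does the opposite: the reward $r_{k,l_1,l_2}$ is tuned so that under the alternative transition model the target pair $(a_{l_1},b_{l_2})$ is the unique pure NE at $x_k$ (with Q-gaps of order $\epsilon$), while under the null transition model any $\epsilon$-NE must place probability at most $\beta<81/100$ on that same pair (the event $B_{k,l_1,l_2}$ in \eqref{equ:B_event_def}). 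Correctness then forces the algorithm's output distribution to differ across the two hypotheses, and the likelihood-ratio argument (Lemma \ref{lemma:newversion}) converts this behavioral difference into the per-triple sample requirement; the step that an $(\epsilon,\delta)$-correct algorithm must play the target pair with probability at least $81/100$ under the alternative itself requires the $\epsilon$-NE interchangeability property (Lemma \ref{lemma:eps_NE_inter}), which your sketch would also need once the saddle point is allowed to move with the bit. As written, your proposal establishes neither the correct horizon dependence nor the policy half of the theorem.
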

 
 Compared to Lemma \ref{thm:lb_informal}, the dependence on $|\cA|, |\cB|$ is increased   from $\tilde\Omega(|\cA|+|\cB|)$ to $\tilde\Omega(|\cA||\cB|)$. Several remarks are now in order. First, this suggests that without guidance from the reward,  {the reward-agnostic case can be more challenging to tackle.} 
 \kzedit{The intuition is that,  when the reward is only given   in hindsight, which might be chosen adversarially, costs the algorithm to at least sample at all $|\cA||\cB|$  elements in the Q-value $Q(s,\cdot,\cdot)$ at each state $s$ often enough.}   Second, when reduced to the single-agent setting (e.g., with $|\cB|=1$), such a separation disappears,  showing its unique emergence in the multi-agent setting, and explaining  why these two cases were not differentiated explicitly in the single-agent literature. Third,  this lower bound is also related to the reward-free setting \citep{jin2020reward} with a single unknown reward (not infinitely many as in \cite{jin2020reward}). 
 
 \kzedit{
 The basic intuition regarding the separation between the lower bounds in reward-aware and reward-agnostic cases, when compared to the single-agent setting (where there is no such a separation), is the {\it insensitivity} of Nash equilibrium (NE) to the changes in payoff matrices in two-player zero-sum  games \citep{jansen1981regularity}. In particular,  NE in general depends on the {\it joint} behavior and preferences of both agents.  Specifically, to construct the lower bound (even in the single-agent case, see e.g., \cite{azar2013minimax,feng2019does}), we needed to carefully  {\it perturb} the Q-value function at each state-action pair of some {\it null hypothesis} instance, so that the solution (which is the  {\it maximum} in the single-agent case, and {\it Nash equilibrium} in the multi-agent case) is also changed in the perturbed alternative hypothesis cases. Hence, for each alternative hypothesis case, we need to change the NE by only changing $\mathcal{O}(1)$ elements in the payoff matrix, i.e., the Q-value table. In the reward-aware setting,  since the reward is known (or can be estimated accurately with negligible sample complexity), we can {\it only perturb}  the {\it transition matrix} to perturb the Q-value table, which share the same size (i.e., the degree-of-freedom). Due to the insensitivity, we can hardly construct $\Theta(|\mathcal{A}||\mathcal{B}|)$ different hard cases (as needed to construct a $\Omega(|\mathcal{A}||\mathcal{B}|)$ lower bound) while by only perturbing $\mathcal{O}(1)$ elements in the transition matrix in each case. Note that such a perturbation can be effective in the single-agent MDP setting, as by only perturbing one element in the transition matrix, the maximum of the Q-value can be changed, see e.g., \cite{azar2013minimax,feng2019does}. 
		
		In contrast, in the reward-agnostic setting, the reward information is given {\it after} the sampling phase and the estimation of the model. 
		This way, more freedom is allowed to  construct $\Theta(|\mathcal{A}||\mathcal{B}|)$ different hard cases, by {\it adversarially choosing} the reward function afterwards. In particular, the Q-value will be affected by both the transition matrix and the reward, and with polynomial  number of reward functions, we were able to construct Q-value tables in $\Theta(|\mathcal{A}||\mathcal{B}|)$ different hard cases. Note that taking a union bound over the polynomial number of reward   functions do not affect the total sample complexity, as it is still dominated by the sample complexity of estimating the transition matrix. In other words, the freedom of constructing and perturbing the reward functions adversarially  {\it afterwards} forces the algorithm to estimate {\it all} the elements in the transition matrix well, in order to handle the reward-agnostic setting. This has been inherently done by  our model-based approach. We defer more details about the lower bounds comparison in Appendix \ref{sec:append_proof_lb}. 
		}

\subsection{Near-Optimality in Finding $\epsilon$-Approximate NE Value}\label{sec:res_approx_NE_value}

We now establish the near-minimax optimal sample complexities of model-based MARL. 
Note that theses results apply to both reward-aware and reward-agnostic cases, as the implementation of our model-based approach does not rely on the reward function. 
We start by showing the sample complexity to achieve an $\epsilon$-approximate NE value.

\begin{theorem}[Finding $\epsilon$-Approximate NE Value]\label{thm:main_res}
	Suppose that the policy pair $(\hat{\mu},\hat{\nu})$ is obtained from the \textbf{Planning Oracle} using the empirical model $\hat{\cG}$, which satisfies  
	\$
	\|\hat{V}^{\hat{\mu},\hat{\nu}}-\hat{V}^*\|_\infty\leq \epsilon_{opt}. 
	\$
	Then, for any $\delta\in(0,1]$ and {$\epsilon\in(0,1/(1-\gamma)^{1/2}]$}, if 
	\$
	N\geq \frac{c\gamma \log\big[c|\cS||\cA||\cB|(1-\gamma)^{-2}\delta^{-1}\big]}{(1-\gamma)^3\epsilon^2}
	\$
	for some absolute constant $c$, it holds that with probability at least $1-\delta$, 
	\$
	\big\|Q^{\hat{\mu},\hat{\nu}}-Q^*\big\|_\infty\leq \frac{2\epsilon}{3}+\frac{5\gamma\epsilon_{opt}}{1-\gamma},\qquad \qquad \big\|\hat{Q}^{\hat{\mu},\hat{\nu}}-Q^*\big\|_\infty\leq \epsilon+\frac{9\gamma\epsilon_{opt}}{1-\gamma}. 
	\$
\end{theorem}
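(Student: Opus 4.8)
The plan is to prove the two bounds by combining a triangle-inequality decomposition with concentration of the empirical transition operator, and to obtain the sharp $(1-\gamma)^{-3}$ rate (rather than a crude $(1-\gamma)^{-4}$) through a variance-based, total-variance argument. Throughout, write $P^{\mu,\nu}$ and $\hat P^{\mu,\nu}$ for the state-to-state kernels induced by a fixed policy pair $(\mu,\nu)$ under $P$ and $\hat P$, so that for any \emph{fixed} $(\mu,\nu)$ the evaluation identity
\[
\hat V^{\mu,\nu}-V^{\mu,\nu}=\gamma\,(I-\gamma\hat P^{\mu,\nu})^{-1}(\hat P^{\mu,\nu}-P^{\mu,\nu})\,V^{\mu,\nu}
\]
holds, together with its analogue at the level of $Q$-functions. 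I would first record that the minimax (Shapley) Bellman operators of $\cG$ and $\hat\cG$ are $\gamma$-contractions in $\|\cdot\|_\infty$, since the matrix-game value operator $\mathrm{val}(\cdot)$ is monotone and non-expansive; hence $Q^*$ and $\hat Q^*$ are their unique fixed points, and the best-response value against a fixed first-player policy reduces to a one-player MDP.

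For the NE-value term $\|\hat Q^*-Q^*\|_\infty$, I would exploit the best-response structure to sandwich the error by a fixed-policy evaluation gap. Concretely, letting $\hat\mu^*$ be the first player's NE policy in $\hat\cG$ and $\nu=\nu(\hat\mu^*)$ its \emph{true} best response, one has $\hat V^*-V^*\le \hat V^{\hat\mu^*,\nu}-V^{\hat\mu^*,\nu}$, with a symmetric inequality in the reverse direction; each side is then a fixed-policy difference to which the identity above applies. Feeding this into a Bernstein bound for $(\hat P^{\mu,\nu}-P^{\mu,\nu})V^{\mu,\nu}$ and invoking the total-variance lemma, namely that $\|(I-\gamma \hat P^{\mu,\nu})^{-1}\sqrt{\mathrm{Var}_{P^{\mu,\nu}}(V^{\mu,\nu})}\|_\infty=O((1-\gamma)^{-3/2})$, yields $\|\hat Q^*-Q^*\|_\infty\le C(1-\gamma)^{-3/2}\sqrt{\log(\cdot)/N}$, which is at most $\epsilon/3$ for the stated $N$. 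The wider admissible range $\epsilon\in(0,(1-\gamma)^{-1/2}]$ is dictated by ensuring that the lower-order Bernstein term and the correction from replacing $\mathrm{Var}_{\hat P}(\hat V)$ by $\mathrm{Var}_{P}(V)$ remain dominated by this leading variance term.

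For the policy term $\|Q^{\hat\mu,\hat\nu}-Q^*\|_\infty$, I would split
\[
\|Q^{\hat\mu,\hat\nu}-Q^*\|_\infty\le \|Q^{\hat\mu,\hat\nu}-\hat Q^{\hat\mu,\hat\nu}\|_\infty+\|\hat Q^{\hat\mu,\hat\nu}-\hat Q^*\|_\infty+\|\hat Q^*-Q^*\|_\infty.
\]
The middle term is at most $\gamma\epsilon_{opt}$, since $Q^{\hat\mu,\hat\nu}$ and $\hat Q^*$ carry the same reward and $\|\hat V^{\hat\mu,\hat\nu}-\hat V^*\|_\infty\le\epsilon_{opt}$ by the oracle assumption; the last term is $\le\epsilon/3$ by the previous paragraph. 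The first term is again a fixed-policy evaluation gap, now for $(\hat\mu,\hat\nu)$, controlled by the same Bernstein-plus-total-variance estimate and contributing $O(\epsilon)+O(\gamma\epsilon_{opt}/(1-\gamma))$, where the $\epsilon_{opt}/(1-\gamma)$ factor arises because the oracle error propagates through the $(I-\gamma\hat P^{\hat\mu,\hat\nu})^{-1}$ operator. Collecting constants gives the first inequality; the second then follows by replacing $Q^{\hat\mu,\hat\nu}$ with $\hat Q^{\hat\mu,\hat\nu}$ and bounding $\|\hat Q^{\hat\mu,\hat\nu}-Q^{\hat\mu,\hat\nu}\|_\infty$ by the same fixed-policy error.

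The main obstacle is that both $(\hat\mu,\hat\nu)$ and the best response $\nu(\hat\mu^*)$ depend on $\hat P$, so $V^{\hat\mu,\hat\nu}$ and $V^{\hat\mu^*,\nu}$ are statistically correlated with the very kernel $\hat P(\cdot\given s,\cdot,\cdot)$ against which I must concentrate $(\hat P-P)$; a naive union bound over all deterministic policy pairs would inject a spurious $\sqrt{|\cS|}$ factor and destroy optimality in $|\cS|$. I would resolve this exactly as in the single-agent model-based analysis \citep{yang2019optimality}, through an \emph{absorbing Markov game} construction: for each state $s$ and each value level $u$ on a fine net of spacing $O((1-\gamma)\epsilon)$, form $\hat\cG_{s,u}$ in which $s$ is made absorbing with per-step reward $(1-\gamma)u$, so that the NE value and policies of $\hat\cG_{s,u}$ at states $s'\ne s$ are independent of $\hat P(\cdot\given s,\cdot,\cdot)$. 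Concentration then holds for each frozen pair $(s,u)$, and a union bound over states and over the net (costing only logarithmic factors) recovers the bound for the true, data-dependent value at the level $u=\hat V^*(s)$, up to a negligible net-rounding error. Carrying this decoupling through the nonlinear $\mathrm{val}(\cdot)$ operator and the two-player best-response reductions, while keeping the variance bookkeeping tight enough to preserve the $(1-\gamma)^{-3}$ scaling, is the delicate part of the argument.
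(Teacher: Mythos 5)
Your high-level architecture coincides with the paper's: a three-term decomposition, Bernstein concentration sharpened by the Bellman-variance bound (Lemma \ref{lemma:bellman_variance}), a self-bounding step, and an absorbing Markov game to decouple $\hat{P}$ from data-dependent quantities. The gap is in \emph{which} objects you ask the absorbing-game construction to decouple, and for a general (non-smooth) \textbf{Planning Oracle} this gap is fatal. Your sandwich for $\|\hat{Q}^*-Q^*\|_\infty$ freezes the \emph{empirical} NE policy $\hat{\mu}^*$ together with its \emph{true} best response $\nu(\hat{\mu}^*)$, so your concentration target is $(\hat{P}-P)V^{\hat{\mu}^*,\nu(\hat{\mu}^*)}$, the true-model value of a policy pair that is itself a function of $\hat{P}$; the evaluation identity you state (with $V^{\mu,\nu}$ on the right-hand side) forces the same for $(\hat{P}-P)V^{\hat{\mu},\hat{\nu}}$ in your first term. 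The absorbing-game device transfers Bernstein bounds from the frozen pairs $(s,u)$ to a data-dependent object only up to a rounding term over the net $U_s$. For the paper's targets this term is $\min_{u\in U_s}|\hat{V}^*(s)-u|$ (and analogues), which is small because empirical \emph{values} of the absorbing games are $1$-Lipschitz in $u$ (Lemmas \ref{lemma:helper_u_choose_1} and \ref{lemma:helper_u_choose_2}). For your targets the rounding term is $\min_{u\in U_s}\|V^{\hat{\mu}^*,*}-V^{\hat{\mu}^*_{s,u},*}\|_\infty$, where $\hat{\mu}^*_{s,u}$ denotes the NE policy of $\hat{\cG}_{s,u}$; controlling it requires the NE \emph{policy} map to vary continuously with the perturbation $u$. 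It does not: NE policies of matrix games can jump under arbitrarily small payoff perturbations (non-uniqueness, arbitrary selection by the oracle), so this term can be of order $1/(1-\gamma)$ no matter how fine the net, and your claim of a ``negligible net-rounding error'' fails. This is precisely the hole that the smoothness assumption (Definition \ref{def:smooth_oracle}) is introduced to plug in Lemma \ref{lemma:V_hat_mu_nu_error} and Theorem \ref{thm:main_res_2}; Theorem \ref{thm:main_res} makes no such assumption, so your route cannot close there.

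The repair is to orient every sandwich so that the \emph{fixed, true} NE policies $\mu^*,\nu^*$ sit on the outside, which is the paper's key structural choice: it uses $\hat{Q}^{\mu^*,*}-Q^*\leq \hat{Q}^*-Q^*\leq \hat{Q}^{*,\nu^*}-Q^*$ (Lemma \ref{lemma:comp_wise_bnd}) and, for the first term, splits $\hat{V}^{\hat{\mu},\hat{\nu}}=\hat{V}^*+(\hat{V}^{\hat{\mu},\hat{\nu}}-\hat{V}^*)$ with the second piece bounded by $\epsilon_{opt}$ through the resolvent via Lemma \ref{lemma:inf_norm_error}. Then every concentration target is an \emph{empirical} value function with fixed outer policies ($\hat{V}^*$, $\hat{V}^{\mu^*,*}$, $\hat{V}^{*,\nu^*}$, $\hat{V}^{\mu^*,\nu^*}$), for which the value-Lipschitz rounding works with any oracle, while the data-dependent policies (the oracle output $(\hat{\mu},\hat{\nu})$ and the empirical best responses) appear only inside the nonnegative operators $(I-\gamma P^{\cdot,\cdot})^{-1}$, where they are harmless (Lemma \ref{lemma:proof_main_res_Q_deviation}). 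With that re-orientation, your Bernstein, total-variance, and self-bounding machinery goes through and yields the stated bounds.
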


Theorem \ref{thm:main_res} shows that if the planning error $\epsilon_{opt}$ is made small, e.g., with the order of $\cO((1-\gamma)\epsilon)$, then the Nash equilibrium Q-value can be estimated with a sample complexity of  $\tilde\cO(|\cS||\cA||\cB|(1-\gamma)^{-3}\epsilon^{-2})$, as $N$ queries are made for each $(s,a,b)$ pair.  
This planning error can be achieved by performing any efficient black-box optimization technique over the empirical model $\hat{\cG}$. Examples of such  oracles include value iteration \citep{shapley1953stochastic} and  (generalized) policy iteration \citep{patek1997stochastic,perolat2015approx}. 
Moreover, note that, in contrast to the single-agent  setting, where only  a  $\max$ operator is used, a $\min\max$ (or $\max\min$) operator is used in these algorithms, which involves solving  a \emph{matrix game} at each state. This can be   solved as a linear program \citep{osborne1994course}, with at  best  polynomial runtime complexity \citep{grotschel1981ellipsoid,karmarkar1984new}. This in total leads to an efficient polynomial runtime complexity  algorithm.   

As per Lemma \ref{thm:lb_informal}, our $\tilde\cO(|\cS||\cA||\cB|(1-\gamma)^{-3}\epsilon^{-2})$ complexity is near-minimax optimal for the reward-aware case, in that it is tight in the dependence of $1-\gamma$ and $|\cS|$, and sublinear in the model-size (which is $|\cS|^2|\cA||\cB|$). However, there is a gap on the $|\cA|,~|\cB|$ dependence ($\tilde\cO(|\cA||\cB|)$ versus $\tilde\cO(|\cA|+|\cB|)$). Unfortunately, without further assumption on the MG, e.g., being turn-based, the model-based algorithm can hardly avoid the $\tilde\cO(|\cS||\cA||\cB|)$ dependence, as it is required to estimate each $\hat P(\cdot\given s,a,b)$ accurately to perform the planning. It is only minimax-optimal if the action-space size of one agent dominates the other's (e.g., $|\cA|\gg |\cB|$).

In the reward-agnostic case, as per Theorem \ref{thm:lb_reward_agnostic},  $\tilde\cO(|\cS||\cA||\cB|(1-\gamma)^{-3}\epsilon^{-2})$ is indeed minimax-optimal, and is tight in all $|\cS|, |\cA|, |\cB|$ and $1-\gamma$ dependence. 
More significantly, in this case, more than one reward functions can be handled simultaneously, as long as the transition model is estimated accurately enough. Specifically, with $M$ reward functions, by letting $\delta=\delta/M$ in Theorem \ref{thm:main_res} and using union bounds, the sample complexity of finding $\epsilon$-approximate NE value corresponding to all $M$ reward functions becomes $\tilde\cO(\log(M)|\cS||\cA||\cB|(1-\gamma)^{-3}\epsilon^{-2})$, which, with $M$ being polynomial in $|\cS|,~|\cA|,~|\cB|$, is of the same order as  that in Theorem \ref{thm:main_res}. 

However, this (near-)optimal result does not necessarily lead to  near-optimal sample complexity for obtaining the  $\epsilon$-NE \emph{policies}. We first use a direct translation 
to obtain such  an {$\epsilon$-NE} policy pair based on Theorem \ref{thm:main_res}, for \emph{any} \textbf{Planning Oracle}.

\begin{corollary}[Finding $\epsilon$-NE Policy]\label{coro:main_res_coro}
	Let $(\hat{\mu},\hat{\nu})$ and $N$ satisfy the conditions in Theorem \ref{thm:main_res}. Let 
	\$
	\tilde \epsilon:=\frac{2}{1-\gamma}\cdot\Big(\epsilon+\frac{9\gamma\epsilon_{opt}}{1-\gamma}\Big), 
	\$ 
	and $(\tilde{\mu},\tilde{\nu})$ be the one-step Nash equilibrium of $\hat{Q}^{\hat{\mu},\hat{\nu}}$,  namely, for any $s\in\cS$
	\$
	\big(\tilde{\mu}(\cdot\given s),\tilde{\nu}(\cdot\given s)\big)\in\argmax_{u\in\Delta(\cA)}\min_{\vartheta\in\Delta(\cB)}\EE_{a\sim u,b\sim \vartheta}\big[\hat{Q}^{\hat{\mu},\hat{\nu}}(s,a,b)\big]. 
	\$
	Then, with  probability at least $1-\delta$, 
	\#\label{equ:coro_eps_NE_1}
	V^{*,\tilde{\nu}}-2\tilde \epsilon\leq V^{\tilde{\mu},\tilde{\nu}}\leq V^{\tilde{\mu},*}+2\tilde \epsilon,  
	\#
	namely, $(\tilde{\mu},\tilde{\nu})$ constitutes a $2\tilde \epsilon$-Nash equilibrium policy pair. 
\end{corollary}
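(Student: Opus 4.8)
The plan is to first discard everything about the empirical model and reduce the claim to a purely game-theoretic statement. The only facts the corollary needs from Theorem~\ref{thm:main_res} are the high-probability event on which $\|\hat{Q}^{\hat\mu,\hat\nu}-Q^*\|_\infty\le \epsilon':=\epsilon+9\gamma\epsilon_{opt}/(1-\gamma)$ holds, together with the defining property of $(\tilde\mu,\tilde\nu)$ as the per-state saddle point of the matrix game with payoff $\hat{Q}^{\hat\mu,\hat\nu}(s,\cdot,\cdot)$. So I would set $Q:=\hat{Q}^{\hat\mu,\hat\nu}$, treat it as an arbitrary function with $\|Q-Q^*\|_\infty\le\epsilon'$, and aim to prove that its one-step Nash pair is a $2\tilde\epsilon$-NE of the \emph{true} game $\cG$, where $\tilde\epsilon=2\epsilon'/(1-\gamma)$. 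Writing $\mathrm{val}(Q)(s):=\max_{u\in\Delta(\cA)}\min_{\vartheta\in\Delta(\cB)}\EE_{a\sim u,b\sim\vartheta}[Q(s,a,b)]$, I would record two elementary facts: $\mathrm{val}$ is nonexpansive in $\|\cdot\|_\infty$, and $\mathrm{val}(Q^*)=V^*$ (the Shapley equation), so that $\|\mathrm{val}(Q)-V^*\|_\infty\le\epsilon'$.

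The core of the argument is to control the two best-response values, showing $V^{\tilde\mu,*}\ge V^*-\tilde\epsilon$ and $V^{*,\tilde\nu}\le V^*+\tilde\epsilon$. Consider $V^{\tilde\mu,*}$. The saddle property of $\tilde\mu$ gives, at every $s$, $\min_{b}\EE_{a\sim\tilde\mu(\cdot\given s)}[Q(s,a,b)]=\mathrm{val}(Q)(s)\ge V^*(s)-\epsilon'$. Replacing $Q$ by $Q^*$ costs another $\epsilon'$, and using the true-model Bellman identity $Q^*(s,a,b)=r(s,a,b)+\gamma\,\EE_{s'\sim P(\cdot\given s,a,b)}[V^*(s')]$, I obtain the pointwise bound $(T^{\tilde\mu,*}V^*)(s)\ge V^*(s)-2\epsilon'$, where $T^{\tilde\mu,*}$ is the best-response Bellman operator $(T^{\tilde\mu,*}V)(s):=\min_{b}\EE_{a\sim\tilde\mu(\cdot\given s)}[r(s,a,b)+\gamma\,\EE_{s'}[V(s')]]$.

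Next I would exploit that $T^{\tilde\mu,*}$ is a monotone $\gamma$-contraction whose unique fixed point is $V^{\tilde\mu,*}$ and which shifts constants as $T^{\tilde\mu,*}(V-c\mathbf 1)=T^{\tilde\mu,*}V-\gamma c\mathbf 1$. Iterating the inequality $T^{\tilde\mu,*}V^*\ge V^*-2\epsilon'\mathbf 1$ and summing the geometric series $2\epsilon'\sum_{k\ge0}\gamma^k$ yields $V^{\tilde\mu,*}\ge V^*-2\epsilon'/(1-\gamma)=V^*-\tilde\epsilon$. The symmetric computation for agent $2$ — using the saddle property of $\tilde\nu$, the other Bellman operator $T^{*,\tilde\nu}$, and the same contraction argument — gives $V^{*,\tilde\nu}\le V^*+\tilde\epsilon$.

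Finally I would assemble the $2\tilde\epsilon$-NE inequalities. The definitions of best response give the sandwich $V^{\tilde\mu,*}\le V^{\tilde\mu,\tilde\nu}\le V^{*,\tilde\nu}$, while zero-sum duality gives $V^{\tilde\mu,*}\le V^*\le V^{*,\tilde\nu}$. Combining these with the two bounds above shows $V^{\tilde\mu,\tilde\nu}\le V^{*,\tilde\nu}\le V^*+\tilde\epsilon\le V^{\tilde\mu,*}+2\tilde\epsilon$ and, symmetrically, $V^{*,\tilde\nu}-2\tilde\epsilon\le V^*-\tilde\epsilon\le V^{\tilde\mu,*}\le V^{\tilde\mu,\tilde\nu}$, which is exactly \eqref{equ:coro_eps_NE_1}. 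I expect the main obstacle to be the middle step: correctly passing from the \emph{per-state} saddle inequality for the \emph{estimated} payoff $Q$ to the \emph{pointwise} operator inequality $T^{\tilde\mu,*}V^*\ge V^*-2\epsilon'\mathbf 1$ for the \emph{true} model, and then converting that one-step slack into a value-function gap through the contraction — this is precisely where the two independent $\epsilon'$ losses and the horizon factor $1/(1-\gamma)$ enter and produce the $\tilde\epsilon$ scaling.
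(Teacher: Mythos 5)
Your proposal is correct and follows essentially the same route as the paper: both hinge on (i) the saddle property of $(\tilde{\mu},\tilde{\nu})$ with respect to $\hat{Q}^{\hat{\mu},\hat{\nu}}$, (ii) paying $2\|\hat{Q}^{\hat{\mu},\hat{\nu}}-Q^*\|_\infty$ to move between the estimated and true Q-functions, and (iii) a fixed-point argument that converts the one-step slack into the $1/(1-\gamma)$ factor. The only difference is cosmetic: the paper obtains the horizon factor by a self-bounding inequality (the term $\gamma\|V^*-V^{\tilde{\mu},*}\|_\infty$ appears on both sides of \eqref{equ:coro_transition_trash_2} and is solved for), whereas you run the equivalent monotone-contraction iteration of $T^{\tilde{\mu},*}$ and sum the geometric series.
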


Corollary \ref{coro:main_res_coro} is equivalently to saying that the sample complexity of achieving an $\epsilon$-NE policy pair is $\tilde\cO((1-\gamma)^{-5}\epsilon^{-2})$. 
This is worse than the model-based single-agent setting \citep{yang2019optimality}, and also worse than both the  model-free single-agent  \citep{sidford2018near} and turn-based two-agent \citep{sidford2019solving}  settings, where  $\tilde\cO((1-\gamma)^{-3}\epsilon^{-2})$ can be achieved for learning the optimal policy. 
This also has a gap from the lower bound  in both Lemma \ref{thm:lb_informal} and Theorem \ref{thm:lb_reward_agnostic}.  
Note that the above sample complexity still matches 
that of the Empirical QVI in \cite{azar2013minimax} if $\epsilon\in(0,1]$ for single-agent RL,  but with a larger choice of $\epsilon$ of $(0,(1-\gamma)^{-1/2}]$. As the Markov game setting is more challenging than MDPs, it is not clear yet if the lower bounds in Lemma \ref{thm:lb_informal} and Theorem \ref{thm:lb_reward_agnostic} in finding $\epsilon$-NE policies can be achieved, using a general \textbf{Planning Oracle}.  
In contrast, we show next that a stable \textbf{Planning Oracle} can indeed (almost) match the lower bounds. 


\subsection{Near-Optimality in Finding $\epsilon$-NE Policy}\label{sec:res_approx_NE_policy}

Admittedly,  Corollary \ref{coro:main_res_coro} does  not fully exploit the \emph{model-based} approach, since it finds the NE policy according to the Q-value estimate $\hat{Q}^{\hat{\mu},\hat{\nu}}$, instead of using the output policy pair $(\hat{\mu},\hat{\nu})$ directly. This loses a factor of $1-\gamma$. To improve the sample complexity of obtaining the NE {policies}, we first introduce the following definition of a smooth \textbf{Planning Oracle}. 
 

\begin{definition}
	[Smooth \textbf{Planning Oracle}]\label{def:smooth_oracle}
	A smooth \textbf{Planning Oracle} generates  policies that are smooth with respect to the NE Q-values of the empirical model. Specifically, for two empirical models $\hat{\cG}_1$ and $\hat{\cG}_2$, the generated near-equilibrium  policy pair  $(\hat{\mu}_1,\hat{\nu}_1)$ and $(\hat{\mu}_2,\hat{\nu}_2)$ satisfy that for each $s\in\cS$, $\|\hat{\mu}_1(\cdot\given s)-\hat{\mu}_2(\cdot\given s)\|_{TV}\leq C\cdot\|\hat{Q}_1^*-\hat{Q}_2^*\|_\infty$ and $\|\hat{\nu}_1(\cdot\given s)-\hat{\nu}_2(\cdot\given s)\|_{TV}\leq C\cdot\|\hat{Q}_1^*-\hat{Q}_2^*\|_\infty$ for some constant\footnote{We allow $C$ to  depend polynomially on $|\cA|,~|\cB|$, which, as we will show later, does not affect the sample complexity as it appears as $\log C$.} $C>0$, where $\hat{Q}_i^*$ is the NE Q-value of $\hat{\cG}_i$ for $i=1,2$, and $\|\cdot\|_{TV}$ is the total variation distance.   
\end{definition}

%
Such a smooth \textbf{Planning Oracle} can be 
 readily obtained in several ways. For example, one simple (but possibly  computationally expensive) approach is to output the average over the entire policy space, using a \emph{softmax}  randomization over best-response values induced by $\hat{Q}^*$. Specifically, for agent $1$, the output $\hat{\mu}$ is given by 
\$
\hat{\mu}(\cdot\given s)=\int_{\Delta(\cA)}\frac{\exp\big(\min\limits_{\vartheta\in\Delta(\cB)}\EE_{a\sim u,b\sim\vartheta}\big[\hat{Q}^*(s,a,b)\big]\big/\tau\big)}{\int_{\Delta(\cA)}\exp\big(\min\limits_{\vartheta\in\Delta(\cB)}\EE_{a\sim u',b\sim\vartheta}\big[\hat{Q}^*(s,a,b)\big]\big/\tau\big)du'}\cdot udu, 
\$
where $\tau>0$ is some temperature constant. The output of $\hat{\nu}$ is analogous. With a small enough $\tau$, $\hat{\mu}$ approximates the exact solution to $\argmax\limits_{u\in\Delta(\cA)}\min\limits_{\vartheta\in\Delta(\cB)}\EE_{a\sim u,b\sim\vartheta}[\hat{Q}^*(s,a,b)]$, the NE policy given $\hat{Q}^*$.  Moreover, notice that $\hat{\mu}$ satisfies the smoothness condition in Definition  \ref{def:smooth_oracle}. This is because for each $u\in\Delta(\cA)$ in the integral: i) the softmax function is Lipschitz continuous \kzedit{with respect to  $\min\limits_{\vartheta\in\Delta(\cB)}\EE_{a\sim u,b\sim\vartheta}\big[\hat{Q}^*(s,a,b)\big]$ with Lipschitz constant $1/\tau$} \citep{gao2017properties}; ii)  the best-response value $\min\limits_{\vartheta\in\Delta(\cB)}\EE_{a\sim u,b\sim\vartheta}\big[\hat{Q}^*(s,a,b)\big]$ is smooth with respect to $\hat{Q}^*$. Thus, such an oracle   is an instance  of  a smooth \textbf{Planning Oracle}.

Another more tractable way to obtain $(\hat{\mu},\hat{\nu})$ is by directly  solving a \emph{regularized} matrix game induced by $\hat{Q}^*$. Specifically, one solves 
\#\label{equ:solve_reg_minimax}
\big(\hat{\mu}(\cdot\given s),\hat{\nu}(\cdot\given s)\big)=\argmax_{u\in\Delta(\cA)}\min_{\vartheta\in\Delta(\cB)}~~\EE_{a\sim u,b\sim \vartheta}\big[\hat{Q}^*(s,a,b)\big]-\tau_1\Omega_{1}(u)+\tau_2\Omega_{2}(\vartheta),
\#
for each $s\in\cS$, 
where $\Omega_{i}$ is the regularizer for agent $i$'s policy, usually a strongly convex function, $\tau_i>0$ are the temperature parameters. 
This strongly-convex-strongly-concave saddle point problem admits a unique solution, and can be solved efficiently \citep{facchinei2007finite,cherukuri2017role,pmlr-v89-liang19b}. 
This regularization has been widely used  in both single-agent MDPs \citep{neu2017unified,pmlr-v80-haarnoja18b,chow2018path,pmlr-v97-geist19a}, 
and learning in games \citep{syrgkanis2015fast,mertikopoulos2016learning,grill2019planning}, to improve both the exploration and convergence.   

With small enough $\tau_i$ \kzedit{(with the order of $\cO(\epsilon)$, see \S\ref{append:planning})}, the solution to \eqref{equ:solve_reg_minimax} will be  $\epsilon$-close to that of the unregularized one \citep{pmlr-v97-geist19a}. 
More importantly, many commonly used regularizations, including negative entropy \citep{neu2017unified}, Tsallis entropy \citep{chow2018path} and R\'enyi entropy with certain parameters \citep{mertikopoulos2016learning},   naturally yield a smooth \textbf{Planning Oracle}; 
 see Lemma \ref{lemma:smooth_reg} in \S\ref{append:planning} for a formal statement. 
Note that the smoothness 
of the oracle 
does not affect the sample complexity of our model-based MARL algorithm. 

Now we  present  another theorem, which gives the $\epsilon$-Nash equilibrium \emph{policy  pair} directly, with the (near-)minimax optimal sample complexity of $\tilde\cO(|\cS||\cA||\cB|(1-\gamma)^{-3}\epsilon^{-2})$. 

\begin{theorem}[Finding $\epsilon$-NE Policy with a Smooth \textbf{Planning Oracle}] \label{thm:main_res_2}
	Suppose that the policy pair $(\hat{\mu},\hat{\nu})$ is obtained from a smooth  \textbf{Planning Oracle} using the empirical model $\hat{\cG}$ (see Definition \ref{def:smooth_oracle}),  which satisfies  
	\$
	\|\hat{V}^{\hat{\mu},*}-\hat{V}^*\|_\infty\leq \epsilon_{opt},\qquad \|\hat{V}^{*,\hat{\nu}}-\hat{V}^*\|_\infty\leq \epsilon_{opt}. 
	\$  
	Then, 
	for any $\delta\in(0,1]$ and {$\epsilon\in(0,1/(1-\gamma)^{1/2}]$}, if 
	\$
	N\geq \frac{c\gamma \log\big[c(C+1)|\cS||\cA||\cB|(1-\gamma)^{-4}\delta^{-1}\big]}{(1-\gamma)^3\epsilon^2}
	\$
	for some absolute constant $c$, then, letting  $\tilde \epsilon:=\epsilon+{4\epsilon_{opt}}/({1-\gamma})$, with probability at least $1-\delta$, 
	\$
	V^{*,\hat{\nu}}-2\tilde \epsilon\leq V^{\hat{\mu},\hat{\nu}}\leq V^{\hat{\mu},*}+2\tilde \epsilon, 
	\$
	namely, $(\hat{\mu},\hat{\nu})$ constitutes a $2\tilde \epsilon$-Nash equilibrium policy pair. 
\end{theorem}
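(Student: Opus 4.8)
The plan is to certify the two saddle-point inequalities $V^{*,\hat\nu}-V^{\hat\mu,\hat\nu}\le 2\tilde\epsilon$ and $V^{\hat\mu,\hat\nu}-V^{\hat\mu,*}\le 2\tilde\epsilon$ separately, since together they are exactly the definition of a $2\tilde\epsilon$-NE (Definition \ref{def:NE}). I would first route each true-game quantity through the empirical game. For the first inequality, write
\[ V^{*,\hat\nu}-V^{\hat\mu,\hat\nu}=\big(V^{*,\hat\nu}-\hat V^{*,\hat\nu}\big)+\big(\hat V^{*,\hat\nu}-\hat V^{\hat\mu,\hat\nu}\big)+\big(\hat V^{\hat\mu,\hat\nu}-V^{\hat\mu,\hat\nu}\big). \]
The middle term is handled purely by the oracle guarantee: since $\hat V^{\hat\mu,*}\le \hat V^{\hat\mu,\hat\nu}\le \hat V^{*,\hat\nu}$ and both best-response values lie within $\epsilon_{opt}$ of $\hat V^*$, we get $\hat V^{*,\hat\nu}-\hat V^{\hat\mu,\hat\nu}\le \hat V^{*,\hat\nu}-\hat V^{\hat\mu,*}\le 2\epsilon_{opt}$. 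It then remains to control the two statistical terms, which compare the \emph{same} policies evaluated on $P$ versus $\hat P$; an entirely analogous decomposition reduces $V^{\hat\mu,\hat\nu}-V^{\hat\mu,*}$ to best-response terms of the same type.

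The core statistical step is thus to bound quantities such as $\hat V^{\hat\mu,\hat\nu}-V^{\hat\mu,\hat\nu}$ and $V^{*,\hat\nu}-\hat V^{*,\hat\nu}$. By the value-difference (simulation) identity these collapse to controlling $(\hat P-P)$ applied to value functions along the induced chain, e.g. $\gamma(I-\gamma \hat P^{\hat\mu,\hat\nu})^{-1}(\hat P-P)V^{\hat\mu,\hat\nu}$. The obstacle is that $\hat\mu,\hat\nu$, and hence $V^{\hat\mu,\hat\nu}$, are functions of the very samples defining $\hat P$, so $(\hat P-P)(\cdot|s,a,b)$ and $V^{\hat\mu,\hat\nu}$ are statistically coupled and a naive Hoeffding/Bernstein bound is invalid. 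To decouple, I would adapt the absorbing-game construction of \cite{yang2019optimality}: for each state $s$ and each scalar $u$ in a fine net of $[0,(1-\gamma)^{-1}]$, form an auxiliary empirical game $\hat{\cG}_{s,u}$ in which $s$ is made absorbing with instantaneous reward $(1-\gamma)u$. Its NE Q-value $\hat Q^*_{s,u}$, and therefore the oracle's output on it, is independent of the transition samples drawn at $s$, so Bernstein concentration applies at $(s,a,b)$. Choosing $u\approx \hat V^*(s)$ makes $\hat Q^*_{s,u}$ close to $\hat Q^*$, and here the \emph{smoothness} of the \textbf{Planning Oracle} (Definition \ref{def:smooth_oracle}) is essential: it bounds $\|\hat\mu_{s,u}(\cdot|s)-\hat\mu(\cdot|s)\|_{TV}$ and the analogous $\hat\nu$ distance by $C\|\hat Q^*_{s,u}-\hat Q^*\|_\infty$, letting us transfer the concentration bound proved for the decoupled absorbing policy back to the actual pair $(\hat\mu,\hat\nu)$.

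To obtain the sharp $(1-\gamma)^{-3}$ rate (rather than the $(1-\gamma)^{-5}$ of Corollary \ref{coro:main_res_coro}), I would combine Bernstein's inequality with the total-variance argument of \cite{azar2013minimax}: the per-coordinate error $|(\hat P-P)V^{\hat\mu,\hat\nu}|$ is $\tilde\cO(\sqrt{\mathrm{Var}/N})$, and summing the discounted variances along the induced chain via the law of total variance contributes only $(1-\gamma)^{-1}$ rather than $(1-\gamma)^{-2}$, so that $\|\hat V^{\hat\mu,\hat\nu}-V^{\hat\mu,\hat\nu}\|_\infty=\tilde\cO((1-\gamma)^{-3/2}N^{-1/2})$; the same bookkeeping applies to the best-response terms, which are optimal values of the one-player MDPs obtained by freezing $\hat\nu$ (resp. $\hat\mu$). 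Operating directly on the oracle's policies is what avoids the extra $(1-\gamma)^{-1}$ incurred by the one-step rounding in Corollary \ref{coro:main_res_coro}. A union bound over the $|\cS|$ states, the $u$-net (of size polynomial in $(1-\gamma)^{-1}\epsilon^{-1}$), and the smoothness constant produces the $\log[c(C+1)|\cS||\cA||\cB|(1-\gamma)^{-4}\delta^{-1}]$ factor; since $C$ enters only as $\log C$, it does not affect the order. Setting $N$ as stated then forces each statistical term below $\epsilon/2$, yielding the $2\tilde\epsilon$ bound with $\tilde\epsilon=\epsilon+4\epsilon_{opt}/(1-\gamma)$.

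The main obstacle I anticipate is precisely the statistical coupling between the learned policy pair and the empirical model; the whole argument hinges on breaking this dependence via the absorbing-game device and, uniquely to the multi-agent setting, on the oracle's smoothness, which quantitatively controls how the equilibrium policies move when the empirical game is perturbed into its absorbing counterpart. Establishing the uniform-over-the-net closeness of $\|\hat Q^*_{s,u}-\hat Q^*\|_\infty$, and verifying that the resulting TV-perturbation of $(\hat\mu,\hat\nu)$ contributes only a lower-order term that can be folded into $\epsilon_{opt}$, is where I expect most of the technical care to be needed.
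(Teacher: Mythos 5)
Your proposal is correct and follows essentially the same route as the paper's proof: the same architecture of splitting the NE gap into an optimization error controlled by the oracle guarantee plus statistical terms, which are then handled by the absorbing-game decoupling device, the oracle's smoothness (to transfer Bernstein bounds from the decoupled policies $\hat\mu_{s,u},\hat\nu_{s,u}$ back to $(\hat\mu,\hat\nu)$), and the Bernstein-plus-total-variance argument with self-bounding that yields the $(1-\gamma)^{-3}$ rate. The only cosmetic difference is in the bookkeeping: you telescope directly through the empirical saddle-point gap (using $\hat V^{*,\hat\nu}-\hat V^{\hat\mu,*}\le 2\epsilon_{opt}$), whereas the paper routes each side through the true NE value $V^*$ and additionally bounds terms involving the true NE policies $\mu^*,\nu^*$; both reductions lead to the same two kinds of statistical quantities, one reducible to $(P-\hat P)\hat V^*$ and one involving true-game best responses such as $(P-\hat P)V^{\hat\mu,*}$, the latter being exactly where the smoothness assumption is needed.
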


Theorem \ref{thm:main_res_2}   shows that 
the sample complexity of achieving an $\epsilon$-NE policy can be near-minimax optimal for the reward-aware case, and minimax-optimal for the reward-agnostic case, if a smooth \textbf{Planning Oracle} is used. The dependence on $|\cS|$ and $1-\gamma$ also  matches the only known near-optimal complexity in MGs in \cite{sidford2019solving}, with a turn-based setting and a model-free algorithm. 
Inherited from \cite{yang2019optimality},  this improves the second result in \cite{azar2013minimax} that also has $\tilde\cO((1-\gamma)^{-3}\epsilon^{-2})$ in finding an $\epsilon$-optimal policy, by removing the dependence on $|\cS|^{-1/2}$ and enlarging the choice of $\epsilon$ from $(0,(1-\gamma)^{-1/2}|\cS|^{-1/2}]$ to $(0,(1-\gamma)^{-1/2}]$, and removing a  factor of $|\cS|$ in the total sample complexity for any fixed $\epsilon$.  
In addition, Theorem \ref{thm:main_res_2} also applies to the multi-reward setting, as Theorem \ref{thm:main_res}, by   taking a union bound argument over all reward functions in the reward-agnostic case. If the number of reward functions $M$ is of order $\text{poly}(|\cS|,|\cA|,|\cB|)$, the sample complexity of handling multiple reward functions has  the same order as that in Theorem \ref{thm:main_res_2}. 

Theorems \ref{thm:main_res} and \ref{thm:main_res_2} together   justify that, this simple model-based MARL algorithm is indeed sample-efficient, in approximating  both the Nash equilibrium  values and policies. Moreover, our separation of the reward-aware and reward-agnostic cases   highlights both the power (easily handling multiple reward functions), and the limitation (less adaptive and can hardly  achieve $\tilde\cO(|\cA|+|\cB|)$)  of the model-based approach, particularly arising in the multi-agent RL  context.

\section{Proofs}\label{sec:proof}

We first introduce some additional notation for convenience. 

\paragraph{Notation.} For a matrix $X\in\RR^{m\times n}$, $X\geq c$ for some scalar $c\in\RR$ means that each element of $X$ is no-less than $c$. For a vector $x$, we use $(x^2),~\sqrt{x},~|x|$ to denote  the component-wise square, square-root, and absolute value of $x$. We use $P_{(s,a,b),s'}$ to denote the transition probability $P(s'\given s,a,b)$, and $P_{s,a,b}$ to denote the vector $P(\cdot\given s,a,b)$.   We also use $P^{\mu,\nu}$ to denote the transition probability of state-action pairs induced by the policy pair $(\mu,\nu)$, which is defined as 
\$
P^{\mu,\nu}_{(s,a,b),(s',a',b')}=\mu(a'\given s')\nu(b'\given s')P(s'\given s,a,b).
\$
Hence, the Q-value function can be written as 
\$
Q^{\mu,\nu}=r+\gamma P^{\mu,\nu}Q^{\mu,\nu}=(I-\gamma P^{\mu,\nu})^{-1}r.
\$ 
Also, for any $V\in\RR^{|\cS|}$, we define the vector $\Var_P(V)\in\RR^{|\cS|\times|\cA|\times|\cB|}$ as
\$
\Var_P(V)(s,a,b):=\Var_{P(\cdot\given s,a,b)}(V)=P(V)^2-(PV)^2.
\$ 
Then, we define $\Sigma_{\cG}^{\mu,\nu}$ to be the variance of the discounted reward under the MG $\cG$, i.e.,
\$
\Sigma_{\cG}^{\mu,\nu}(s,a,b):=\EE\Big[\Big(\sum_{t=0}^\infty \gamma^t r(s_t,a_t,b_t)-Q_\cG^{\mu,\nu}(s,a,b)\Big)^2\biggiven s_0=s,a_0=a,b_0=b\Big]. 
\$
It can be shown (see an almost identical formula for MDPs in {\cite[Lemma 6]{azar2013minimax}}) that $\Sigma_{\cG}^{\mu,\nu}$ satisfies some \emph{Bellman-type} equation for any policy pair $(\mu,\nu)$: 
\#\label{equ:bellman_variance}
\Sigma_{\cG}^{\mu,\nu}=\gamma^2\Var_P(V_\cG^{\mu,\nu})+\gamma^2 P^{\mu,\nu} \Sigma_{\cG}^{\mu,\nu}. 
\#
It can also be verified that $\|\Sigma_{\cG}^{\mu,\nu}\|_\infty\leq \gamma^2/(1-\gamma)^2$ \citep{azar2013minimax,yang2019optimality}. 
Before proceeding further, we provide a roadmap for the proof. 
 
\paragraph{Proof Roadmap.} Our proof mainly consists of the following  steps:
\begin{enumerate}[leftmargin=1cm]
	\item \textbf{Helper lemmas and a crude bound.} We first establish several important lemmas, including the component-wise error bounds for the final Q-value errors, the variance error bound, and a crude error bound that directly uses Hoeffding's inequality. \kzedit{Some of the results are adapted from the single-agent setting  to zero-sum MGs, see \cite{yang2019optimality}.} See \S\ref{subsec:helper_lemma}.
	\item \textbf{Establishing an auxiliary Markov game.} To improve the crude bound, we build up an \emph{absorbing Markov game}, in order to handle the statistical dependence between $\hat{P}$ and some value function  generated by $\hat{P}$, which occurs as a product in the component-wise bound above. By carefully designing the auxiliary game, we establish a Bernstein-like concentration inequality, despite this dependency. See \S\ref{subsec:aux_game}, more precisely, Lemmas  \ref{lemma:absorb_main_res_2} and \ref{lemma:V_hat_mu_nu_error}. 
	\item \textbf{Final bound for $\epsilon$-approximate NE value.} Lemma \ref{lemma:absorb_main_res_2} in Step \textbf{2} allows us to exploit the variance bound, see Lemma \ref{lemma:bellman_variance}, to obtain an $\tilde\cO(\sqrt{1/[(1-\gamma)^3]N})$ order bound on the Q-value error, leading to a $\tilde \cO((1-\gamma)^{-3}\epsilon^{-2})$  near-minimax optimal sample complexity for achieving the $\epsilon$-approximate NE value. See \S\ref{sec:proof_main}.  
	\item \textbf{Final bounds for $\epsilon$-NE policy.} Based on the final bound in Step \textbf{3}, we then establish a $\tilde \cO((1-\gamma)^{-5}\epsilon^{-2})$ sample complexity for obtaining an  $\epsilon$-NE policy pair, by solving an additional matrix game over the output Q-value $\hat{Q}^{\hat{\mu},\hat{\nu}}$. See \S\ref{sec:proof_main_coro}. In addition, given a smooth \textbf{Planning Oracle}, by Lemma \ref{lemma:V_hat_mu_nu_error} in Step \textbf{2}, and more careful self-bounding techniques, we establish a $\tilde \cO((1-\gamma)^{-3}\epsilon^{-2})$ sample  complexity for achieving such an $\epsilon$-NE policy, directly using the output policies $(\hat{\mu},\hat{\nu})$.  
	See \S\ref{sec:proof_main_2}. 
\end{enumerate}

\subsection{Important Lemmas}\label{subsec:helper_lemma}

We start with the component-wise error bounds. 

\begin{lemma}[Component-Wise Bounds]\label{lemma:comp_wise_bnd}
	For any policy pair $(\mu,\nu)$, it follows that
	\$
	&\qquad\qquad\qquad\qquad Q^{\mu,\nu}-\hat{Q}^{\mu,\nu}=\gamma(I-\gamma P^{\mu,\nu})^{-1}(P-\hat{P})\hat{V}^{\mu,\nu}, 
	\\
	&
	{\gamma (I-\gamma \hat{P}^{\mu,{\nu(\mu)}})^{-1}(P-\hat {P}){V}^{\mu,*}\leq Q^{\mu,*}-\hat{Q}^{\mu,*}\leq \gamma (I-\gamma P^{\mu,\hat{\nu(\mu)}})^{-1}(P-\hat {P})\hat{V}^{\mu,*},}\\
	&\gamma (I-\gamma P^{\hat{\mu(\nu)},\nu})^{-1}(P-\hat {P})\hat{V}^{*,\nu}\leq Q^{*,\nu}-\hat{Q}^{*,\nu}\leq \gamma (I-\gamma \hat{P}^{\mu(\nu),\nu})^{-1}(P-\hat {P}){V}^{*,\nu},
	\$ 
	where we recall that $\nu(\mu)$ and $\mu(\nu)$ denote the best-response policy given $\mu$ and $\nu$, respectively (see \eqref{equ:def_best_response}). 
	Moreover, we have
	\#
	&Q^{\mu,\nu}\geq Q^*-\|Q^{\mu,\nu}-\hat{Q}^{\mu,\nu}\|_{\infty}-\|\hat{Q}^{\mu,\nu}-\hat{Q}^*\|_\infty-\|\hat{Q}^{\mu^*,*}-Q^*\|_\infty\label{equ:error_decomp_1}\\
	& Q^{\mu,\nu}\leq Q^*+\|Q^{\mu,\nu}-\hat{Q}^{\mu,\nu}\|_{\infty}+\|\hat{Q}^{\mu,\nu}-\hat{Q}^*\|_\infty+\|\hat{Q}^{*,\nu^*}-Q^*\|_\infty\label{equ:error_decomp_2}
	\\
	&V^{\mu,*}\geq V^*-\|Q^{\mu,*}-\hat{Q}^{\mu,*}\|_{\infty}-\|\hat{V}^{\mu,*}-\hat{V}^*\|_\infty-\|\hat{Q}^{\mu^*,*}-Q^*\|_\infty\label{equ:error_decomp_5}\\
	& V^{*,\nu}\leq V^*+\|Q^{*,\nu}-\hat{Q}^{*,\nu}\|_{\infty}+\|\hat{V}^{*,\nu}-\hat{V}^*\|_\infty+\|\hat{Q}^{*,\nu^*}-Q^*\|_\infty.\label{equ:error_decomp_6}
	\#
\end{lemma}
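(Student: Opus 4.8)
The plan is to derive everything from one exact difference identity for a fixed policy pair, then obtain the best-response inequalities by a sandwiching argument, and finally the error-decomposition bounds from repeated triangle inequalities combined with the saddle-point structure of the empirical game. First I would establish the exact identity. Starting from the two Bellman fixed-point equations $Q^{\mu,\nu}=r+\gamma P^{\mu,\nu}Q^{\mu,\nu}$ and $\hat Q^{\mu,\nu}=r+\gamma\hat P^{\mu,\nu}\hat Q^{\mu,\nu}$, subtracting and inserting $\gamma P^{\mu,\nu}\hat Q^{\mu,\nu}$ gives $(I-\gamma P^{\mu,\nu})(Q^{\mu,\nu}-\hat Q^{\mu,\nu})=\gamma(P^{\mu,\nu}-\hat P^{\mu,\nu})\hat Q^{\mu,\nu}$. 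The key algebraic observation is that the joint-transition operator acting on a $Q$-function collapses to the state-transition operator acting on the corresponding $V$-function: $(P^{\mu,\nu}\hat Q^{\mu,\nu})(s,a,b)=\sum_{s'}P(s'\given s,a,b)\hat V^{\mu,\nu}(s')$, and likewise for $\hat P^{\mu,\nu}$, so $(P^{\mu,\nu}-\hat P^{\mu,\nu})\hat Q^{\mu,\nu}=(P-\hat P)\hat V^{\mu,\nu}$. Inverting $I-\gamma P^{\mu,\nu}$ (invertible since $\gamma<1$) yields the first claimed identity. Inserting $\gamma\hat P^{\mu,\nu}Q^{\mu,\nu}$ instead produces the dual form $Q^{\mu,\nu}-\hat Q^{\mu,\nu}=\gamma(I-\gamma\hat P^{\mu,\nu})^{-1}(P-\hat P)V^{\mu,\nu}$; I will need both forms.

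Next, for the best-response inequalities, I would sandwich using optimality of the best responses. For the upper bound on $Q^{\mu,*}-\hat Q^{\mu,*}$, I use that the true best response $\nu(\mu)$ minimizes $Q^{\mu,\cdot}$ componentwise, so $Q^{\mu,*}=Q^{\mu,\nu(\mu)}\le Q^{\mu,\hat\nu(\mu)}$, while $\hat Q^{\mu,*}=\hat Q^{\mu,\hat\nu(\mu)}$ by definition of the empirical best response; applying the first form of the identity to the pair $(\mu,\hat\nu(\mu))$ and using $\hat V^{\mu,\hat\nu(\mu)}=\hat V^{\mu,*}$ gives the stated bound. For the lower bound I swap roles, using $\hat Q^{\mu,*}\le\hat Q^{\mu,\nu(\mu)}$ and $Q^{\mu,*}=Q^{\mu,\nu(\mu)}$, and apply the dual form to $(\mu,\nu(\mu))$. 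The two inequalities for $Q^{*,\nu}-\hat Q^{*,\nu}$ follow by the mirror-image argument, with $\max$ replacing $\min$ and the maximizing best responses $\mu(\nu),\hat\mu(\nu)$.

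Finally, for the four error-decomposition bounds I would chain triangle inequalities through $\hat Q^{\mu,\nu}$ and $\hat Q^*$ (resp. $\hat V^{\mu,*}$ and $\hat V^*$), then invoke the saddle-point property of the empirical equilibrium: since $\hat V^{\mu',*}\le\hat V^*\le\hat V^{*,\nu'}$ for all $\mu',\nu'$, taking $\mu'=\mu^*$ and $\nu'=\nu^*$ sandwiches $\hat Q^{\mu^*,*}\le\hat Q^*\le\hat Q^{*,\nu^*}$, and I close the gap to $Q^*$ using $Q^*=Q^{\mu^*,\nu^*}$. The $V$-level bounds \eqref{equ:error_decomp_5}--\eqref{equ:error_decomp_6} use in addition the Lipschitz estimate $|\min_v\EE[f]-\min_v\EE[g]|\le\|f-g\|_\infty$ (and its $\max$ analogue) to convert $Q$-differences into $V$-differences, together with $V^{\mu^*,*}=V^*$ and $Q^{\mu^*,*}=Q^*$.

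The main obstacle I anticipate is the bookkeeping in the best-response step: each inequality direction must be paired with the correct resolvent form (the one built from $P^{\mu,\nu}$ versus from $\hat P^{\mu,\nu}$) and with the correct choice of true-versus-empirical best-response policy, since the wrong resolvent would leave the inequality pointing the wrong way. Once both forms of the identity are in hand, the remaining estimates are routine monotonicity and triangle-inequality arguments.
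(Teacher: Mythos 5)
Your proposal is correct and follows essentially the same route as the paper's proof: the same resolvent-difference identity (in both its $P^{\mu,\nu}$- and $\hat{P}^{\mu,\nu}$-based forms, paired with the inequality directions exactly as the paper does), the same best-response sandwiching $Q^{\mu,*}\leq Q^{\mu,\hat{\nu(\mu)}}$, $\hat{Q}^{\mu,*}\leq \hat{Q}^{\mu,\nu(\mu)}$, and the same triangle-inequality chains through $\hat{Q}^*$ using the empirical saddle-point property $\hat{Q}^{\mu^*,*}\leq\hat{Q}^*\leq\hat{Q}^{*,\nu^*}$ and the non-expansiveness of the $\min$/$\max$ operators. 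No gaps.
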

\begin{proof} 
First, note that
\small
\$
&Q^{\mu,\nu}-\hat{Q}^{\mu,\nu}=(I-\gamma P^{\mu,\nu})^{-1}r-(I-\gamma\hat{P}^{\mu,\nu})^{-1}r=(I-\gamma P^{\mu,\nu})^{-1}[(I-\gamma \hat{P}^{\mu,\nu})-(I-\gamma {P}^{\mu,\nu})]\hat{Q}^{\mu,\nu}\\
&\quad =\gamma (I-\gamma P^{\mu,\nu})^{-1}(P^{\mu,\nu}-\hat {P}^{\mu,\nu})\hat{Q}^{\mu,\nu}=\gamma (I-\gamma P^{\mu,\nu})^{-1}(P-\hat {P})\hat{V}^{\mu,\nu},
\$
\normalsize
proving the first equation. 
Also, 
\small
\$
&Q^{\mu,*}-\hat{Q}^{\mu,*}\leq Q^{\mu,\hat{\nu(\mu)}}-\hat{Q}^{\mu,*}
=Q^{\mu,\hat{\nu(\mu)}}-\hat{Q}^{\mu,\hat{\nu(\mu)}}\\
&\quad=
 \big(I-\gamma P^{\mu,\hat{\nu(\mu)}}\big)^{-1}r-\big(I-\gamma\hat{P}^{\mu,\hat{\nu(\mu)}}\big)^{-1}r\\
&\quad=\big(I-\gamma P^{\mu,\hat{\nu(\mu)}}\big)^{-1}\big[(I-\gamma \hat{P}^{\mu,\hat{\nu(\mu)}})-(I-\gamma {P}^{\mu,\hat{\nu(\mu)}})\big]\hat{Q}^{\mu,\hat{\nu(\mu)}}
=\gamma (I-\gamma P^{\mu,\hat{\nu(\mu)}})^{-1}(P-\hat {P})\hat{V}^{\mu,\hat{\nu(\mu)}},  
\$
\normalsize
where we recall that $\hat{\nu(\mu)}(\cdot\given s)\in\argmin \hat{V}^{\mu,\nu}(s)$ for all  $s\in\cS$. By similar arguments, recalling that ${\nu(\mu)}(\cdot\given s)\in\argmin {V}^{\mu,\nu}(s)$ for all $s$, we have 
\small
\$
&Q^{\mu,*}-\hat{Q}^{\mu,*}\geq Q^{\mu,{\nu(\mu)}}-\hat{Q}^{\mu,\nu(\mu)}=(I-\gamma P^{\mu,\nu(\mu)})^{-1}r-(I-\gamma\hat{P}^{\mu,\nu(\mu)})^{-1}r\\
&\quad= (I-\gamma \hat{P}^{\mu,\nu(\mu)})^{-1}[(I-\gamma \hat{P}^{\mu,\nu(\mu)})-(I-\gamma {P}^{\mu,\nu(\mu)})]{Q}^{\mu,\nu(\mu)}=\gamma (I-\gamma \hat{P}^{\mu,{\nu(\mu)}})^{-1}(P-\hat {P}){V}^{\mu,*}. 
\$
\normalsize
Similar arguments yield the third inequality in the first argument. 

	For the second argument, we have
	\small
	\$
	Q^{\mu,\nu}-Q^*=Q^{\mu,\nu}-\hat{Q}^*+\hat{Q}^*-Q^*\geq Q^{\mu,\nu}-\hat{Q}^*+\hat{Q}^{\mu^*,*}-Q^*\geq -\|Q^{\mu,\nu}-\hat{Q}^*\|_\infty-\|\hat{Q}^{\mu^*,*}-Q^*\|_\infty,
	\$
	\normalsize
	which, combined with triangle inequality, yields the first inequality. Similarly, we have
	\small
	\$
	Q^{\mu,\nu}-Q^*=Q^{\mu,\nu}-\hat{Q}^*+\hat{Q}^*-Q^*\leq Q^{\mu,\nu}-\hat{Q}^*+\hat{Q}^{*,\nu^*}-Q^*\leq \|Q^{\mu,\nu}-\hat{Q}^*\|_\infty+\|\hat{Q}^{*,\nu^*}-Q^*\|_\infty. 
	\$
	\normalsize
	Using triangle inequality proves the second inequality. 
	For \eqref{equ:error_decomp_5}-\eqref{equ:error_decomp_6}, we similarly have
	\small
	\#
	V^{\mu,*}-V^*&=V^{\mu,*}-\hat{V}^*+\hat{V}^*-V^*\geq V^{\mu,*}-\hat{V}^*+\hat{V}^{\mu^*,*}-V^*\geq -\|V^{\mu,*}-\hat{V}^*\|_\infty-\|\hat{V}^{\mu^*,*}-V^*\|_\infty,\label{equ:componentwise_bnd_trash_1}\\
	V^{*,\nu}-V^*&=V^{*,\nu}-\hat{V}^*+\hat{V}^*-V^*\leq V^{*,\nu}-\hat{V}^*+\hat{V}^{*,\nu}-V^*\leq \|V^{*,\nu}-\hat{V}^*\|_\infty+\|\hat{V}^{*,\nu}-V^*\|_\infty.\label{equ:componentwise_bnd_trash_2} 
	\#
	\normalsize
	Notice that for any $\mu\in\Delta(\cA)^{|\cS|}$ and $\nu\in\Delta(\cB)^{|\cS|}$,  
	\#
	&\|V^{\mu,*}-\hat{V}^{\mu,*}\|_{\infty}=\Big\|\min_{\vartheta\in\Delta(\cB)}\EE_{a\sim \mu(\cdot\given s),b\sim \vartheta}[Q^{\mu,*}(\cdot,a,b)]- \min_{\vartheta\in\Delta(\cB)}\EE_{a\sim \mu(\cdot\given s),b\sim \vartheta}[\hat{Q}^{\mu,*}(\cdot,a,b)]\Big\|_\infty\notag\\
	&\quad\leq \max_{\vartheta\in\Delta(\cB)}\big\|\EE_{a\sim \mu(\cdot\given s),b\sim \vartheta}[Q^{\mu,*}(\cdot,a,b)]-  \EE_{a\sim \mu(\cdot\given s),b\sim \vartheta}[\hat{Q}^{\mu,*}(\cdot,a,b)]\big\|_\infty\leq \|Q^{\mu,*}-\hat{Q}^{\mu,*}\|_{\infty}\label{equ:componentwise_bnd_trash_3}\\
	&\|V^{*,\nu}-\hat{V}^{*,\nu}\|_{\infty}=\Big\|\max_{u\in\Delta(\cA)}\EE_{a\sim u,b\sim \nu(\cdot\given s)}[Q^{*,\nu}(\cdot,a,b)]- \max_{u\in\Delta(\cA)}\EE_{a\sim u,b\sim \nu(\cdot\given s)}[\hat{Q}^{*,\nu}(\cdot,a,b)]\Big\|_\infty\notag\\
	&\quad\leq \max_{u\in\Delta(\cA)}\big\|\EE_{a\sim u,b\sim \nu(\cdot\given s)}[Q^{*,\nu}(\cdot,a,b)]-  \EE_{a\sim u,b\sim \nu(\cdot\given s)}[\hat{Q}^{*,\nu}(\cdot,a,b)]\big\|_\infty\leq \|Q^{*,\nu}-\hat{Q}^{*,\nu}\|_{\infty}. \label{equ:componentwise_bnd_trash_4}
	\#
	Combining \eqref{equ:componentwise_bnd_trash_1}-\eqref{equ:componentwise_bnd_trash_2} and \eqref{equ:componentwise_bnd_trash_3}-\eqref{equ:componentwise_bnd_trash_4}, together with triangle inequality, we arrive at \eqref{equ:error_decomp_5}-\eqref{equ:error_decomp_6}, and complete the proof. 
\end{proof}

\kzedit{We establish the decomposition in  \eqref{equ:error_decomp_1}-\eqref{equ:error_decomp_2} for the following intuition and reasons. The error in \eqref{equ:error_decomp_1}-\eqref{equ:error_decomp_2} contains three terms:  the first and third terms $\|Q^{\mu,\nu}-\hat{Q}^{\mu,\nu}\|_{\infty}$ and $\hat{Q}^*\|_\infty-\|\hat{Q}^{\mu^*,*}-Q^*\|_\infty$ are the differences of the Q-value for some policy pairs in the true and estimated models, respectively, which will be handled later based on the {\it statistical error} of the model estimation; the second term    $\|\hat{Q}^{\mu,\nu}-\hat{Q}^*\|_\infty$ is the \emph{optimization error} we obtained from the algorithm that solves the empirical game, which will be controlled with an efficient \textbf{Planning Oracle}. To deal with the statistical errors, 
we first introduce the following lemma, which is adapted from Lemma $2$ in \cite{yang2019optimality}. }

\begin{lemma}\label{lemma:inf_norm_error}
	For any policy pair $(\mu,\nu)$ and vector $v\in\RR^{|\cS|\times|\cA|\times|\cB|}$,   $\|(I-\gamma P^{\mu,\nu})^{-1}v\|_\infty\leq \|v\|_\infty/(1-\gamma)$.
\end{lemma}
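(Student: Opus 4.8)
The plan is to exploit the fact that $P^{\mu,\nu}$, as constructed in the Notations paragraph, is a \emph{row-stochastic} matrix, so that it acts as a (weak) contraction in the $\ell_\infty$ norm and the resolvent $(I-\gamma P^{\mu,\nu})^{-1}$ inherits a clean $\ell_\infty$ bound. First I would verify the stochasticity: summing the defining entries $P^{\mu,\nu}_{(s,a,b),(s',a',b')}=\mu(a'\given s')\nu(b'\given s')P(s'\given s,a,b)$ over all $(s',a',b')$ factorizes as $\sum_{s'}P(s'\given s,a,b)\big(\sum_{a'}\mu(a'\given s')\big)\big(\sum_{b'}\nu(b'\given s')\big)=\sum_{s'}P(s'\given s,a,b)=1$, since each policy is a probability distribution. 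All entries are nonnegative, so each row is a probability vector, and consequently $\|P^{\mu,\nu}w\|_\infty\leq\|w\|_\infty$ for every vector $w$ (each coordinate of $P^{\mu,\nu}w$ is a convex combination of coordinates of $w$).

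With this in hand, the cleanest route is a self-bounding fixed-point argument rather than expanding a Neumann series. Let $u:=(I-\gamma P^{\mu,\nu})^{-1}v$, which is well defined since $\gamma\in[0,1)$ makes $I-\gamma P^{\mu,\nu}$ invertible (its spectral radius is at most $\gamma<1$). By construction $u$ satisfies $u=v+\gamma P^{\mu,\nu}u$. Taking $\ell_\infty$ norms and applying the triangle inequality together with the contraction property just established gives
\$
\|u\|_\infty\leq\|v\|_\infty+\gamma\|P^{\mu,\nu}u\|_\infty\leq\|v\|_\infty+\gamma\|u\|_\infty.
\$
Rearranging yields $(1-\gamma)\|u\|_\infty\leq\|v\|_\infty$, i.e. $\|(I-\gamma P^{\mu,\nu})^{-1}v\|_\infty\leq\|v\|_\infty/(1-\gamma)$, which is exactly the claim.

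There is essentially no hard step here: the only thing one must get right is the stochasticity check, since that is the single structural fact doing all the work, and the observation that $\gamma<1$ legitimizes both the invertibility and the absorption of the $\gamma\|u\|_\infty$ term to the left-hand side. If one preferred an even more self-contained argument avoiding the fixed-point rewriting, the alternative is to write $(I-\gamma P^{\mu,\nu})^{-1}=\sum_{k\geq0}\gamma^k(P^{\mu,\nu})^k$ and note that each power $(P^{\mu,\nu})^k$ is again stochastic (products of stochastic matrices are stochastic), so $\|(P^{\mu,\nu})^kv\|_\infty\leq\|v\|_\infty$ and the geometric series sums to $\|v\|_\infty/(1-\gamma)$; but the self-bounding version is shorter and avoids any convergence bookkeeping.
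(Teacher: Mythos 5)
Your proof is correct and is essentially the same argument as the paper's: both set $w=(I-\gamma P^{\mu,\nu})^{-1}v$, rewrite the defining relation, and use the triangle inequality plus the fact that the stochastic matrix $P^{\mu,\nu}$ is $\ell_\infty$-nonexpansive to solve for $\|w\|_\infty$. The only differences are cosmetic: you verify row-stochasticity explicitly and apply the triangle inequality in the forward direction ($\|u\|_\infty\leq\|v\|_\infty+\gamma\|u\|_\infty$) where the paper uses the reverse direction ($\|v\|_\infty\geq\|w\|_\infty-\gamma\|w\|_\infty$).
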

\begin{proof}
	The proof is straightforward. Letting  $w=(I-\gamma P^{\mu,\nu})^{-1}v$, we have 
	 $
	v=(I-\gamma P^{\mu,\nu})w
	$. Triangle inequality yields $\|v\|_\infty\geq \|w\|_\infty-\gamma\|P^{\mu,\nu} w\|_\infty\geq \|w\|_\infty-\gamma\|w\|_\infty$, which completes the proof. 
\end{proof}
 
Next we establish the Bellman property of a policy pair $(\mu,\nu)$'s variance and its accumulation. This has been observed for MDPs before in  \cite{munos1999variable, lattimore2012pac,azar2012sample,yang2019optimality}.  
We establish the counterpart for Markov games as follows. 

\begin{lemma}\label{lemma:bellman_variance}
	For any policy pair $(\mu,\nu)$ and MG $\cG$ with transition model $P$, we have
	\$
	\Big\|\big(I-\gamma P^{\mu,\nu}\big)^{-1}\sqrt{\Var_P\big(V^{\mu,\nu}_\cG\big)}\Big\|_\infty\leq \sqrt{\frac{2}{(1-\gamma)^3}}. 
	\$
\end{lemma}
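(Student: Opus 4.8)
The plan is to reduce everything to the total-variance identity \eqref{equ:bellman_variance} already recorded for $\Sigma_\cG^{\mu,\nu}$, treating the \emph{fixed} policy pair $(\mu,\nu)$ as if it were a single-agent policy, so that $P^{\mu,\nu}$ plays the role of an ordinary transition operator and the single-agent argument transfers verbatim. Write $P:=P^{\mu,\nu}$, abbreviate the nonnegative variance vector by $\sigma^2:=\Var_P(V^{\mu,\nu}_\cG)$ (so $\sqrt{\sigma^2}$ is its component-wise square root), and recall $(I-\gamma P)^{-1}=\sum_{t\ge0}\gamma^t P^t$ with each $P^t$ row-stochastic. The case $\gamma=0$ is immediate (the left-hand side is then $\|\sqrt{\sigma^2}\|_\infty\le 1/2$), so I assume $\gamma\in(0,1)$.

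Step 1 (strip the square root). Since $\sqrt{\cdot}$ is concave and every row of $P^t$ is a probability distribution, Jensen's inequality gives, for each $t$ \emph{separately}, the component-wise bound $P^t\sqrt{\sigma^2}\le\sqrt{P^t\sigma^2}$. Summing the geometric series and then splitting $\gamma^t=\gamma^{t/2}\cdot\gamma^{t/2}$ to apply Cauchy--Schwarz across $t$,
\[
(I-\gamma P)^{-1}\sqrt{\sigma^2}=\sum_{t\ge0}\gamma^t P^t\sqrt{\sigma^2}\le\Big(\sum_{t\ge0}\gamma^t\Big)^{1/2}\Big(\sum_{t\ge0}\gamma^t P^t\sigma^2\Big)^{1/2}=\frac{1}{\sqrt{1-\gamma}}\sqrt{(I-\gamma P)^{-1}\sigma^2},
\]
all inequalities being component-wise. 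Taking $\|\cdot\|_\infty$, it then suffices to show $\big\|(I-\gamma P)^{-1}\sigma^2\big\|_\infty\le 2/(1-\gamma)^2$.

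Step 2 (bound the first-order variance term via $\Sigma_\cG^{\mu,\nu}$). Rearranging \eqref{equ:bellman_variance} gives $(I-\gamma^2 P)\Sigma_\cG^{\mu,\nu}=\gamma^2\sigma^2$, hence $\sigma^2=\gamma^{-2}(I-\gamma^2 P)\Sigma_\cG^{\mu,\nu}$. Using the splitting $I-\gamma^2 P=(I-\gamma P)+\gamma(1-\gamma)P$ and left-multiplying by $(I-\gamma P)^{-1}$,
\[
(I-\gamma P)^{-1}\sigma^2=\gamma^{-2}\Big[\Sigma_\cG^{\mu,\nu}+\gamma(1-\gamma)(I-\gamma P)^{-1}P\,\Sigma_\cG^{\mu,\nu}\Big].
\]
Taking $\|\cdot\|_\infty$, applying Lemma \ref{lemma:inf_norm_error} to the second summand, using $\|P\,\Sigma_\cG^{\mu,\nu}\|_\infty\le\|\Sigma_\cG^{\mu,\nu}\|_\infty$ (as $P$ is stochastic), and the known bound $\|\Sigma_\cG^{\mu,\nu}\|_\infty\le\gamma^2/(1-\gamma)^2$,
\[
\big\|(I-\gamma P)^{-1}\sigma^2\big\|_\infty\le\gamma^{-2}(1+\gamma)\,\|\Sigma_\cG^{\mu,\nu}\|_\infty\le\frac{1+\gamma}{(1-\gamma)^2}\le\frac{2}{(1-\gamma)^2}.
\]
Substituting into Step 1 gives $\big\|(I-\gamma P)^{-1}\sqrt{\Var_P(V^{\mu,\nu}_\cG)}\big\|_\infty\le(1-\gamma)^{-1/2}\sqrt{2/(1-\gamma)^2}=\sqrt{2/(1-\gamma)^3}$, as desired.

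I expect Step 1 to be the only delicate point: the concave-square-root exchange must be carried out power-by-power on the stochastic matrices $P^t$ (Jensen), because $(I-\gamma P)^{-1}$ itself is \emph{not} stochastic, and only afterwards may one invoke Cauchy--Schwarz across the time index. Skipping this structure—e.g.\ bounding $\|\sigma^2\|_\infty$ crudely by $\tfrac14(1-\gamma)^{-2}$, or using the weaker estimate $\sigma^2\le\gamma^{-2}\Sigma_\cG^{\mu,\nu}$ without the refined $I-\gamma^2P=(I-\gamma P)+\gamma(1-\gamma)P$ splitting—would lose the tight $(1-\gamma)^{-3/2}$ rate and yield only $(1-\gamma)^{-2}$, so the decomposition in Step 2 is what buys the sharp constant. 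By contrast, Step 2 is purely linear-algebraic and carries no minimax structure: the sole Markov-game-specific ingredient is that the Bellman variance identity \eqref{equ:bellman_variance} holds for the fixed pair $(\mu,\nu)$ exactly as in the single-agent case.
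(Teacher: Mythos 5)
Your proof is correct and follows essentially the same route as the paper's: both pass the square root through the resolvent via Jensen's inequality (the paper applies it directly to the stochastic matrix $(1-\gamma)(I-\gamma P^{\mu,\nu})^{-1}$, whereas you expand the Neumann series and combine per-power Jensen with Cauchy--Schwarz across $t$, yielding the identical intermediate bound), and both then use the splitting of $I-\gamma^2 P^{\mu,\nu}$ in terms of $I-\gamma P^{\mu,\nu}$ together with the Bellman variance identity \eqref{equ:bellman_variance} and $\|\Sigma_\cG^{\mu,\nu}\|_\infty\leq \gamma^2/(1-\gamma)^2$ to conclude. The only differences are organizational (you invert the identity to write $\sigma^2=\gamma^{-2}(I-\gamma^2P)\Sigma_\cG^{\mu,\nu}$ rather than working with $(I-\gamma^2P)^{-1}\sigma^2$), and your constant $(1+\gamma)$ is marginally sharper than the paper's $2$ at that step.
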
 
\begin{proof}
	The proof follows that of \cite[Lemma 3]{yang2019optimality}. For any positive vector $v$, by Jensen's inequality, we have
	\#\label{equ:lemma:bellman_variance_trash_1}
	\|(I-\gamma P^{\mu,\nu})^{-1}\sqrt{v}\|_\infty=\frac{1}{1-\gamma}\|(1-\gamma)(I-\gamma P^{\mu,\nu})^{-1}\sqrt{v}\|_\infty\leq \sqrt{\Big\|\frac{1}{1-\gamma}(I-\gamma P^{\mu,\nu})^{-1}{v}\Big\|_\infty}.
	\#
	Also, observe that
	\#
	&\|(I-\gamma P^{\mu,\nu})^{-1}{v}\|_\infty =\|(I-\gamma P^{\mu,\nu})^{-1}(I-\gamma^2 P^{\mu,\nu})(I-\gamma^2 P^{\mu,\nu})^{-1}{v}\|_\infty\notag\\
	&\quad = \big\|[(I-\gamma P^{\mu,\nu})^{-1}(1-\gamma + \gamma - \gamma^2P^{\mu,\nu})](I-\gamma^2 P^{\mu,\nu})^{-1}{v}\big\|_\infty\notag\\
	&\quad=\big\|[(1-\gamma)(I-\gamma P^{\mu,\nu})^{-1}+\gamma I](I-\gamma^2 P^{\mu,\nu})^{-1}{v}\big\|_\infty\notag\\
	&\quad\leq (1-\gamma)\big\|(I-\gamma P^{\mu,\nu})^{-1}(I-\gamma^2 P^{\mu,\nu})^{-1}{v}\big\|_\infty+\gamma \big\|(I-\gamma^2 P^{\mu,\nu})^{-1}{v}\big\|_\infty\notag\\
	&\quad \leq \frac{1-\gamma}{1-\gamma}\big\|(I-\gamma^2 P^{\mu,\nu})^{-1}{v}\big\|_\infty+\gamma \big\|(I-\gamma^2 P^{\mu,\nu})^{-1}{v}\big\|_\infty\leq 2\big\|(I-\gamma^2 P^{\mu,\nu})^{-1}{v}\big\|_\infty.\label{equ:lemma:bellman_variance_trash_2}
	\#
	Combining \eqref{equ:lemma:bellman_variance_trash_1} and \eqref{equ:lemma:bellman_variance_trash_2} yields 
	\#\label{equ:lemma:bellman_variance_trash_3}
	\|(I-\gamma P^{\mu,\nu})^{-1}\sqrt{v}\|_\infty\leq \sqrt{\Big\|\frac{2}{1-\gamma}(I-\gamma^2 P^{\mu,\nu})^{-1}{v}\Big\|_\infty}. 
	\#
	In addition, by \eqref{equ:bellman_variance},  we have $\Sigma_\cG^{\mu,\nu}=\gamma^2(I-\gamma^2 P^{\mu,\nu})^{-1}\Var_P(V_\cG^{\mu,\nu})$. Letting $v=\Var_P(V_\cG^{\mu,\nu})$ in \eqref{equ:lemma:bellman_variance_trash_3} and noticing that $\|\Sigma_\cG^{\mu,\nu}\|_\infty\leq \gamma^2/(1-\gamma)^2$ completes the proof.
\end{proof}

Finally, if we just apply Hoeffding's inequality, we obtain the following concentration argument, 
upon which we will improve to obtain our final results.

\begin{lemma}\label{lemma:crude_bnd}
	Let $(\mu^*,\nu^*)$ be the Nash equilibrium  policy pair under the actual model $\cG$. Then, for any $\delta\in(0,1]$,  with probability at least $1-\delta$, we have 
	\small
	\$
	\|Q^*-\hat{Q}^{\mu^*,\nu^*}\|_\infty\leq \Delta_{\delta,N},~~\|Q^*-\hat{Q}^{\mu^*,*}\|_\infty\leq \Delta_{\delta,N},~~ \|Q^*-\hat{Q}^{*,\nu^*}\|_\infty\leq \Delta_{\delta,N},~~ \|Q^*-\hat{Q}^*\|_\infty\leq \Delta_{\delta,N},
	\$
	\normalsize
	where 
	\$
	\Delta_{\delta,N}:=\frac{\gamma}{(1-\gamma)^2}\sqrt{\frac{2\log(2|\cS||\cA||\cB|/\delta)}{N}}. 
	\$
\end{lemma}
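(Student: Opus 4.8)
The plan is to show that all four quantities are dominated by the single random scalar $\frac{\gamma}{1-\gamma}\|(P-\hat{P})V^*\|_\infty$, and then to apply Hoeffding's inequality to this scalar, crucially exploiting that $V^*$ is a fixed vector determined by the true MG $\cG$ and is therefore independent of the samples used to form $\hat{P}$. To set up the reduction I would view each of $\hat{Q}^{\mu^*,\nu^*}$, $\hat{Q}^{\mu^*,*}$, $\hat{Q}^{*,\nu^*}$, and $\hat{Q}^*$ as the unique fixed point of a $\gamma$-contraction operator in $\|\cdot\|_\infty$ built from the empirical kernel $\hat{P}$: respectively the fixed-policy operator $\hat{T}^{\mu^*,\nu^*}$, the best-response operators $\hat{T}^{\mu^*}$ (min over $\vartheta\in\Delta(\cB)$ against $\mu^*$) and $\hat{T}^{\nu^*}$ (max over $u\in\Delta(\cA)$ against $\nu^*$), and the minimax Bellman operator $\hat{T}$ whose per-state nonlinearity is the matrix-game value $\max_u\min_\vartheta$; contractivity of the last is the classical Shapley argument \citep{shapley1953stochastic}. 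Writing $T^\bullet$ for the analogous operators built from $P$, in every case the corresponding true $Q$-value is $Q^*$ itself, since $(\mu^*,\nu^*)$ is a NE and hence $V^{\mu^*,\nu^*}=V^{\mu^*,*}=V^{*,\nu^*}=V^*$. The standard contraction estimate $\|Q^*-\hat{Q}^\bullet\|_\infty=\|T^\bullet Q^*-\hat{T}^\bullet\hat{Q}^\bullet\|_\infty\le\|T^\bullet Q^*-\hat{T}^\bullet Q^*\|_\infty+\gamma\|Q^*-\hat{Q}^\bullet\|_\infty$ then yields $\|Q^*-\hat{Q}^\bullet\|_\infty\le\frac{1}{1-\gamma}\|T^\bullet Q^*-\hat{T}^\bullet Q^*\|_\infty$ in all four cases.

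The key simplification is of the one-step operator gap $T^\bullet Q^*-\hat{T}^\bullet Q^*$. In each case this difference equals $\gamma(P-\hat{P})$ applied to the vector whose $s'$-th entry is the scalar obtained by applying the operator's per-state nonlinearity to $Q^*(s',\cdot,\cdot)$ --- namely $\EE_{a\sim\mu^*,b\sim\nu^*}[Q^*(s',a,b)]$ for $\hat{T}^{\mu^*,\nu^*}$, the best-response value for $\hat{T}^{\mu^*}/\hat{T}^{\nu^*}$, and the matrix-game value for $\hat{T}$. Because $Q^*$ is the NE $Q$-value, each of these scalars collapses to $V^*(s')$; for instance $\min_{\vartheta\in\Delta(\cB)}\EE_{a\sim\mu^*,b\sim\vartheta}[Q^*(s',a,b)]=V^{\mu^*,*}(s')=V^*(s')$ and $\max_{u\in\Delta(\cA)}\min_{\vartheta\in\Delta(\cB)}\EE_{a\sim u,b\sim\vartheta}[Q^*(s',a,b)]=V^*(s')$. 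Hence $T^\bullet Q^*-\hat{T}^\bullet Q^*=\gamma(P-\hat{P})V^*$ identically, so that $\|Q^*-\hat{Q}^\bullet\|_\infty\le\frac{\gamma}{1-\gamma}\|(P-\hat{P})V^*\|_\infty$ for all four $\bullet$ at once. One could reach the same conclusion from Lemma \ref{lemma:comp_wise_bnd} after rewriting the resolvent identity in terms of the true value $V^*$ and invoking Lemma \ref{lemma:inf_norm_error}; I prefer the contraction route precisely because it never introduces the sample-dependent empirical values $\hat{V}^{\mu^*,*}$ or $\hat{V}^*$.

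Finally I would apply Hoeffding's inequality. For each fixed triple $(s,a,b)$, the scalar $\hat{P}_{s,a,b}^\top V^*$ is the empirical mean of $N$ i.i.d. samples $V^*(s_i')$ with $s_i'\sim P(\cdot\given s,a,b)$, each lying in $[0,1/(1-\gamma)]$ with mean $P_{s,a,b}^\top V^*$, and $V^*$ is fixed and independent of these samples. Hoeffding together with a union bound over the $|\cS||\cA||\cB|$ triples gives, with probability at least $1-\delta$, $\|(P-\hat{P})V^*\|_\infty\le\frac{1}{1-\gamma}\sqrt{2\log(2|\cS||\cA||\cB|/\delta)/N}=\frac{1-\gamma}{\gamma}\Delta_{\delta,N}$, whence $\frac{\gamma}{1-\gamma}\|(P-\hat{P})V^*\|_\infty\le\Delta_{\delta,N}$. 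Since all four target bounds are dominated by this one quantity, a single union bound suffices and they hold simultaneously on the same event, which is exactly why the logarithmic factor is $\log(2|\cS||\cA||\cB|/\delta)$ rather than carrying an extra factor of four. The only real obstacle to keep in view is the independence requirement: the best-response side of Lemma \ref{lemma:comp_wise_bnd} involves, in one direction, the sample-dependent values $\hat{V}^{\mu^*,*}$, to which Hoeffding cannot be applied directly (controlling them naively would require a covering/union bound over policies and cost an extra $\sqrt{|\cS|}$ factor); collapsing every inner value to the fixed $V^*$ through the NE identities is exactly what removes this dependence and keeps the crude bound free of any $|\cS|$ inflation. This same dependence is what the later absorbing-MG machinery is designed to handle for the sharper Bernstein bound, whereas here the Hoeffding bound sidesteps it.
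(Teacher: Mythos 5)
Your proof is correct and takes essentially the same route as the paper: both arguments reduce all four errors simultaneously to $\frac{\gamma}{1-\gamma}\|(P-\hat{P})V^*\|_\infty$ via a fixed-point/self-bounding argument that exploits the NE identities ($V^{\mu^*,\nu^*}=V^{\mu^*,*}=V^{*,\nu^*}=V^*$) and the non-expansiveness of the $\min$/$\max$ operators, and then apply Hoeffding with a single union bound to the deterministic vector $V^*$. The only difference is packaging: you fold the nonlinearity into Shapley-type contraction operators and invoke one uniform contraction estimate, whereas the paper works with the policy-evaluation operator $\hat{\cT}_{\mu,\nu}$ for the appropriate empirical policies and handles the non-expansiveness step case by case before solving the same self-bounding inequality.
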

\begin{proof} 
	First note that $V^*$ is fixed and independent of the randomness in $\hat{P}$. Due to the boundedness of $V^*$ that $\|V^*\|_\infty\leq (1-\gamma)^{-1}$, and the union of  Hoeffding bounds over $\cS\times\cA\times\cB$, we have that with probability at least $1-\delta$
	\#\label{equ:cond_prob_event}
	\big\|(\hat{P}-P)V^*\big\|_\infty\leq \frac{1}{1-\gamma}\cdot\sqrt{\frac{2\log(2|\cS||\cA||\cB|/\delta)}{N}}.
	\#
  	On the other hand, let $\cT_{\mu,\nu}$ be the Bellman operator under the true transition model $P$, using any   joint policy $(\mu,\nu)$, i.e., for any $s\in\cS$ and $(s,a,b)\in\cS\times\cA\times\cB$, $V\in\RR^{|\cS|}$ and $Q\in\RR^{|\cS|\times|\cA|\times|\cB|}$:
  	\$  
  	\cT_{\mu,\nu}(V)(s)&=\EE_{a\sim \mu(\cdot\given s),b\sim \nu(\cdot\given s)}\big[r(s,a,b)+\gamma\cdot P(\cdot\given s,a,b)^\top V\big]\\
  	\cT_{\mu,\nu}(Q)(s,a,b)&=r(s,a,b)+\gamma \cdot\EE_{s'\sim P(\cdot\given s,a,b),a'\sim \mu(\cdot\given s'),b'\sim \nu(\cdot\given s')}\big[Q(s',a',b')\big].
  	\$
  	Similarly, let $\hat{\cT}_{\mu,\nu}$ be the corresponding operator defined under the estimated transition $\hat{P}$. 
  	Note that $\hat{Q}^{\mu,\nu}$ and $Q^*$ are the fixed points of $\hat{\cT}_{\mu,\nu}$ and $\cT_{\mu^*,\nu^*}$, respectively. 
  	We thus have
  	\#
  	&\|Q^*-\hat{Q}^{\mu,\nu}\|_\infty=\|\cT_{\mu^*,\nu^*}Q^*-\hat{\cT}_{\mu,\nu}\hat{Q}^{\mu,\nu}\|_\infty\notag\\
  	&\quad\leq \|\cT_{\mu^*,\nu^*}Q^*-r-\gamma\hat{P}^{\mu^*,\nu^*}Q^*\|_\infty+\|r+\gamma\hat{P}^{\mu^*,\nu^*}Q^*-\hat{\cT}_{\mu,\nu}\hat{Q}^{\mu,\nu}\|_\infty\notag\\
  	&\quad= \gamma\|{P}^{\mu^*,\nu^*}Q^*-\hat{P}^{\mu^*,\nu^*}Q^*\|_\infty+\gamma\|\hat{P}^{\mu^*,\nu^*}Q^*-\hat{P}^{\mu,\nu}\hat{Q}^{\mu,\nu}\|_\infty\notag\\
  	&\quad= \gamma\|{P}V^*-\hat{P}V^*\|_\infty+\gamma\|\hat{P}V^*-\hat{P}\hat{V}^{\mu,\nu}\|_\infty\leq \gamma\|({P}-\hat{P})V^*\|_\infty+\gamma\|V^*-\hat{V}^{\mu,\nu}\|_\infty\label{equ:crude_bnd_trash_1}.
  	\#
  	
  	To show the first argument, letting $\mu=\mu^*$ and $\nu=\nu^*$, we have 
  	\#\label{equ:crude_bnd_trash_2}
	\gamma\|V^*-\hat{V}^{\mu^*,\nu^*}\|_\infty &=\gamma\big\|\EE_{a\sim \mu^*(\cdot\given s),b\sim \nu^*(\cdot\given s)}[Q^*(\cdot,a,b)]-\EE_{a\sim \mu^*(\cdot\given s),b\sim \nu^*(\cdot\given s)}[\hat{Q}^{\mu^*,\nu^*}(\cdot,a,b)]\big\|_\infty\notag\\
	&\leq \gamma\|Q^*-\hat{Q}^{\mu^*,\nu^*}\|_\infty. 
  	\#
  	Using   \eqref{equ:crude_bnd_trash_2} to bound  the last term in  \eqref{equ:crude_bnd_trash_1}, and solving for $\|Q^*-\hat{Q}^{\mu^*,\nu^*}\|_\infty$ from \eqref{equ:crude_bnd_trash_1}, we obtain the first argument.

	For the second argument, letting $\mu=\mu^*$ and $\nu=\hat{\nu(\mu^*)}$ (note that $\hat{Q}^{\mu^*,*}=\hat{Q}^{\mu^*,\hat{\nu(\mu^*)}}$), we have
	\small 
	\#\label{equ:crude_bnd_trash_3}
	&\gamma\|V^*-\hat{V}^{\mu^*,*}\|_\infty =\gamma\cdot\big\|\min_{\vartheta\in\Delta(\cB)}\EE_{a\sim \mu^*(\cdot\given s),b\sim \vartheta}[Q^*(\cdot,a,b)]-\min_{\vartheta\in\Delta(\cB)}\EE_{a\sim \mu^*(\cdot\given s),b\sim \vartheta}[\hat{Q}^{\mu^*,*}(\cdot,a,b)]\big\|_\infty\notag\\
	&\quad\leq \gamma\cdot\max_{\vartheta\in\Delta(\cB)}\big\|\EE_{a\sim \mu^*(\cdot\given s),b\sim \vartheta}[Q^*(\cdot,a,b)]-\EE_{a\sim \mu^*(\cdot\given s),b\sim \vartheta}[\hat{Q}^{\mu^*,*}(\cdot,a,b)]\big\|_\infty \leq \gamma\|Q^*-\hat{Q}^{\mu^*,*}\|_\infty, 
  	\#
  	\normalsize
where the first inequality is due to the non-expansiveness of the $\min$ operator. 
  	Using   \eqref{equ:crude_bnd_trash_3} to bound  the last term in  \eqref{equ:crude_bnd_trash_1}, and solving for $\|Q^*-\hat{Q}^{\mu^*,*}\|_\infty$ from \eqref{equ:crude_bnd_trash_1}, we obtain the second argument. Similarly, we can obtain the third argument. 
	
	For the fourth argument, letting $\mu=\hat{\mu}^*$ and $\nu=\hat{\nu}^*$, the NE policy under $\hat{P}$ (note that $\hat{Q}^{\hat{\mu}^*,\hat{\nu}^*}=\hat{Q}^{*}$), we have
	\$
	&\gamma\|V^*-\hat{V}^{*}\|_\infty =\gamma\cdot\big\|\max_{u\in\Delta(\cA)}\min_{\vartheta\in\Delta(\cB)}\EE_{a\sim u,b\sim \vartheta}[Q^*(\cdot,a,b)]-\max_{u\in\Delta(\cA)}\min_{\vartheta\in\Delta(\cB)}\EE_{a\sim u,b\sim \vartheta}[\hat{Q}^{*}(\cdot,a,b)]\big\|_\infty\notag\\
	&\quad\leq \gamma\cdot\max_{u\in\Delta(\cA)}\big\|\min_{\vartheta\in\Delta(\cB)}\EE_{a\sim u,b\sim \vartheta}[Q^*(\cdot,a,b)]- \min_{\vartheta\in\Delta(\cB)}\EE_{a\sim u,b\sim \vartheta}[\hat{Q}^{*}(\cdot,a,b)]\big\|_\infty 
\leq \gamma\|Q^*-\hat{Q}^{*}\|_\infty, 
  	\$
  	where the inequalities are due to the non-expansivenesses of both the $\max$ and the $\min$ operators.  
	This, combined with \eqref{equ:crude_bnd_trash_1}, completes the proof. 
\end{proof}

\kzedit{The argument above will lead to a crude bound, with an additional $1/(1-\gamma)$ dependence compared to our main results  in Theorem \ref{thm:main_res} and Theorem \ref{thm:main_res_2}. The key reason is that we used some self-bounding of the error terms, e.g., $\|Q^*-\hat{Q}^{\mu^*,\nu^*}\|_\infty$, which appears on both sides of the inequality, with a $\gamma$ discounting on the right-hand side. This way, by subtracting the term on the right-hand side, we have an additional $1/(1-\gamma)$ order after dividing $(1-\gamma)$ on both sides. This was essentially due to the fact that the direct concentration argument can only deal with the concentration of $\big\|(\hat{P}-P)V^*\big\|_\infty$, where $\hat{P}$ and $V^*$ are {\it not} dependent as $V^*$ is a fixed vector. To obtain sharper rates, one has to directly deal with the quantities as $\big\|(\hat{P}-P)\hat{V}\big\|_\infty$, where $\hat{V}$ denotes some value function obtained from the empirical model, and is correlated with $\hat{P}$. Properly handling this interdependence will be the focus of our proof next.}

\subsection{An Auxiliary Markov Game}\label{subsec:aux_game}

Motivated by the \emph{absorbing MDP} technique in \cite{yang2019optimality}, we introduce an \emph{absorbing Markov game}, in order to handle the interdependence between $\hat{P}$ and $\hat{V}^{\mu,\nu}$, for any $\mu,\nu$ (which may also depend on $\hat{P}$), which will show up frequently in the analysis.  

We now define a new Markov game $\cG_{s,u}$ as follows (with $s\in\cS$ and $u\in\RR$ a constant): $\cG_{s,u}$ is identical to $\cG$, except that $P_{\cG_{s,u}}(s\given s,a,b)=1$ for all $(a,b)\in\cA\times\cB$, namely, state $s$ is an \emph{absorbing} state; and the instantaneous reward at $s$ is always $(1-\gamma)u$. The rest of the reward function and the transition model of $\cG_{s,u}$ are the same as those of $\cG$. For notational simplicity, we now use $X^{\mu,\nu}_{s,u}$ to denote  $X^{\mu,\nu}_{\cG_{s,u}}$, where $X$ can be either the value functions $Q$ and $V$, or the reward function $r$, under the model $\cG_{s,u}$. 
Obviously, for any policy pair $(\mu,\nu)$, $V^{\mu,\nu}_{s,u}(s)=u$ for the absorbing state $s$. 

In addition, we define $U_s$ for some state $s$ to choose $u$ from, which is a set of evenly spaced elements in the interval $[V^*(s)-\Delta,V^*(s)+\Delta]$ for some $\Delta>0$, i.e., $U_s\subset [V^*(s)-\Delta,V^*(s)+\Delta]$. An appropriately chosen  size of $|U_s|$ will be the key in the proof. We also use $\hat{P}_{\cG_{s,u}}$ to denote the  transition model of the absorbing MG for the empirical MG $\hat{\cG}$, denoted by $\hat{\cG}_{s,u}$.  Specifically, at all non-absorbing states, $\hat{P}_{\cG_{s,u}}$ is identical to $\hat{P}$; while at the absorbing state, $\hat{P}_{\cG_{s,u}}(s\given s,a,b)=1$ for any $(a,b)\in\cA\times\cB$. The corresponding value functions are for short denoted by $\hat{V}^{\mu,\nu}_{s,u}$ and $\hat{Q}^{\mu,\nu}_{s,u}$. 
Similar as in the original MG, we also use $\hat{V}^{*}_{s,u}$ to denote the NE value under the model $\hat{\cG}_{s,u}$, and use $\hat{V}^{\mu,*}_{s,u}$ and $\hat{V}^{*,\nu}_{s,u}$ to denote the best-response values of some given $\mu$ and $\nu$, under the model $\hat{\cG}_{s,u}$. 
Now we first have the following lemma based on Bernstein's inequality; see a similar argument in Lemma 5 in \cite{yang2019optimality}.  

\begin{lemma}\label{lemma:bernstein_inq}
	For fixed state $s$, action $(a,b)$, a finite set $U_s$, 
	and $\delta> 0$, it holds that for all $u\in U_s$, with probability greater than $1-\delta$, 
	\$
	&\big|(P_{s,a,b}-\hat{P}_{s,a,b})\cdot \hat{V}^*_{s,u}\big|\leq \sqrt{\frac{2\log(4|U_s|/\delta)\cdot \Var_{P_{s,a,b}}(\hat{V}^*_{s,u})}{N}}+\frac{2\log(4|U_s|/\delta)}{3(1-\gamma)N},\\
	&\big|(P_{s,a,b}-\hat{P}_{s,a,b})\cdot \hat{V}^{\mu^*,*}_{s,u}\big|\leq \sqrt{\frac{2\log(4|U_s|/\delta)\cdot \Var_{P_{s,a,b}}(\hat{V}^{\mu^*,*}_{s,u})}{N}}+\frac{2\log(4|U_s|/\delta)}{3(1-\gamma)N},
	\$
	\$
	&\big|(P_{s,a,b}-\hat{P}_{s,a,b})\cdot \hat{V}^{*,\nu^*}_{s,u}\big|\leq \sqrt{\frac{2\log(4|U_s|/\delta)\cdot \Var_{P_{s,a,b}}(\hat{V}^{*,\nu^*}_{s,u})}{N}}+\frac{2\log(4|U_s|/\delta)}{3(1-\gamma)N},\\
	&\big|(P_{s,a,b}-\hat{P}_{s,a,b})\cdot \hat{V}^{\mu^*,\nu^*}_{s,u}\big|\leq \sqrt{\frac{2\log(4|U_s|/\delta)\cdot \Var_{P_{s,a,b}}(\hat{V}^{\mu^*,\nu^*}_{s,u})}{N}}+\frac{2\log(4|U_s|/\delta)}{3(1-\gamma)N},\\
	&\big|(P_{s,a,b}-\hat{P}_{s,a,b})\cdot {V}^{\hat{\mu}_{s,u},*}\big|\leq \sqrt{\frac{2\log(4|U_s|/\delta)\cdot \Var_{P_{s,a,b}}({V}^{\hat{\mu}_{s,u},*})}{N}}+\frac{2\log(4|U_s|/\delta)}{3(1-\gamma)N},\\
	&\big|(P_{s,a,b}-\hat{P}_{s,a,b})\cdot {V}^{*,\hat{\nu}_{s,u}}\big|\leq \sqrt{\frac{2\log(4|U_s|/\delta)\cdot \Var_{P_{s,a,b}}({V}^{*,\hat{\nu}_{s,u}})}{N}}+\frac{2\log(4|U_s|/\delta)}{3(1-\gamma)N},
	\$
	where  $P_{s,a,b}$ and $\hat{P}_{s,a,b}$ are the transition models extracted from the original game $\cG$ and its empirical version $\hat{\cG}$, respectively (not related to either $\cG_{s,u}$ or $\hat{\cG}_{s,u}$), and $(\hat{\mu}_{s,a},\hat{\nu}_{s,a})$ is the output of the \textbf{Planning Oracle} using the auxiliary  empirical model $\hat{\cG}_{s,u}$
\end{lemma}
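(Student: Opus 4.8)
The plan is to exploit the one feature that makes the auxiliary game useful: because state $s$ has a \emph{deterministic} self-loop in $\hat{\cG}_{s,u}$, none of the value vectors $\hat{V}^*_{s,u}$, $\hat{V}^{\mu^*,*}_{s,u}$, $\hat{V}^{*,\nu^*}_{s,u}$, $\hat{V}^{\mu^*,\nu^*}_{s,u}$, nor the oracle outputs $\hat{\mu}_{s,u},\hat{\nu}_{s,u}$, depend on the empirical transition $\hat{P}_{s,a,b}$ out of state $s$. Indeed, at every non-absorbing state $\hat{\cG}_{s,u}$ transitions according to $\hat{P}$, while at $s$ it transitions by the fixed self-loop with fixed reward $(1-\gamma)u$; hence each of these objects is a deterministic function of $\{\hat{P}_{s',\cdot,\cdot}:s'\neq s\}$ together with the (sample-independent) reward and the fixed policies $\mu^*,\nu^*$. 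Since the generative model draws the $N$ samples at $(s,a,b)$ independently of those at all other state-action pairs, $\hat{P}_{s,a,b}$ is \emph{independent} of every one of these vectors. This is precisely the decoupling that removes the troublesome dependence between $\hat{P}$ and the value functions it generates; note that the $\min\max$ structure of the equilibrium operator plays no role here, as the NE value is still just a deterministic function of the remaining transition estimates. The same reasoning covers the last two bounds: $V^{\hat{\mu}_{s,u},*}$ and $V^{*,\hat{\nu}_{s,u}}$ are true-game best-response values against policies that, by the above, are independent of $\hat{P}_{s,a,b}$.

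Given this independence, I would fix $u\in U_s$ and condition on all samples drawn at state-action pairs whose first coordinate differs from $s$. Under this conditioning, the relevant vector $W\in\{\hat{V}^*_{s,u},\hat{V}^{\mu^*,*}_{s,u},\dots,V^{\hat{\mu}_{s,u},*},V^{*,\hat{\nu}_{s,u}}\}$ is a deterministic vector with $\|W\|_\infty\leq 1/(1-\gamma)$ (rewards lie in $[0,1]$ and the absorbing reward $(1-\gamma)u$ stays in $[0,1]$ for the relevant range $U_s$), whereas $\hat{P}_{s,a,b}$ is still the empirical average of $N$ i.i.d.\ draws from $P_{s,a,b}$, independent of the conditioning. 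The scalar $(P_{s,a,b}-\hat{P}_{s,a,b})\cdot W$ is then exactly the centered empirical mean of the bounded i.i.d.\ variables $W(s_i')$ with $s_i'\sim P_{s,a,b}$, whose per-sample variance equals $\Var_{P_{s,a,b}}(W)$. Bernstein's inequality therefore gives, with conditional probability at least $1-\delta'$,
\[
\big|(P_{s,a,b}-\hat{P}_{s,a,b})\cdot W\big|\leq \sqrt{\frac{2\Var_{P_{s,a,b}}(W)\log(2/\delta')}{N}}+\frac{2\log(2/\delta')}{3(1-\gamma)N},
\]
and since this holds for every realization of the conditioning variables, it holds unconditionally by the tower property.

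Finally I would take a union bound over the finitely many $u\in U_s$ together with the two sides of each event, choosing $\delta'=\delta/(2|U_s|)$ so that $\log(2/\delta')=\log(4|U_s|/\delta)$; this yields all six inequalities simultaneously for every $u\in U_s$ with probability at least $1-\delta$, matching the stated form. I expect the only genuinely delicate point to be the independence claim in the first step, which is the entire purpose of the absorbing construction: one must verify that the equilibrium and best-response computations in $\hat{\cG}_{s,u}$ truly never query $\hat{P}_{s,a,b}$. Everything downstream is a routine conditional Bernstein bound plus a union bound, and the boundedness of $W$ by $1/(1-\gamma)$ is a standard check once the range of $U_s$ is fixed.
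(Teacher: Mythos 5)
Your proof is correct and takes essentially the same route as the paper's: the key observation that the absorbing construction makes $\hat{P}_{s,a,b}$ independent of all six value vectors (each being determined by the true game $\cG$ and/or the auxiliary empirical game $\hat{\cG}_{s,u}$, neither of which involves the samples at $(s,a,b)$), followed by Bernstein's inequality and a union bound over $u\in U_s$. Your conditioning/tower-property formalization and the explicit verification that the oracle outputs $\hat{\mu}_{s,u},\hat{\nu}_{s,u}$ never query $\hat{P}_{s,a,b}$ merely spell out details that the paper's terse proof leaves implicit.
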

\begin{proof}
	The key observation is that  the random variables $\hat{P}_{s,a,b}$ and $\hat{V}^{*}_{s,u}$ are independent. Using Bernstein's inequality along with a union bound over all $u\in U_s$, we obtain the first inequality. The other inequalities   follow similarly, as  $\hat{P}_{s,a,b}$ is independent of  $\hat{V}^{\mu^*,*}_{s,u}$, $\hat{V}^{*,\nu^*}_{s,u}$,  $\hat{V}^{\mu^*,\nu^*}_{s,u}$,  ${V}^{\hat{\mu}_{s,u},*}$, and ${V}^{*,\hat{\nu}_{s,u}}$. This is because the latter terms are  all decided by the original game $\cG$, and/or  the auxiliary empirical game $\hat{\cG}_{s,u}$ (not the original empirical game $\hat{\cG}$). 
\end{proof}

Note that the arguments in Lemma \ref{lemma:bernstein_inq} do not hold, if we replace $\hat{V}^*_{s,u}$ by $\hat{V}^*$, or $\hat{V}^{\mu^*,*}_{s,u}$ by $\hat{V}^{\mu^*,*}$, or $\hat{V}^{*,\nu^*}_{s,u}$ by $\hat{V}^{*,\nu^*}$. \kzedit{It will neither hold if we replace $\hat{V}^{\mu^*,*}_{s,u}$  and ${V}^{\hat{\mu}_{s,u},*}$ by some $\hat{V}^{\mu,*}$ and ${V}^{\mu,*}$, for any $\mu$ that is dependent on $\hat{P}$, e.g., the NE policy $\hat{\mu}^*$ for the original empirical game $\hat{\cG}$. This is one of the key subtleties that are worth emphasizing.} 

Next we establish two helpful lemmas that help guide the choices of $U_s$, so that $\hat{V}^*_{s,u}$ (resp. $\hat{V}^{\mu^*,*}_{s,u}$, $\hat{V}^{*,\nu^*}_{s,u}$, and $\hat{V}^{\mu^*,\nu^*}_{s,u}$) will be a good approximate of $\hat{V}^*$ (resp. $\hat{V}^{\mu^*,*}$, $\hat{V}^{*,\nu^*}$, and $\hat{V}^{\mu^*,\nu^*}$). 

\begin{lemma}\label{lemma:helper_u_choose_1}
For the absorbing state $s$, and any joint policy $(\mu,\nu)$, suppose that $u^*=V^*_{\cG}(s)$, $u^{\mu,*}=V^{\mu,*}_{\cG}(s)$, $u^{*,\nu}=V^{*,\nu}_{\cG}(s)$, and $u^{\mu,\nu}=V^{\mu,\nu}_{\cG}(s)$. Then, 
\$
V^*_{\cG}=V^*_{s,u^*}\qquad V^{\mu,*}_{\cG}=V^{\mu,*}_{\cG_{s,u^{\mu,*}}} \qquad V^{*,\nu}_{\cG}=V^{*,\nu}_{\cG_{s,u^{*,\nu}}}\qquad V^{\mu,\nu}_{\cG}=V^{\mu,\nu}_{\cG_{s,u^{\mu,\nu}}}. 
\$
\end{lemma}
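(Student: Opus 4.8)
The plan is to observe that each of the four value functions on the left-hand side is the \emph{unique fixed point} of an appropriate Bellman operator (fixed-policy, best-response, or minimax), and that inserting the correct absorbing value $u$ makes the very same vector a fixed point of the corresponding operator for the absorbing game $\cG_{s,u}$. Equality then follows from uniqueness of the fixed point, which itself follows from the $\gamma$-contraction property shared by all these operators. So the real content is the remark that ``absorbing a state at its true value leaves the Bellman fixed point untouched,'' and the proof is a uniform two-part verification repeated four times.

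Concretely, for the fixed-policy identity I would set $W=V^{\mu,\nu}_\cG$ and check that $W$ solves the Bellman equation of $\cG_{s,u^{\mu,\nu}}$ under $(\mu,\nu)$. At every non-absorbing state $s'\neq s$, the reward and transition of $\cG_{s,u}$ coincide with those of $\cG$, so the Bellman update of $W$ at $s'$ is literally the same in the two games; since $W$ is the fixed point in $\cG$, the equation holds at $s'$. At the absorbing state $s$, the reward is the constant $(1-\gamma)u$ and the state self-loops, so the Bellman update reads $(1-\gamma)u+\gamma W(s)$, and demanding this equal $W(s)$ is equivalent to $W(s)=u$; this holds precisely because we chose $u=u^{\mu,\nu}=V^{\mu,\nu}_\cG(s)=W(s)$. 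Hence $W$ is a fixed point of the $\cG_{s,u^{\mu,\nu}}$-operator, and by uniqueness $W=V^{\mu,\nu}_{\cG_{s,u^{\mu,\nu}}}$.

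The remaining three identities follow by the same verification, changing only the operator to match the quantity: the $\min$-over-$\nu$ best-response operator for $V^{\mu,*}$, the $\max$-over-$\mu$ best-response operator for $V^{*,\nu}$, and the minimax operator $\max_{u'\in\Delta(\cA)}\min_{\vartheta\in\Delta(\cB)}$ for the NE value $V^*$. The only point to confirm in each case is that the extra $\min$ or $\max$ is harmless at the absorbing state: there the reward $(1-\gamma)u$ does not depend on $(a,b)$ and the next state is deterministically $s$, so the inner expectation equals $(1-\gamma)u+\gamma W(s)$ for \emph{every} action distribution, and the optimization collapses to this constant—reducing the fixed-point condition at $s$ once more to $W(s)=u$, with $u$ the matching value of $\cG$ at $s$.

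I do not expect a genuine obstacle. The only care required is bookkeeping: pairing each value function with its correct operator, confirming that the absorbing-state condition collapses to $W(s)=u$ uniformly across the fixed-policy, best-response, and minimax variants, and invoking uniqueness of the fixed point (equivalently, contraction of each operator) to upgrade ``$W$ is a fixed point'' to ``$W$ equals the value function of $\cG_{s,u}$.''
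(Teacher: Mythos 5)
Your proposal is correct and follows essentially the same route as the paper's proof: verify that the original value function satisfies the Bellman equation of the absorbing game (trivially at the absorbing state, since the model forces value $u$ there and $u$ was chosen to match; identically at all other states, where the two games coincide), then conclude by uniqueness of the fixed point. Your explicit remarks that the $\min$/$\max$ collapses at the absorbing state and that contraction supplies uniqueness are details the paper leaves implicit, but they do not constitute a different argument.
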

\begin{proof}
	For the first formula, we need to verify that $V^*_{\cG}$ satisfies the optimal (Nash equilibrium)  Bellman equation for the game $\cG_{s,u^*}$. To this end, note that if $s'=s$, then  $u^*=V^{*}_{\cG}(s)$ satisfies the Bellman equation trivially,  since $s$ is absorbing with the value $V^*_{s,u^*}(s)=u^*$.  
	
	On the other hand, 	for any $s'\neq s$, the outgoing transition model at $s'$ in $\cG_{s,u^*}$ is the same as that in $\cG$, and $V^*_{\cG}(s')$ per se satisfies the Bellman equation in $\cG$ (which are the same for $\cG_{s,u^*}$ at these states $s'\neq s$). Thus, $V^*_{\cG}$ satisfies the Bellman equation in $\cG_{s,u^*}$ for all states. This proves the first equation. The proofs for the remaining three equations are analogous. 
\end{proof}

Perfect choices of $u$ have been specified in Lemma \ref{lemma:helper_u_choose_1} above. \kzedit{Moreover, we need to quantify how the value changes if we deviate from these perfect choices, i.e., the robustness to misspecification of $u$ \citep{yang2019optimality}.} This result is formally established in the following lemma; see also Lemma 7 in \cite{yang2019optimality} for a similar result.  

\begin{lemma}\label{lemma:helper_u_choose_2}
	For any state $s$, $u,u'\in\RR$, and joint policy pair $(\mu,\nu)$, we have 
	\$
	&\big\|V^*_{s,u}-V^*_{s,u'}\big\|_\infty\leq |u-u'|,\qquad \big\|V^{\mu,*}_{s,u}-V^{\mu,*}_{s,u'}\big\|_\infty\leq |u-u'|,\\
	&\big\|V^{*,\nu}_{s,u}-V^{*,\nu}_{s,u'}\big\|_\infty\leq |u-u'|,\qquad \big\|V^{\mu,\nu}_{s,u}-V^{\mu,\nu}_{s,u'}\big\|_\infty\leq |u-u'|.
	\$
\end{lemma}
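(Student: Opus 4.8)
The plan is to prove all four inequalities uniformly through a single Bellman-contraction argument, exploiting that $\cG_{s,u}$ and $\cG_{s,u'}$ differ \emph{only} in the reward emitted at the absorbing state $s$. For each of the four value functions I associate the corresponding $\gamma$-contraction Bellman operator on $\RR^{|\cS|}$ for the game $\cG_{s,u}$: the fixed-policy operator for $V^{\mu,\nu}_{s,u}$; the best-response operators (inner $\min$ over $\Delta(\cB)$ for $V^{\mu,*}_{s,u}$, inner $\max$ over $\Delta(\cA)$ for $V^{*,\nu}_{s,u}$); and the Nash operator $V\mapsto \mathrm{val}[\,r+\gamma PV\,]$ for $V^*_{s,u}$. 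In each case the respective value function is the unique fixed point, and the operator is a $\gamma$-contraction in $\|\cdot\|_\infty$: for the fixed-policy case this is standard, and for the other three it follows from the non-expansiveness of $\min$, $\max$, and $\mathrm{val}=\max\min$ already invoked in the proof of Lemma~\ref{lemma:crude_bnd}. Denote the operator for $\cG_{s,u}$ generically by $\cT_{s,u}$ and its fixed point by $V_{s,u}$.

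The key observation is that at every \emph{non-absorbing} state $s''\neq s$ the reward and transition of $\cG_{s,u}$ and $\cG_{s,u'}$ coincide, so $\cT_{s,u}$ and $\cT_{s,u'}$ act identically there; only at $s$ do they differ, through the constant reward $(1-\gamma)u$ versus $(1-\gamma)u'$. Applying $\cT_{s,u}$ to the fixed point $V_{s,u'}$ of $\cT_{s,u'}$ therefore reproduces $V_{s,u'}$ at all $s''\neq s$, while at the absorbing state, using $V_{s,u'}(s)=u'$ (which holds for every policy, and hence for each of the four value functions),
\[
(\cT_{s,u}V_{s,u'})(s)-V_{s,u'}(s) = (1-\gamma)u+\gamma u' - u' = (1-\gamma)(u-u').
\]
Since this is the only nonzero coordinate, $\|\cT_{s,u}V_{s,u'}-V_{s,u'}\|_\infty = (1-\gamma)|u-u'|$.

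Finally I would close with the standard fixed-point perturbation bound: since $V_{s,u}=\cT_{s,u}V_{s,u}$ and $\cT_{s,u}$ is a $\gamma$-contraction,
\[
\|V_{s,u}-V_{s,u'}\|_\infty \le \gamma\|V_{s,u}-V_{s,u'}\|_\infty + \|\cT_{s,u}V_{s,u'}-V_{s,u'}\|_\infty = \gamma\|V_{s,u}-V_{s,u'}\|_\infty + (1-\gamma)|u-u'|,
\]
and rearranging gives $\|V_{s,u}-V_{s,u'}\|_\infty\le |u-u'|$, which is exactly the claim for each of the four value functions. There is no substantial obstacle here; the only points needing care are verifying that the best-response and Nash Bellman operators for the absorbing game are genuine $\gamma$-contractions that agree with those of $\cG_{s,u'}$ away from $s$, and correctly computing the single-coordinate perturbation $(1-\gamma)(u-u')$ so that the contraction factor exactly cancels the $1-\gamma$. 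An equivalent route, if one prefers to avoid invoking operators for the $\min/\max$ cases, is to establish the bound first for a fixed pair $(\mu,\nu)$ and then transfer it to $V^{\mu,*},V^{*,\nu},V^*$ via the fact that $\min$, $\max$, and $\mathrm{val}$ are non-expansive and monotone, so that pointwise at each state $|X(\cG_{s,u})-X(\cG_{s,u'})|\le \sup_{\mu,\nu}\|V^{\mu,\nu}_{s,u}-V^{\mu,\nu}_{s,u'}\|_\infty$.
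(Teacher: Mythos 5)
Your proof is correct, and it takes a genuinely different route from the paper's. The paper argues at the Q-function level: for each of the four cases it couples the two games under a common (near-optimal) policy pair via a saddle-point interchange (e.g.\ $Q^*_{s,u}-Q^*_{s,u'}\leq Q^{\mu^*_{s,u},\nu^*_{s,u'}}_{s,u}-Q^{\mu^*_{s,u},\nu^*_{s,u'}}_{s,u'}$), then uses the resolvent identity $(I-\gamma P^{\mu,\nu})^{-1}(r_{s,u}-r_{s,u'})$ together with $\|r_{s,u}-r_{s,u'}\|_\infty=(1-\gamma)|u-u'|$ and Lemma \ref{lemma:inf_norm_error}, and finally passes from Q to V by non-expansiveness of $\min$/$\max$. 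Your operator-perturbation argument instead treats all four cases uniformly: since the two absorbing games agree off $s$, the Bellman (fixed-policy, best-response, or Shapley) operators satisfy $\|\cT_{s,u}V_{s,u'}-V_{s,u'}\|_\infty=(1-\gamma)|u-u'|$, and the $\gamma$-contraction exactly cancels the factor $1-\gamma$. This is shorter and avoids the case-by-case policy couplings, at the price of invoking the fixed-point characterizations (Shapley's theorem for $V^*_{s,u}$, the Bellman optimality equation of the induced MDP for $V^{\mu,*}_{s,u}$ and $V^{*,\nu}_{s,u}$), which are standard and implicitly used elsewhere in the paper. One practical difference worth noting: the paper's proof establishes the Q-level bounds $\|Q^*_{s,u}-Q^*_{s,u'}\|_\infty\leq|u-u'|$ as intermediate byproducts, and these are explicitly reused later (the proof of Lemma \ref{lemma:V_hat_mu_nu_error} cites ``the proof of Lemma \ref{lemma:helper_u_choose_2}'' for exactly such a bound); your argument as written yields only the stated V-level bounds, though it extends verbatim to the Q-level operators if needed.
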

\begin{proof}
	Note that $\|r_{s,u}-r_{s,u'}\|_\infty=(1-\gamma)|u-u'|$, since the reward functions only differ at $s$, where $r_{s,u}(s,a,b)=(1-\gamma)u$ and $r_{s,u'}(s,a,b)=(1-\gamma)u'$. We denote the NE policy pair in $\cG_{s,u}$ by $(\mu^*_{s,u},\nu^*_{s,u})$.  Thus,
	\# 
	&Q^*_{s,u}-Q^*_{s,u'}=Q^{\mu^*_{s,u},\nu^*_{s,u}}_{s,u}-Q^{\mu^*_{s,u'},\nu^*_{s,u'}}_{s,u'}\leq Q^{\mu^*_{s,u},\nu^*_{s,u'}}_{s,u}-Q^{\mu^*_{s,u},\nu^*_{s,u'}}_{s,u'}\label{equ:robust_mis_1}\\
	&\quad=\big(I-\gamma P^{\mu^*_{s,u},\nu^*_{s,u'}}_{s,u}\big)^{-1}r_{s,u}-\big(I-\gamma P^{\mu^*_{s,u},\nu^*_{s,u'}}_{s,u'}\big)^{-1}r_{s,u'}\label{equ:robust_mis_2}\\
	&\quad = \big(I-\gamma P^{\mu^*_{s,u},\nu^*_{s,u'}}_{s,u}\big)^{-1}\big(r_{s,u}-r_{s,u'}\big)\label{equ:robust_mis_3}\\
	&\quad \leq \frac{\|r_{s,u}-r_{s,u'}\|_\infty}{1-\gamma}=|u-u'|,\label{equ:robust_mis_4}
	\#
	where \eqref{equ:robust_mis_1} uses the fact that at the NE, 
	\$
	V^{\mu^*_{s,u},\nu^*_{s,u}}_{s,u}=\min_{\nu} V^{\mu^*_{s,u},\nu}_{s,u}\leq V^{\mu^*_{s,u},\nu^*_{s,u'}}_{s,u},\quad V^{\mu^*_{s,u'},\nu^*_{s,u'}}_{s,u'}=\max_{\mu}V^{\mu,\nu^*_{s,u'}}_{s,u'}\geq V^{\mu^*_{s,u},\nu^*_{s,u'}}_{s,u'},
	\$
	implying the relationships of the corresponding Q-values; \eqref{equ:robust_mis_2} is by definition; \eqref{equ:robust_mis_3} uses the observation that $P^{\mu^*_{s,u},\nu^*_{s,u'}}_{s,u}$ is the same as $P^{\mu^*_{s,u},\nu^*_{s,u'}}_{s,u'}$ (transition is not affected by the value of $u$). Similarly, we can establish the lower bound that $Q^*_{s,u}-Q^*_{s,u'}\geq -|u-u'|$, which proves $\|Q^*_{s,u}-Q^*_{s,u'}\|_\infty\leq |u-u'|$. Moreover,  we have
	\$
	&\big\|V^{*}_{s,u}-V^{*}_{s,u'}\big\|_\infty=\Big\|\max_{u\in\Delta(\cA)}\min_{\vartheta\in\Delta(\cB)}\EE_{a\sim u,b\sim \vartheta}[Q^{*}_{s,u}(\cdot,a,b)]-\max_{u\in\Delta(\cA)}\min_{\vartheta\in\Delta(\cB)}\EE_{a\sim u,b\sim \vartheta}[Q^{*}_{s,u'}(\cdot,a,b)]\Big\|_\infty\\
	&\quad \leq \max_{u\in\Delta(\cA),\vartheta\in\Delta(\cB)}\Big\|\EE_{a\sim \mu(\cdot\given s),b\sim \vartheta}[Q^{*}_{s,u}(\cdot,a,b)]-\EE_{a\sim \mu(\cdot\given s),b\sim \vartheta}[Q^{*}_{s,u'}(\cdot,a,b)]\Big\|_\infty\\
	&\quad \leq\big\|Q^*_{s,u}-Q^*_{s,u'}\big\|_\infty\leq |u-u'|,
	\$
	which proves the first inequality.  
	
	For the second one, recalling that the best-response policy of $\mu$ under $\cG_{s,u}$ being $\nu_{s,u}(\mu)$, we have
	\#
	&Q^{\mu,*}_{s,u}-Q^{\mu,*}_{s,u'}=\min_{\nu}Q^{\mu,\nu}_{s,u}-Q^{\mu,*}_{s,u'}=\min_{\nu}\big(I-\gamma P^{\mu,\nu}_{s,u}\big)^{-1}r_{s,u}-Q^{\mu,*}_{s,u'}\label{equ:robust_mis2_1}\\
	&\quad\leq \big(I-\gamma P^{\mu,\nu_{s,u'}(\mu)}_{s,u}\big)^{-1}r_{s,u}-\big(I-\gamma P^{\mu,\nu_{s,u'}(\mu)}_{s,u'}\big)^{-1}r_{s,u'}\label{equ:robust_mis2_2}\\
	&\quad = \big(I-\gamma P^{\mu,\nu_{s,u'}(\mu)}_{s,u}\big)^{-1}\big(r_{s,u}-r_{s,u'}\big)\leq \frac{\|r_{s,u}-r_{s,u'}\|_\infty}{1-\gamma}=|u-u'|,\label{equ:robust_mis2_4}
	\#
	where \eqref{equ:robust_mis2_1} uses the definition of a best-response value, \eqref{equ:robust_mis2_2} plugs in the best-response policy $\nu_{s,u'}(\mu)$, and \eqref{equ:robust_mis2_4} also uses the fact that the transition does not depend on the value $u$. A lower bound can be established by noticing that $Q^{\mu,*}_{s,u'}=\min_{\nu}Q^{\mu,\nu}_{s,u'}\leq Q^{\mu,\nu_{s,u}(\mu)}_{s,u'}$. 
	This proves $\|Q^{\mu,*}_{s,u}-Q^{\mu,*}_{s,u'}\|_\infty\leq |u-u'|$.  Furthermore, notice that
	\$
	&\big\|V^{\mu,*}_{s,u}-V^{\mu,*}_{s,u'}\big\|_\infty=\Big\|\min_{\vartheta\in\Delta(\cB)}\EE_{a\sim \mu(\cdot\given s),b\sim \vartheta}[Q^{\mu,*}_{s,u}(\cdot,a,b)]-\min_{\vartheta\in\Delta(\cB)}\EE_{a\sim \mu(\cdot\given s),b\sim \vartheta}[Q^{\mu,*}_{s,u'}(\cdot,a,b)]\Big\|_\infty\\
	&\qquad \leq \max_{\vartheta\in\Delta(\cB)}\Big\|\EE_{a\sim \mu(\cdot\given s),b\sim \vartheta}[Q^{\mu,*}_{s,u}(\cdot,a,b)]-\EE_{a\sim \mu(\cdot\given s),b\sim \vartheta}[Q^{\mu,*}_{s,u'}(\cdot,a,b)]\Big\|_\infty\\
	&\qquad \leq \big\|Q^{\mu,*}_{s,u}-Q^{\mu,*}_{s,u'}\big\|_\infty\leq |u-u'|,
	\$
	which proves the second inequality. 
	Similar arguments can also be used to establish the third and the fourth inequalities.  
	This completes the proof. 
\end{proof}

We are now ready to show the main result in this section.

\begin{lemma}\label{lemma:absorb_main_res_1}
	For any state $s$, joint action pair $(a,b)$, and a finite set $U_s$, 
 	with probability greater than $1-\delta$, we have
	\small
	\$
	\big|(P_{s,a,b}-\hat{P}_{s,a,b}) \hat{V}^*\big|&\leq \sqrt{\frac{2\log(4|U_s|/\delta)\cdot\Var_{P_{s,a,b}}(\hat{V}^*)}{N}}+\frac{2\log(4|U_s|/\delta)}{3(1-\gamma)N} \\
	&\qquad\quad+\min_{u\in U_s}\big|\hat{V}^*(s)-u\big|\cdot\Bigg(2+\sqrt{\frac{2\log(4|U_s|/\delta)}{N}}\Bigg)\\
	\big|(P_{s,a,b}-\hat{P}_{s,a,b}) \hat{V}^{\mu^*,*}\big|&\leq \sqrt{\frac{2\log(4|U_s|/\delta)\cdot\Var_{P_{s,a,b}}(\hat{V}^{\mu^*,*})}{N}}+\frac{2\log(4|U_s|/\delta)}{3(1-\gamma)N} \\
	&\qquad\quad+\min_{u\in U_s}\big|\hat{V}^{\mu^*,*}(s)-u\big|\cdot\Bigg(2+\sqrt{\frac{2\log(4|U_s|/\delta)}{N}}\Bigg)\\
	\big|(P_{s,a,b}-\hat{P}_{s,a,b}) \hat{V}^{*,\nu^*}\big|&\leq \sqrt{\frac{2\log(4|U_s|/\delta)\cdot\Var_{P_{s,a,b}}(\hat{V}^{*,\nu^*})}{N}}+\frac{2\log(4|U_s|/\delta)}{3(1-\gamma)N} \\
	&\qquad\quad+\min_{u\in U_s}\big|\hat{V}^{*,\nu^*}(s)-u\big|\cdot\Bigg(2+\sqrt{\frac{2\log(4|U_s|/\delta)}{N}}\Bigg)\\
	\big|(P_{s,a,b}-\hat{P}_{s,a,b}) \hat{V}^{\mu^*,\nu^*}\big|&\leq \sqrt{\frac{2\log(4|U_s|/\delta)\cdot\Var_{P_{s,a,b}}(\hat{V}^{\mu^*,\nu^*})}{N}}+\frac{2\log(4|U_s|/\delta)}{3(1-\gamma)N} \\
	&\qquad\quad+\min_{u\in U_s}\big|\hat{V}^{\mu^*,\nu^*}(s)-u\big|\cdot\Bigg(2+\sqrt{\frac{2\log(4|U_s|/\delta)}{N}}\Bigg).
	\$ 
	\normalsize
	Moreover, recalling that $(\hat{\mu}_{s,u},\hat{\nu}_{s,u})$ is the output of the \textbf{Planning Oracle} using $\hat{\cG}_{s,u}$, we have
	\small
	\$
	\big|(P_{s,a,b}-\hat{P}_{s,a,b}) {V}^{\hat{\mu},*}\big|&\leq \sqrt{\frac{2\log(4|U_s|/\delta)\cdot\Var_{P_{s,a,b}}({V}^{\hat{\mu},*})}{N}}+\frac{2\log(4|U_s|/\delta)}{3(1-\gamma)N} 
\\
	&\qquad\quad
	+\min_{u\in U_s}\big\|{V}^{\hat{\mu},*}-{V}^{\hat{\mu}_{s,u},*}\big\|_\infty\Bigg(2+\sqrt{\frac{2\log(4|U_s|/\delta)}{N}}\Bigg),
	\$
	\$
	\big|(P_{s,a,b}-\hat{P}_{s,a,b}) {V}^{*,\hat{\nu}}\big|&\leq \sqrt{\frac{2\log(4|U_s|/\delta)\cdot\Var_{P_{s,a,b}}({V}^{*,\hat{\nu}})}{N}}+\frac{2\log(4|U_s|/\delta)}{3(1-\gamma)N} 
\\
	&\qquad\quad
	+\min_{u\in U_s}\big\|{V}^{*,\hat{\nu}}-{V}^{*,\hat{\nu}_{s,u}}\big\|_\infty\Bigg(2+\sqrt{\frac{2\log(4|U_s|/\delta)}{N}}\Bigg). 
	\$
	\normalsize
\end{lemma}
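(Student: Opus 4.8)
The plan is to reduce each bound to the Bernstein estimates of Lemma~\ref{lemma:bernstein_inq}, which already apply to the absorbing-game value functions $\hat{V}^*_{s,u}$ (and its best-response/NE variants) that are \emph{independent} of the single row $\hat{P}_{s,a,b}$, and then to pay a controllable price for replacing $\hat{V}^*_{s,u}$ by the true quantity $\hat{V}^*$. Fix $(s,a,b)$ and any $u\in U_s$, and write
\[
(P_{s,a,b}-\hat{P}_{s,a,b})\hat{V}^*=(P_{s,a,b}-\hat{P}_{s,a,b})\hat{V}^*_{s,u}+(P_{s,a,b}-\hat{P}_{s,a,b})(\hat{V}^*-\hat{V}^*_{s,u}).
\]
I would bound the first summand by the corresponding line of Lemma~\ref{lemma:bernstein_inq}, and the second by H\"older's inequality, $|(P_{s,a,b}-\hat{P}_{s,a,b})(\hat{V}^*-\hat{V}^*_{s,u})|\le \|P_{s,a,b}-\hat{P}_{s,a,b}\|_1\,\|\hat{V}^*-\hat{V}^*_{s,u}\|_\infty\le 2\|\hat{V}^*-\hat{V}^*_{s,u}\|_\infty$, which is the source of the leading factor $2$.

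The remaining two ingredients are a variance-transfer step and a choice of $u$. For the variance, I would use that the standard deviation $V\mapsto\sqrt{\Var_{P_{s,a,b}}(V)}$ is a seminorm, so by the triangle inequality $\sqrt{\Var_{P_{s,a,b}}(\hat{V}^*_{s,u})}\le \sqrt{\Var_{P_{s,a,b}}(\hat{V}^*)}+\|\hat{V}^*_{s,u}-\hat{V}^*\|_\infty$ (the cross term is controlled by $\sqrt{\Var_{P_{s,a,b}}(\hat{V}^*_{s,u}-\hat{V}^*)}\le \|\hat{V}^*_{s,u}-\hat{V}^*\|_\infty$); multiplying through by $\sqrt{2\log(4|U_s|/\delta)/N}$ turns the extra term into the $\sqrt{2\log(4|U_s|/\delta)/N}$ contribution in the claimed bound. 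For the choice of $u$, I would invoke the \emph{empirical} analogues of Lemmas~\ref{lemma:helper_u_choose_1} and~\ref{lemma:helper_u_choose_2} applied to $\hat{\cG}$: the former gives $\hat{V}^*=\hat{V}^*_{s,\hat{V}^*(s)}$ (taking $u=\hat{V}^*(s)$ makes the absorbing empirical game agree with $\hat{\cG}$), and the latter gives the robustness estimate $\|\hat{V}^*_{s,u}-\hat{V}^*\|_\infty\le|u-\hat{V}^*(s)|$. Substituting this into both the $2\|\cdot\|_\infty$ term and the variance-transfer term, and then minimizing over $u\in U_s$, yields exactly the factor $\min_{u\in U_s}|\hat{V}^*(s)-u|\cdot(2+\sqrt{2\log(4|U_s|/\delta)/N})$. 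Adding the genuine Bernstein term $\sqrt{2\log(4|U_s|/\delta)\Var_{P_{s,a,b}}(\hat{V}^*)/N}+2\log(4|U_s|/\delta)/[3(1-\gamma)N]$ completes the first inequality; the bounds for $\hat{V}^{\mu^*,*}$, $\hat{V}^{*,\nu^*}$ and $\hat{V}^{\mu^*,\nu^*}$ follow verbatim using the matching lines of Lemma~\ref{lemma:bernstein_inq} and the corresponding cases of the two helper lemmas.

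For the last two inequalities, involving the true-game best responses ${V}^{\hat{\mu},*}$ and ${V}^{*,\hat{\nu}}$ against the oracle outputs on $\hat{\cG}$, the same template applies but the proxy is now ${V}^{\hat{\mu}_{s,u},*}$, the best-response value (in $\cG$) against the oracle policy $\hat{\mu}_{s,u}$ produced from the \emph{auxiliary} model $\hat{\cG}_{s,u}$, which is again independent of $\hat{P}_{s,a,b}$. Decomposing $(P_{s,a,b}-\hat{P}_{s,a,b}){V}^{\hat{\mu},*}$ around ${V}^{\hat{\mu}_{s,u},*}$, applying the corresponding Bernstein line, and carrying out the same $\ell_1$--$\ell_\infty$ bound and variance transfer, leaves the mismatch term $\|{V}^{\hat{\mu},*}-{V}^{\hat{\mu}_{s,u},*}\|_\infty$ in place of $|\hat{V}^*(s)-u|$; minimizing over $u\in U_s$ gives the stated bound. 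The subtlety I would watch is precisely the variance-transfer step: unlike the crude analysis of Lemma~\ref{lemma:crude_bnd}, here one cannot treat $\hat{V}^*$ as independent of $\hat{P}_{s,a,b}$, and it is exactly the absorbing-game construction together with the seminorm triangle inequality for the standard deviation that lets one retain the \emph{true} variance $\Var_{P_{s,a,b}}(\hat{V}^*)$ in the leading term while paying only the grid-controlled misspecification error, which a later choice of a sufficiently fine $U_s$ will render negligible.
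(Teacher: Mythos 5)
Your proposal is correct and follows essentially the same route as the paper's proof: the same decomposition around the absorbing-game proxy $\hat{V}^*_{s,u}$, the same $\ell_1$--$\ell_\infty$ bound giving the factor $2$, the same Bernstein step via Lemma \ref{lemma:bernstein_inq}, the same seminorm triangle inequality for the standard deviation to retain $\Var_{P_{s,a,b}}(\hat{V}^*)$ in the leading term, and the same use of (the empirical analogues of) Lemmas \ref{lemma:helper_u_choose_1} and \ref{lemma:helper_u_choose_2} before minimizing over $u\in U_s$. Your handling of the last two inequalities, where the mismatch $\|V^{\hat{\mu},*}-V^{\hat{\mu}_{s,u},*}\|_\infty$ is kept rather than further bounded, also matches the paper exactly.
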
  
\begin{proof}
	First, for all $u\in U_s$ and with probability greater than $1-\delta$,   we have
	\small
	\#
	&\big|(P_{s,a,b}-\hat{P}_{s,a,b}) \hat{V}^*\big|=\big|(P_{s,a,b}-\hat{P}_{s,a,b}) (\hat{V}^*-\hat{V}^*_{s,u}+\hat{V}^*_{s,u})\big|\notag\\
	&\quad\leq \big|(P_{s,a,b}-\hat{P}_{s,a,b}) (\hat{V}^*-\hat{V}^*_{s,u})\big|+\big|(P_{s,a,b}-\hat{P}_{s,a,b}) \hat{V}^*_{s,u}\big|\label{equ:absorb_main_res_1_1}\\
	&\quad\leq 2\cdot\big\|\hat{V}^*-\hat{V}^*_{s,u}\big\|_\infty+\big|(P_{s,a,b}-\hat{P}_{s,a,b}) \hat{V}^*_{s,u}\big|\label{equ:absorb_main_res_1_2}\\
	&\quad\leq 2\cdot\big\|\hat{V}^*-\hat{V}^*_{s,u}\big\|_\infty+\sqrt{\frac{2\log(4|U_s|/\delta)\cdot\Var_{P_{s,a,b}}(\hat{V}^*_{s,u})}{N}}+\frac{2\log(4|U_s|/\delta)}{3(1-\gamma)N} \label{equ:absorb_main_res_1_3}\\
	&\quad\leq \big\|\hat{V}^*-\hat{V}^*_{s,u}\big\|_\infty\Bigg(2+\sqrt{\frac{2\log(4|U_s|/\delta)}{N}}\Bigg)+\sqrt{\frac{2\log(4|U_s|/\delta)\cdot\Var_{P_{s,a,b}}(\hat{V}^*)}{N}}+\frac{2\log(4|U_s|/\delta)}{3(1-\gamma)N} \label{equ:absorb_main_res_1_4}
	\#
	\normalsize
	where \eqref{equ:absorb_main_res_1_1}-\eqref{equ:absorb_main_res_1_2}  use triangle inequality,  \eqref{equ:absorb_main_res_1_3} is due to Lemma \ref{lemma:bernstein_inq}, and \eqref{equ:absorb_main_res_1_4} uses the facts that 
	$\sqrt{\Var_{P_{s,a,b}}(X+Y)}\leq \sqrt{\Var_{P_{s,a,b}}(X)}+\sqrt{\Var_{P_{s,a,b}}(Y)}$, and $\sqrt{\Var_{P_{s,a,b}}(X)}\leq \|X\|_\infty$. Moreover, by Lemmas  \ref{lemma:helper_u_choose_1} and \ref{lemma:helper_u_choose_2}, we obtain that
	\$
	\big\|\hat{V}^*-\hat{V}^*_{s,u}\big\|_\infty=\big\|\hat{V}^*_{s,\hat{V}^*(s)}-\hat{V}^*_{s,u}\big\|_\infty\leq \big|\hat{V}^*(s)-u\big|,
	\$
	which, combined with \eqref{equ:absorb_main_res_1_4} and taken minimization over all $u\in U_s$, yields the first inequality. 
Proofs for the remaining inequalities are analogous, except that for the last two, the norms $\|{V}^{\hat{\mu},*}-{V}^{\hat{\mu}_{s,u},*}\|_\infty$ and $\|{V}^{*,\hat{\nu}}-{V}^{*,\hat{\nu}_{s,u}}\|_\infty$ are kept and not further bounded.  
\end{proof}

Next we establish the important result that characterizes the errors $|(P-\hat{P})\hat{V}^*|$, $|(P-\hat{P})\hat{V}^{\mu^*,*}|$,  $|(P-\hat{P})\hat{V}^{*,\nu^*}|$, and  $|(P-\hat{P})\hat{V}^{\mu^*,\nu^*}|$, which could not have  been  handled without the arguments above, due to the dependence between $\hat{P}$ and $\hat{V}^*$ (and also $\hat{V}^{\mu^*,*}$, $\hat{V}^{*,\nu^*}$, and $\hat{V}^{\mu^*,\nu^*}$).

\begin{lemma}\label{lemma:absorb_main_res_2}
	For any $\delta\in(0,1]$, with probability greater than $1-\delta$, it holds that 
	\$
	\big|(P-\hat{P})\hat{V}^*\big|&\leq \sqrt{\frac{2\log\big(16|\cS||\cA||\cB|/[(1-\gamma)^2\delta]\big)\cdot\Var_{P}(\hat{V}^*)}{N}}+\Delta_{\delta,N}'\\
	\big|(P-\hat{P})\hat{V}^{\mu^*,*}\big|&\leq \sqrt{\frac{2\log\big(16|\cS||\cA||\cB|/[(1-\gamma)^2\delta]\big)\cdot\Var_{P}(\hat{V}^{\mu^*,*})}{N}}+\Delta_{\delta,N}'\\
	\big|(P-\hat{P})\hat{V}^{*,\nu^*}\big|&\leq \sqrt{\frac{2\log\big(16|\cS||\cA||\cB|/[(1-\gamma)^2\delta]\big)\cdot\Var_{P}(\hat{V}^{*,\nu^*})}{N}}+\Delta_{\delta,N}'\\
	\big|(P-\hat{P})\hat{V}^{\mu^*,\nu^*}\big|&\leq \sqrt{\frac{2\log\big(16|\cS||\cA||\cB|/[(1-\gamma)^2\delta]\big)\cdot\Var_{P}(\hat{V}^{\mu^*,\nu^*})}{N}}+\Delta_{\delta,N}'
	\$
	where $\Delta_{\delta,N}'$ is defined as
	\$
	\Delta_{\delta,N}'=\sqrt{\frac{c\log\big(c|\cS||\cA||\cB|/[(1-\gamma)^2\delta]\big)}{N}}+\frac{c\log\big(c|\cS||\cA||\cB|/[(1-\gamma)^2\delta]\big)}{(1-\gamma)N},
	\$
	and $c$ is some absolute constant.  
\end{lemma}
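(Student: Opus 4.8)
The plan is to instantiate the per-state-action bounds of Lemma \ref{lemma:absorb_main_res_1} with a carefully sized, \emph{deterministic} discretization set $U_s$, to control the residual misspecification term $\min_{u\in U_s}|\hat{V}^*(s)-u|$ via the crude concentration of Lemma \ref{lemma:crude_bnd}, and finally to take a union bound over all $(s,a,b)\in\cS\times\cA\times\cB$. First I would set $\Delta:=\Delta_{\delta,N}$ as in Lemma \ref{lemma:crude_bnd}, and for each state $s$ let $U_s$ consist of $\lceil 4/(1-\gamma)^2\rceil$ evenly spaced points in the interval $[V^*(s)-\Delta,V^*(s)+\Delta]$. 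Since $V^*$ and $\Delta$ depend only on $N$ and $\delta$ (not on $\hat{P}$), the set $U_s$ is deterministic, so the union-bound arguments underlying Lemmas \ref{lemma:bernstein_inq} and \ref{lemma:absorb_main_res_1} remain valid.

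Next I would condition on the event of Lemma \ref{lemma:crude_bnd}, applied with $\delta$ replaced by $\delta/2$, on which $\|Q^*-\hat{Q}^*\|_\infty$, $\|Q^*-\hat{Q}^{\mu^*,*}\|_\infty$, $\|Q^*-\hat{Q}^{*,\nu^*}\|_\infty$, and $\|Q^*-\hat{Q}^{\mu^*,\nu^*}\|_\infty$ are all at most $\Delta$. By non-expansiveness of the $\min$/$\max$ operators (exactly as in \eqref{equ:componentwise_bnd_trash_3}--\eqref{equ:componentwise_bnd_trash_4}), this forces $|\hat{V}^*(s)-V^*(s)|\leq\Delta$, and likewise for $\hat{V}^{\mu^*,*}$, $\hat{V}^{*,\nu^*}$, $\hat{V}^{\mu^*,\nu^*}$, so each of these four values lies inside the fixed interval $[V^*(s)-\Delta,V^*(s)+\Delta]$. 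Consequently $\min_{u\in U_s}|\hat{V}^*(s)-u|$ is at most half the grid spacing, i.e.\ $O\big(\Delta(1-\gamma)^2\big)$; plugging in the explicit form $\Delta_{\delta,N}=\frac{\gamma}{(1-\gamma)^2}\sqrt{2\log(\cdot)/N}$, the $(1-\gamma)^{-2}$ cancels and this term becomes $O\big(\gamma\sqrt{\log(\cdot)/N}\big)$.

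Finally I would apply Lemma \ref{lemma:absorb_main_res_1} with $\delta$ replaced by $\delta/(2|\cS||\cA||\cB|)$ and union bound over the $(s,a,b)$ triples, so the four Bernstein inequalities hold simultaneously on an event of probability at least $1-\delta/2$; intersecting with the crude-bound event gives total probability at least $1-\delta$. The choice $4|U_s|=\Theta((1-\gamma)^{-2})$ turns $\log\big(4|U_s|\cdot 2|\cS||\cA||\cB|/\delta\big)$ into $\log\big(16|\cS||\cA||\cB|/[(1-\gamma)^2\delta]\big)$ up to absolute constants, matching the log factor in the statement. The misspecification contribution $\min_{u\in U_s}|\hat{V}^*(s)-u|\cdot\big(2+\sqrt{2\log(\cdot)/N}\big)=O\big(\sqrt{\log(\cdot)/N}\big)$, together with the Bernstein lower-order term $\frac{2\log(\cdot)}{3(1-\gamma)N}$, is then absorbed into the definition of $\Delta_{\delta,N}'$, while the variance appearing is already $\Var_P(\hat{V}^*)$ (Lemma \ref{lemma:absorb_main_res_1} having used $\sqrt{\Var(X+Y)}\leq\sqrt{\Var(X)}+\sqrt{\Var(Y)}$). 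The identical argument yields the remaining three inequalities.

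The main obstacle is the delicate sizing of $U_s$: I expect the crux to be verifying that $|U_s|=\Theta((1-\gamma)^{-2})$ is simultaneously large enough that the discretization error (which scales like $\Delta/|U_s|$) is pushed below the $\sqrt{\log/N}$ scale \emph{after} the $(1-\gamma)^{-2}$ factor in $\Delta_{\delta,N}$ is cancelled, yet small enough that $\log|U_s|$ merges cleanly into the advertised $\log\big(16|\cS||\cA||\cB|/[(1-\gamma)^2\delta]\big)$ factor. A second, more subtle point is that $U_s$ must be anchored at the deterministic $V^*(s)$ rather than the random $\hat{V}^*(s)$, so that the union bound inside Lemma \ref{lemma:bernstein_inq} is legitimate; the role of Lemma \ref{lemma:crude_bnd} is precisely to guarantee, on a high-probability event, that the random quantity $\hat{V}^*(s)$ nevertheless lands inside this fixed interval.
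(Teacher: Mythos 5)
Your proposal is correct and follows essentially the same route as the paper's proof: a deterministic grid $U_s$ of $\Theta((1-\gamma)^{-2})$ points centered at $V^*(s)$, the crude Hoeffding bound (Lemma \ref{lemma:crude_bnd}) at level $\delta/2$ to guarantee that $\hat{V}^*(s)$ (and likewise $\hat{V}^{\mu^*,*}(s)$, $\hat{V}^{*,\nu^*}(s)$, $\hat{V}^{\mu^*,\nu^*}(s)$) lands in that fixed interval, Lemma \ref{lemma:absorb_main_res_1} at level $\delta/(2|\cS||\cA||\cB|)$ with a union bound over $(s,a,b)$, and absorption of the discretization and Bernstein lower-order terms into $\Delta_{\delta,N}'$. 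The only slip is cosmetic: having invoked the crude bound at level $\delta/2$, the interval half-width should be $\Delta_{\delta/2,N}$ (as in the paper) rather than $\Delta_{\delta,N}$, since the event only guarantees deviations up to the slightly larger $\Delta_{\delta/2,N}$; this is a one-line fix that does not affect the argument.
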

\begin{proof}
	Let $U_s$ denote a set with evenly spaced elements in the interval $[V^*(s)-\Delta_{\delta/2,N},V^*(s)+\Delta_{\delta/2,N}]$, with $|U_s|=2/(1-\gamma)^2$, and $\Delta_{\delta,N}$ being defined in Lemma \ref{lemma:crude_bnd}. Lemma \ref{lemma:crude_bnd} shows that with probability greater than $1-\delta/2$, 
\#\label{equ:equ:lemma:absorb_main_res_2_trash_1}
	\hat{V}^*(s)\in\big[V^*(s)-\Delta_{\delta/2,N},~~V^*(s)+\Delta_{\delta/2,N}\big]
	\#  
	for all $s\in\cS$. Since each subinterval determined by $U_s$ is of  length $2\Delta_{\delta/2,N}/(|U_s|-1)$, and $\hat{V}^*(s)$ will fall into one of them, we know that
	\small
	\$
	\min_{u\in U_s}\big|\hat{V}^*(s)-u\big|\leq \frac{2\Delta_{\delta/2,N}}{|U_s|-1}=\frac{2\gamma}{(|U_s|-1)(1-\gamma)^2}\sqrt{\frac{2\log(4|\cS||\cA||\cB|/\delta)}{N}}\leq 2\gamma \sqrt{\frac{2\log(4|\cS||\cA||\cB|/\delta)}{N}},
	\$
	\normalsize
	where we have used the fact that $|U_s|\geq 1/(1-\gamma)^2+1$. 
 We then choose $\delta/2$ to be $\delta/(2|\cS||\cA||\cB|)$ in Lemma \ref{lemma:absorb_main_res_1}, 
  so that it holds for all states and joint actions with probability greater than $1-\delta/2$.  By substitution and noting that the two events in Lemmas \ref{lemma:crude_bnd} and \ref{lemma:absorb_main_res_1}  both  fail  with probability $\delta/2$, we obtain the first inequality by properly choosing the constant $c$. 
 Similarly, for the other two inequalities, note that Lemma \ref{lemma:crude_bnd} can be applied to show that $\hat{V}^{\mu^*,*}(s)$, $\hat{V}^{*,\nu^*}(s)$,  and $\hat{V}^{\mu^*,\nu^*}(s)$, all lie in the interval in \eqref{equ:equ:lemma:absorb_main_res_2_trash_1} (centered at $V^*(s)$). By similar arguments, the remaining three inequalities can be proved (note that Lemma \ref{lemma:absorb_main_res_1} can be applied to $\hat{V}^{\mu^*,*}(s)$, $\hat{V}^{*,\nu^*}(s)$, and $\hat{V}^{\mu^*,\nu^*}(s)$, as well). 
\end{proof}

Lastly, with a smooth \textbf{Planning Oracle}, see Definition \ref{def:smooth_oracle}, we can similarly establish the following error bounds on $|(P-\hat{P})V^{\hat{\mu},*}|$ and $|(P-\hat{P})V^{*,\hat{\nu}}|$, thanks to  Lemma \ref{lemma:absorb_main_res_1}.  

\begin{lemma}\label{lemma:V_hat_mu_nu_error}
	With a smooth \textbf{Planning Oracle} that has a smooth constant $C$ (see Definition \ref{def:smooth_oracle}),  for any $\delta\in(0,1]$, 
	 with probability greater than $1-\delta$, it holds that 
	\$
	\big|(P-\hat{P})V^{\hat{\mu},*}\big|&\leq \sqrt{\frac{2\log\big(8(C+1)|\cS||\cA||\cB|/[(1-\gamma)^4\delta]\big)\cdot\Var_{P}(V^{\hat{\mu},*})}{N}}+\Delta_{\delta,N}''\\
	\big|(P-\hat{P})V^{*,\hat{\nu}}\big|&\leq \sqrt{\frac{2\log\big(8(C+1)|\cS||\cA||\cB|/[(1-\gamma)^4\delta]\big)\cdot\Var_{P}(V^{*,\hat{\nu}})}{N}}+\Delta_{\delta,N}''
	\$
	where $\Delta_{\delta,N}''$ is defined as
	\$
	\Delta_{\delta,N}''=\sqrt{\frac{c\log\big(c(C+1)|\cS||\cA||\cB|/[(1-\gamma)^4\delta]\big)}{N}}+\frac{c\log\big(c(C+1)|\cS||\cA||\cB|/[(1-\gamma)^4\delta]\big)}{(1-\gamma)N},
	\$
	for some absolute constant $c$. 
\end{lemma}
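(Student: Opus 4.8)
The plan is to mirror the proof of Lemma \ref{lemma:absorb_main_res_2}, replacing the reference values $\hat V^*,\hat V^{\mu^*,*},\dots$ by the best-response values $V^{\hat\mu,*}$ and $V^{*,\hat\nu}$ of the \emph{true} game $\cG$, and invoking the last two inequalities of Lemma \ref{lemma:absorb_main_res_1} as the starting point. Those inequalities already deliver the desired Bernstein form together with the residual term $\min_{u\in U_s}\|V^{\hat\mu,*}-V^{\hat\mu_{s,u},*}\|_\infty\,(2+\sqrt{2\log(4|U_s|/\delta)/N})$ (and its $\hat\nu$-analogue), so the whole task reduces to choosing the grid $U_s$ so that this residual is of order $\Delta_{\delta,N}''$. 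First I would take $U_s$ to be evenly spaced in $[V^*(s)-\Delta_{\delta/2,N},V^*(s)+\Delta_{\delta/2,N}]$ exactly as in Lemma \ref{lemma:absorb_main_res_2}, but now with the finer cardinality $|U_s|=\Theta\big(C/(1-\gamma)^4\big)$; the extra polynomial factor in $C$ and $(1-\gamma)^{-1}$ is harmless because $|U_s|$ enters the bounds only through $\log|U_s|$.

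The heart of the argument is to control $\|V^{\hat\mu,*}-V^{\hat\mu_{s,u},*}\|_\infty$ by $|u-\hat V^*(s)|$ up to the smoothness constant. I would proceed along the chain: (i) a policy-Lipschitz bound for best-response values in $\cG$, namely $\|V^{\mu_1,*}-V^{\mu_2,*}\|_\infty\le \frac{2}{(1-\gamma)^2}\max_{s'}\|\mu_1(\cdot\given s')-\mu_2(\cdot\given s')\|_{TV}$, obtained by sandwiching $V^{\mu_1,*}-V^{\mu_2,*}$ between $V^{\mu_1,\nu_i}-V^{\mu_2,\nu_i}$ for the two best responses $\nu_i$ and then expanding the fixed-point recursion for a common $\nu$, paying one factor $(1-\gamma)^{-1}$ for the per-state $\|\cdot\|_{TV}$ perturbation of the one-step payoff and one more for the geometric accumulation; (ii) the smoothness of the \textbf{Planning Oracle} (Definition \ref{def:smooth_oracle}) applied to the empirical models $\hat\cG$ and $\hat\cG_{s,u}$, giving $\max_{s'}\|\hat\mu(\cdot\given s')-\hat\mu_{s,u}(\cdot\given s')\|_{TV}\le C\|\hat Q^*-\hat Q^*_{s,u}\|_\infty$; and (iii) the empirical counterparts of Lemmas \ref{lemma:helper_u_choose_1} and \ref{lemma:helper_u_choose_2} (whose proofs carry over verbatim with $P$ replaced by $\hat P$), which yield $\hat Q^*_{s,\hat V^*(s)}=\hat Q^*$ and hence $\|\hat Q^*-\hat Q^*_{s,u}\|_\infty\le |u-\hat V^*(s)|$. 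Composing (i)--(iii) gives $\|V^{\hat\mu,*}-V^{\hat\mu_{s,u},*}\|_\infty\le \frac{2C}{(1-\gamma)^2}\,|u-\hat V^*(s)|$, and symmetrically for $V^{*,\hat\nu}$.

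Finally I would combine the two high-probability events. On the event of Lemma \ref{lemma:crude_bnd} (probability $\ge 1-\delta/2$, using $\delta/2$ there), $\hat V^*(s)\in[V^*(s)-\Delta_{\delta/2,N},V^*(s)+\Delta_{\delta/2,N}]$, so $\hat V^*(s)$ lands in some subinterval of $U_s$ and $\min_{u\in U_s}|u-\hat V^*(s)|\le 2\Delta_{\delta/2,N}/(|U_s|-1)$. Plugging this into the residual via the composed bound from steps (i)--(iii), the choice $|U_s|=\Theta\big(C/(1-\gamma)^4\big)$ makes $\frac{2C}{(1-\gamma)^2}\cdot\frac{2\Delta_{\delta/2,N}}{|U_s|-1}$ of order $\sqrt{\log(\cdot)/N}$, absorbed into $\Delta_{\delta,N}''$. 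Applying Lemma \ref{lemma:absorb_main_res_1} with $\delta$ rescaled to $\delta/(2|\cS||\cA||\cB|)$ and a union bound over all $(s,a,b)$ (probability $\ge 1-\delta/2$), and noting $\log(4|U_s|\,|\cS||\cA||\cB|/\delta)=\Theta\big(\log((C+1)|\cS||\cA||\cB|/[(1-\gamma)^4\delta])\big)$, yields the claimed bounds after fixing the absolute constant $c$; the variance term needs no further manipulation, since Lemma \ref{lemma:absorb_main_res_1} already presents it as $\Var_P(V^{\hat\mu,*})$.

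I expect the main obstacle to be step (i): proving the TV-Lipschitz continuity of the best-response value $V^{\mu,*}$ in the true game with the correct $(1-\gamma)^{-2}$ scaling, since—unlike the simpler perturbation-of-$u$ arguments in Lemma \ref{lemma:helper_u_choose_2}, where only the reward shifts—here both the induced one-step payoffs and the induced transitions change with $\mu$, so one must first reduce to a fixed $\nu$ by sandwiching between best responses and then run a self-bounding recursion. The other delicate point is the bookkeeping of $C$: it must be routed through the grid cardinality $|U_s|$ so that it appears only inside the logarithm in $\Delta_{\delta,N}''$, rather than polynomially in the leading $\sqrt{\cdot/N}$ term.
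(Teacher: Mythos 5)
Your proposal is correct and follows essentially the same route as the paper's proof: the same grid $U_s$ of cardinality $\Theta((C+1)/(1-\gamma)^4)$ centered at $V^*(s)$, the same invocation of the last two inequalities of Lemma \ref{lemma:absorb_main_res_1}, the same chain (oracle smoothness between $\hat{\cG}$ and $\hat{\cG}_{s,u}$, the empirical analogue of Lemma \ref{lemma:helper_u_choose_2} giving $\|\hat{Q}^*-\hat{Q}^*_{s,u}\|_\infty\le|\hat{V}^*(s)-u|$, and a $(1-\gamma)^{-2}$-scaled TV-to-value bound), and the same routing of $C$ into the logarithm via $|U_s|$. The only cosmetic difference is that you package the TV-to-value step as a standalone policy-perturbation lemma (sandwich between best responses, then a fixed-$\nu$ recursion), whereas the paper runs the equivalent self-bounding recursion directly on $\|V^{\hat{\mu},*}-V^{\hat{\mu}_{s,u},*}\|_\infty$ via H\"older's inequality and non-expansiveness of the $\min$ operator; both give the same $C/(1-\gamma)^2$ scaling up to a constant.
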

\begin{proof}
Following the proof of Lemma \ref{lemma:absorb_main_res_2}, let $U_s$ denote a set with evenly spaced elements in the interval $[V^*(s)-\Delta_{\delta/2,N},V^*(s)+\Delta_{\delta/2,N}]$,  with $\Delta_{\delta,N}$ being defined in Lemma \ref{lemma:crude_bnd}. By Lemma \ref{lemma:crude_bnd}, we know that $\hat{V}^*(s)$ lies in this interval with probability greater than $1-\delta/2$, for all $s\in\cS$. Now we choose  $|U_s|=(C+1)/(1-\gamma)^4$, where $C$ is the smooth coefficient in Definition  \ref{def:smooth_oracle}. As $\hat{V}^*(s)$ will fall into one of the subintervals determined by $U_s$, we have
\#\label{proof_lemma_V_hat_mu_nu_error_trash0}
	\min_{u\in U_s}\big|\hat{V}^*(s)-u\big|\leq \frac{2\Delta_{\delta/2,N}}{|U_s|-1}
	\leq \frac{2\gamma(1-\gamma)^2}{C}\cdot  \sqrt{\frac{2\log(4|\cS||\cA||\cB|/\delta)}{N}},
\# 
which also uses the fact $|U_s|\geq C/(1-\gamma)^4+1$. 
Furthermore, by Definition  \ref{def:smooth_oracle} and the proof of Lemma \ref{lemma:helper_u_choose_2}, we have
\#\label{proof_lemma_V_hat_mu_nu_error_trash1}
&\big\|\hat{\mu}-\hat{\mu}_{s,u}\big\|_{TV}\leq C\cdot\|\hat{Q}^*-\hat{Q}^*_{s,u}\|_\infty
\leq C \cdot\big|\hat{V}^*(s)-u\big|. 
\#
On the other hand, we have
\#
&\big\|V^{\hat{\mu},*}-V^{\hat{\mu}_{s,u},*}\big\|_\infty\leq \max_{\vartheta\in\Delta(\cB)}\big\|\EE_{a\sim \hat{\mu}(\cdot\given s),b\sim \vartheta}[Q^{\hat{\mu},*}(\cdot,a,b)]-\EE_{a\sim \hat{\mu}_{s,u}(\cdot\given s),b\sim \vartheta}[{Q}^{\hat{\mu}_{s,u},*}(\cdot,a,b)]\big\|_\infty \notag\\
&\quad \leq \max_{\vartheta\in\Delta(\cB)}\big\|\EE_{a\sim \hat{\mu}(\cdot\given s),b\sim \vartheta}[Q^{\hat{\mu},*}(\cdot,a,b)]-\EE_{a\sim \hat{\mu}(\cdot\given s),b\sim \vartheta}[{Q}^{\hat{\mu}_{s,u},*}(\cdot,a,b)]\big\|_\infty\notag\\
&\qquad\qquad+\max_{\vartheta\in\Delta(\cB)}\big\|\EE_{a\sim \hat{\mu}(\cdot\given s),b\sim \vartheta}[{Q}^{\hat{\mu}_{s,u},*}(\cdot,a,b)]-\EE_{a\sim \hat{\mu}_{s,u}(\cdot\given s),b\sim \vartheta}[{Q}^{\hat{\mu}_{s,u},*}(\cdot,a,b)]\big\|_\infty\notag\\
&\quad \leq \big\|Q^{\hat{\mu},*}-{Q}^{\hat{\mu}_{s,u},*}\big\|_\infty+\big\|\hat{\mu}-\hat{\mu}_{s,u}\big\|_{TV}\cdot \big\|{Q}^{\hat{\mu}_{s,u},*}\big\|_\infty\label{proof_lemma_V_hat_mu_nu_error_trash2}\\
&\quad \leq \gamma \big\|V^{\hat{\mu},*}-{V}^{\hat{\mu}_{s,u},*}\big\|_\infty+\frac{C}{1-\gamma} \cdot\big|\hat{V}^*(s)-u\big|,\label{proof_lemma_V_hat_mu_nu_error_trash3}
\#
where \eqref{proof_lemma_V_hat_mu_nu_error_trash2} uses H\"{o}lder's inequality, and \eqref{proof_lemma_V_hat_mu_nu_error_trash3} follows by expanding the Q-value functions, using \eqref{proof_lemma_V_hat_mu_nu_error_trash1}, and noticing that $\|{Q}^{\hat{\mu}_{s,u},*}\|_\infty\leq 1/(1-\gamma)$. 
Combining \eqref{proof_lemma_V_hat_mu_nu_error_trash3} and \eqref{proof_lemma_V_hat_mu_nu_error_trash0}, and taking $\min$ over $u\in U_s$, we have
\$
\min_{u\in U_s}\big\|V^{\hat{\mu},*}-V^{\hat{\mu}_{s,u},*}\big\|_\infty\leq \frac{C}{(1-\gamma)^2} \cdot\min_{u\in U_s}\big|\hat{V}^*(s)-u\big|\leq 2\gamma \cdot  \sqrt{\frac{2\log(4|\cS||\cA||\cB|/\delta)}{N}}.
\$
The rest of the proof follows the arguments of Lemma \ref{lemma:absorb_main_res_2}, 
which combines the last two inequalities in Lemma \ref{lemma:absorb_main_res_1} to 
obtain the desired bound. Note that the absolute constant here might be  different from that in Lemma \ref{lemma:absorb_main_res_2}. The proof for the second inequality is analogous. 
\end{proof}

\kzedit{Note that compared to Lemma \ref{lemma:absorb_main_res_2}, Lemma \ref{lemma:V_hat_mu_nu_error} has to additionally deal with the interdependence between $\hat{P}$ and $V^{\hat{\mu},*}$ (as well as that between $\hat{P}$ and $V^{*,\hat{\nu}}$). What can be guaranteed before, in the absorbing MGs, is that the {\it value} function can be controlled to be close to that in the original MG (see Lemmas \ref{lemma:helper_u_choose_1} and \ref{lemma:helper_u_choose_2}, and the proof of Lemma \ref{lemma:absorb_main_res_1}). However, in general, it is unclear how much the NE {\it policy} changes, as well as how much the {\it best-response value} in the original true MG changes. This calls for some {\it stability} of the NE policy, and was made possible due to the smoothness of our \textbf{Planning Oracle} (see \eqref{proof_lemma_V_hat_mu_nu_error_trash1}-\eqref{proof_lemma_V_hat_mu_nu_error_trash3}). Lemma \ref{lemma:V_hat_mu_nu_error} will play an important role in obtaining the near-optimal sample complexity in Theorem \ref{thm:main_res_2} (see \S\ref{sec:proof_main_2}).}   

\subsection{Proof of Theorem \ref{thm:main_res}}\label{sec:proof_main}

We are now ready to prove   Theorem \ref{thm:main_res}.  To this end, we first establish the following lemma. 

\begin{lemma}\label{lemma:proof_main_res_Q_deviation}
	For any policy pair $(\hat{\mu},\hat{\nu})$ that satisfies the condition in Theorem \ref{thm:main_res}, there exists some absolute constant $c$ such that 
	\$
	\big\|Q^{\hat{\mu},\hat{\nu}}-\hat{Q}^{\hat{\mu},\hat{\nu}}\big\|_\infty &\leq \frac{\gamma}{1-\alpha_{\delta,N}}\Bigg(\sqrt{\frac{c\log(c|\cS||\cA||\cB|/[(1-\gamma)^2\delta])}{(1-\gamma)^3 N}}+\frac{c\log(c|\cS||\cA||\cB|/[(1-\gamma)^2\delta])}{(1-\gamma)^2N}\Bigg)\\
	&\qquad +\frac{1}{1-\alpha_{\delta,N}}\cdot \frac{\gamma\epsilon_{opt}}{(1-\gamma)}\Bigg(1+\sqrt{\frac{\log(c|\cS||\cA||\cB|/[(1-\gamma)^2\delta])}{N}}\Bigg)\\
	\big\|Q^{*}-\hat{Q}^{\mu^*,*}\big\|_\infty &\leq \frac{\gamma}{1-\alpha_{\delta,N}}\Bigg(\sqrt{\frac{c\log(c|\cS||\cA||\cB|/[(1-\gamma)^2\delta])}{(1-\gamma)^3 N}}+\frac{c\log(c|\cS||\cA||\cB|/[(1-\gamma)^2\delta])}{(1-\gamma)^2N}\Bigg)\\
	\\
	\big\|Q^{*}-\hat{Q}^{*,\nu^*}\big\|_\infty &\leq \frac{\gamma}{1-\alpha_{\delta,N}}\Bigg(\sqrt{\frac{c\log(c|\cS||\cA||\cB|/[(1-\gamma)^2\delta])}{(1-\gamma)^3 N}}+\frac{c\log(c|\cS||\cA||\cB|/[(1-\gamma)^2\delta])}{(1-\gamma)^2N}\Bigg),
	\$
	where 
	\$
	\alpha_{\delta,N}=\frac{\gamma}{1-\gamma}\sqrt{\frac{2\log(16|\cS||\cA||\cB|/[(1-\gamma)^2\delta])}{N}}. 
	\$
\end{lemma}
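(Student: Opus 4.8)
The plan is to derive each of the three bounds from the corresponding component-wise identity/inequality in Lemma~\ref{lemma:comp_wise_bnd}, convert it into a scalar $\ell_\infty$ bound via the nonnegativity of the resolvent $(I-\gamma P^{\mu,\nu})^{-1}=\sum_{k\ge0}\gamma^k (P^{\mu,\nu})^k\ge 0$, and then feed the Bernstein-type variance bound of Lemma~\ref{lemma:absorb_main_res_2} into it. Concretely, for the first bound I would start from $Q^{\hat\mu,\hat\nu}-\hat Q^{\hat\mu,\hat\nu}=\gamma(I-\gamma P^{\hat\mu,\hat\nu})^{-1}(P-\hat P)\hat V^{\hat\mu,\hat\nu}$, take absolute values, and first replace $\hat V^{\hat\mu,\hat\nu}$ by $\hat V^*$ at the cost of $2\epsilon_{opt}$ (using $|(P-\hat P)(\hat V^{\hat\mu,\hat\nu}-\hat V^*)|\le 2\|\hat V^{\hat\mu,\hat\nu}-\hat V^*\|_\infty\le 2\epsilon_{opt}$), since Lemma~\ref{lemma:absorb_main_res_2} controls $(P-\hat P)\hat V^*$ but not the value of an arbitrary output pair. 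For the second and third bounds I would instead use the two-sided component-wise inequalities for $Q^{\mu^*,*}-\hat Q^{\mu^*,*}$ and $Q^{*,\nu^*}-\hat Q^{*,\nu^*}$, picking the side whose resolvent policy I can later match to a variance, and note that $Q^*=Q^{\mu^*,*}=Q^{*,\nu^*}$.

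The crux of the argument is extracting the $\sqrt{1/(1-\gamma)^3}$ scaling from the variance term, and this is where the main obstacle lies. Lemma~\ref{lemma:bellman_variance} only yields $\|(I-\gamma P^{\mu,\nu})^{-1}\sqrt{\Var_P(V^{\mu,\nu}_\cG)}\|_\infty\le\sqrt{2/(1-\gamma)^3}$ when the policy pair in the resolvent coincides with the on-policy value inside the variance, whereas Lemma~\ref{lemma:absorb_main_res_2} naturally produces $\sqrt{\Var_P(\hat V^*)}$ (resp.\ $\hat V^{\mu^*,*},\hat V^{*,\nu^*}$), whose value function does not match the resolvent. I would therefore split the variance using $\sqrt{\Var_P(X)}\le\sqrt{\Var_P(Y)}+\|X-Y\|_\infty$ with $Y$ chosen to be exactly the true on-policy value of the resolvent's policy pair: for the first bound, center on $V^{\hat\mu,\hat\nu}$, so the residual is $\|\hat V^*-V^{\hat\mu,\hat\nu}\|_\infty\le\epsilon_{opt}+\|\hat V^{\hat\mu,\hat\nu}-V^{\hat\mu,\hat\nu}\|_\infty\le\epsilon_{opt}+\|Q^{\hat\mu,\hat\nu}-\hat Q^{\hat\mu,\hat\nu}\|_\infty$, which is precisely the quantity being bounded; for the second and third bounds, center on the true on-policy value of $(\mu^*,\hat{\nu(\mu^*)})$ (resp.\ its mirror) and control the residual $\|\hat V^{\mu^*,*}-V^{\mu^*,\hat{\nu(\mu^*)}}_\cG\|_\infty\le\frac{\gamma}{1-\gamma}\|(P-\hat P)\hat V^{\mu^*,*}\|_\infty$ via the first identity of Lemma~\ref{lemma:comp_wise_bnd}, which is lower-order once bounded crudely by $\|\hat V^{\mu^*,*}\|_\infty\le(1-\gamma)^{-1}$.

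With the split in place, I would apply Lemma~\ref{lemma:bellman_variance} to the matched variance term (yielding the $\sqrt{c\log(\cdots)/[(1-\gamma)^3N]}$ contribution) and Lemma~\ref{lemma:inf_norm_error} with $v=\mathbf 1$ to each additive remainder (each resolvent application costs a factor $(1-\gamma)^{-1}$, turning $\Delta'_{\delta,N}$ into the $c\log(\cdots)/[(1-\gamma)^2N]$ term and turning the $\sqrt{2\log(\cdots)/N}$ prefactor times the residual into the self-referential term). For the first bound the residual contributes $\frac{\gamma}{1-\gamma}\sqrt{2\log(\cdots)/N}\cdot\|Q^{\hat\mu,\hat\nu}-\hat Q^{\hat\mu,\hat\nu}\|_\infty=\alpha_{\delta,N}\|Q^{\hat\mu,\hat\nu}-\hat Q^{\hat\mu,\hat\nu}\|_\infty$, exactly matching the definition of $\alpha_{\delta,N}$; collecting terms gives an inequality of the form $X\le A+\alpha_{\delta,N}X$, which I solve to obtain $X\le A/(1-\alpha_{\delta,N})$, and the $\epsilon_{opt}$ pieces assemble into $\frac{\gamma\epsilon_{opt}}{1-\gamma}(1+\sqrt{\log(\cdots)/N})$. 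For the second and third bounds the residual is lower-order, so no genuine self-reference is needed; the uniform prefactor $\frac{\gamma}{1-\alpha_{\delta,N}}$ in the statement is then a valid (slightly loose) bound because $1/(1-\alpha_{\delta,N})\ge1$. Throughout, I would absorb numerical constants and the mild $(1-\gamma)^{-2}$-type slack into the absolute constant $c$ and into the dominant $(1-\gamma)^{-3}$ term, and union-bound the event of Lemma~\ref{lemma:absorb_main_res_2} so that all three bounds hold simultaneously with probability at least $1-\delta$.
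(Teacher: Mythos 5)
Your treatment of the first bound is correct and is essentially the paper's own argument: the same swap of $\hat V^{\hat\mu,\hat\nu}$ for $\hat V^*$ at cost $2\gamma\epsilon_{opt}/(1-\gamma)$, the same application of Lemma \ref{lemma:absorb_main_res_2}, the same re-centering of $\sqrt{\Var_P(\hat V^*)}$ on $V^{\hat\mu,\hat\nu}$, and the same self-bounding step $X\le A+\alpha_{\delta,N}X$. The genuine gap is in the second and third bounds, where you claim the residual $\|\hat V^{\mu^*,*}-V^{\mu^*,\hat{\nu(\mu^*)}}\|_\infty$ can be ``bounded crudely'' and is ``lower-order,'' so that ``no genuine self-reference is needed.'' This is false in the regime the lemma must cover. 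Any crude bound on that residual (Lemma \ref{lemma:crude_bnd}, or Lemma \ref{lemma:absorb_main_res_2} with $\sqrt{\Var_{P}(\hat V^{\mu^*,*})}\le 1/(1-\gamma)$) is of order $\tfrac{\gamma}{(1-\gamma)^2}\sqrt{\log(\cdot)/N}$; multiplying it by the Bernstein prefactor $\gamma\sqrt{2\log(\cdot)/N}$ and the resolvent factor $1/(1-\gamma)$ leaves a term of order $\tfrac{\gamma^2\log(\cdot)}{(1-\gamma)^3N}$, which is dominated by $\sqrt{\tfrac{\log(\cdot)}{(1-\gamma)^3N}}$ only when $N\gtrsim \log(\cdot)/(1-\gamma)^3$. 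But the lemma is needed (via Theorem \ref{thm:main_res} with $\epsilon$ up to $(1-\gamma)^{-1/2}$) down to $N\asymp \log(\cdot)/(1-\gamma)^2$: at $\alpha_{\delta,N}=1/2$, i.e.\ $N\asymp\log(\cdot)/(1-\gamma)^2$, your extra term is of order $1/(1-\gamma)$ while the claimed right-hand side is of order $1/\sqrt{1-\gamma}$, so no absolute constant $c$ can absorb the discrepancy as $\gamma\to 1$. In effect your route only proves the lemma for $\epsilon\lesssim 1$, losing exactly the extended range $\epsilon\in(0,(1-\gamma)^{-1/2}]$ that this analysis is designed to achieve.

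What the paper does instead is keep the self-reference in the second and third bounds as well. It writes $\|Q^*-\hat Q^{\mu^*,*}\|_\infty\le\max\big\{\|Q^{\mu^*,\nu^*}-\hat Q^{\mu^*,\nu^*}\|_\infty,\ \|Q^{\mu^*,\hat{\nu(\mu^*)}}-\hat Q^{\mu^*,*}\|_\infty\big\}$, noting $Q^*-\hat Q^{\mu^*,*}\ge Q^{\mu^*,\nu^*}-\hat Q^{\mu^*,\nu^*}$ and $Q^*-\hat Q^{\mu^*,*}\le Q^{\mu^*,\hat{\nu(\mu^*)}}-\hat Q^{\mu^*,*}$; this also repairs a second defect in your plan: ``picking the side whose resolvent policy matches'' yields only an upper bound on $Q^*-\hat Q^{\mu^*,*}$, not a norm bound, and the lower side of the component-wise inequality in Lemma \ref{lemma:comp_wise_bnd} carries a $\hat P$-resolvent that Lemma \ref{lemma:bellman_variance} cannot digest (it requires the resolvent and the variance to be under the same model). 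Both terms in the max have $P$-resolvents, and each is bounded by centering the variance on the true value of its own resolvent's policy pair, so that the residual ($\|V^{\mu^*,\nu^*}-\hat V^{\mu^*,\nu^*}\|_\infty$, respectively $\|\hat V^{\mu^*,*}-V^{\mu^*,\hat{\nu(\mu^*)}}\|_\infty$) is exactly the quantity being bounded; solving $T\le \gamma\sqrt{2\log(\cdot)/N}\big(\sqrt{2/(1-\gamma)^3}+T/(1-\gamma)\big)+\gamma\Delta'_{\delta,N}/(1-\gamma)$ for $T$ then gives the stated $1/(1-\alpha_{\delta,N})$ bounds with no spurious $\log(\cdot)/[(1-\gamma)^3N]$ term.
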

\begin{proof}
	Note that
	\#
	&\|Q^{\hat{\mu},\hat{\nu}}-\hat{Q}^{\hat{\mu},\hat{\nu}}\|_\infty =\gamma\big\|(I-\gamma P^{\hat{\mu},\hat{\nu}})^{-1}(P-\hat{P})\hat{V}^{\hat{\mu},\hat{\nu}}\big\|_\infty\label{equ:lemma:proof_main_res_Q_deviation_trash_1}\\
	&\quad\leq \gamma\big\|(I-\gamma P^{\hat{\mu},\hat{\nu}})^{-1}(P-\hat{P})\hat{V}^{*}\big\|_\infty+\gamma\big\|(I-\gamma P^{\hat{\mu},\hat{\nu}})^{-1}(P-\hat{P})(\hat{V}^{\hat{\mu},\hat{\nu}}-\hat{V}^{*})\big\|_\infty\label{equ:lemma:proof_main_res_Q_deviation_trash_2}\\
	&\quad\leq \gamma\big\|(I-\gamma P^{\hat{\mu},\hat{\nu}})^{-1}\big|(P-\hat{P})\hat{V}^{*}\big|\big\|_\infty+\frac{2\gamma\epsilon_{opt}}{1-\gamma},\label{equ:lemma:proof_main_res_Q_deviation_trash_4}
	\#
	where \eqref{equ:lemma:proof_main_res_Q_deviation_trash_1} is due to Lemma \ref{lemma:comp_wise_bnd};  \eqref{equ:lemma:proof_main_res_Q_deviation_trash_2} uses triangle inequality; 
	and  \eqref{equ:lemma:proof_main_res_Q_deviation_trash_4} is due to the non-negativeness  of the entries in $(I-\gamma P^{\hat{\mu},\hat{\nu}})^{-1}$, the sub-optimality of $(\hat{\mu},\hat{\nu})$, and Lemma \ref{lemma:inf_norm_error}. Since the first term in \eqref{equ:lemma:proof_main_res_Q_deviation_trash_4} can be bounded using Lemma \ref{lemma:absorb_main_res_2}, we have
	\small
	\#
	&\|Q^{\hat{\mu},\hat{\nu}}-\hat{Q}^{\hat{\mu},\hat{\nu}}\|_\infty \leq \gamma\sqrt{\frac{2\log\big(16|\cS||\cA||\cB|/[(1-\gamma)^2\delta]\big)}{N}}\Big\|(I-\gamma P^{\hat{\mu},\hat{\nu}})^{-1}\sqrt{\Var_{P}(\hat{V}^*)}\Big\|_\infty+\frac{\gamma\Delta_{\delta,N}'}{1-\gamma}+\frac{2\gamma\epsilon_{opt}}{1-\gamma}\notag\\
	&\leq \gamma\sqrt{\frac{2\log\big(16|\cS||\cA||\cB|/[(1-\gamma)^2\delta]\big)}{N}}\Big\|(I-\gamma P^{\hat{\mu},\hat{\nu}})^{-1}\Big(\sqrt{\Var_{P}({V}^{\hat{\mu},\hat{\nu}})}+\sqrt{\Var_{P}({V}^{\hat{\mu},\hat{\nu}}-\hat{V}^{\hat{\mu},\hat{\nu}})}\Big)\Big\|_\infty\notag\\
	&\qquad +\gamma\sqrt{\frac{2\log\big(16|\cS||\cA||\cB|/[(1-\gamma)^2\delta]\big)}{N}}\Big\|(I-\gamma P^{\hat{\mu},\hat{\nu}})^{-1}\Big(\sqrt{\Var_{P}(\hat{V}^{\hat{\mu},\hat{\nu}}-\hat{V}^{*})}\Big)\Big\|_\infty+\frac{\gamma\Delta_{\delta,N}'}{1-\gamma}+\frac{2\gamma\epsilon_{opt}}{1-\gamma}\label{equ:lemma:proof_main_res_Q_deviation_trash_2_2}\\
	&\leq \gamma\sqrt{\frac{2\log\big(16|\cS||\cA||\cB|/[(1-\gamma)^2\delta]\big)}{N}}\Bigg(\sqrt{\frac{2}{(1-\gamma)^3}}+\frac{{\|V^{\hat{\mu},\hat{\nu}}-\hat{V}^{\hat{\mu},\hat{\nu}}\|_\infty}}{1-\gamma}+\frac{\epsilon_{opt}}{1-\gamma}\Bigg)+\frac{\gamma\Delta_{\delta,N}'}{1-\gamma}+\frac{2\gamma\epsilon_{opt}}{1-\gamma}\label{equ:lemma:proof_main_res_Q_deviation_trash_3_2}
	\#
	\#
	&\quad\leq \gamma\sqrt{\frac{2\log\big(16|\cS||\cA||\cB|/[(1-\gamma)^2\delta]\big)}{N}}\Bigg(\sqrt{\frac{2}{(1-\gamma)^3}}+\frac{{\|Q^{\hat{\mu},\hat{\nu}}-\hat{Q}^{\hat{\mu},\hat{\nu}}\|_\infty}}{1-\gamma}+\frac{\epsilon_{opt}}{1-\gamma}\Bigg)+\frac{\gamma\Delta_{\delta,N}'}{1-\gamma}+\frac{2\gamma\epsilon_{opt}}{1-\gamma}\label{equ:lemma:proof_main_res_Q_deviation_trash_4_2}\\
	&\quad=\gamma\sqrt{\frac{2\log\big(16|\cS||\cA||\cB|/[(1-\gamma)^2\delta]\big)}{N}}\Bigg(\sqrt{\frac{2}{(1-\gamma)^3}}+\frac{{\|Q^{\hat{\mu},\hat{\nu}}-\hat{Q}^{\hat{\mu},\hat{\nu}}\|_\infty}}{1-\gamma}\Bigg)+\frac{\gamma\Delta_{\delta,N}'}{1-\gamma}\notag\\
	&\qquad\quad+\Bigg(2+\sqrt{\frac{2\log\big(16|\cS||\cA||\cB|/[(1-\gamma)^2\delta]\big)}{N}}\Bigg)\cdot\frac{\gamma\epsilon_{opt}}{1-\gamma},\label{equ:lemma:proof_main_res_Q_deviation_trash_5_2}
	\#
	\normalsize
	where \eqref{equ:lemma:proof_main_res_Q_deviation_trash_2_2} uses the fact that $\sqrt{\Var_{P}(X+Y)}\leq \sqrt{\Var_{P}(X)}+\sqrt{\Var_{P}(Y)}$;  \eqref{equ:lemma:proof_main_res_Q_deviation_trash_3_2} is due to Lemma  \ref{lemma:bellman_variance},  the fact that $\sqrt{\Var_{P}({V}^{\hat{\mu},\hat{\nu}}-\hat{V}^{\hat{\mu},\hat{\nu}})}\leq \|{V}^{\hat{\mu},\hat{\nu}}-\hat{V}^{\hat{\mu},\hat{\nu}}\|_\infty$, and $\|\hat{V}^{\hat{\mu},\hat{\nu}}-\hat{V}^{*}\|_\infty\leq \epsilon_{opt}$; \eqref{equ:lemma:proof_main_res_Q_deviation_trash_4_2} is due to $\|{V}^{\hat{\mu},\hat{\nu}}-\hat{V}^{\hat{\mu},\hat{\nu}}\|_\infty\leq \|{Q}^{\hat{\mu},\hat{\nu}}-\hat{Q}^{\hat{\mu},\hat{\nu}}\|_\infty$. Solving for $\|Q^{\hat{\mu},\hat{\nu}}-\hat{Q}^{\hat{\mu},\hat{\nu}}\|_\infty$ in \eqref{equ:lemma:proof_main_res_Q_deviation_trash_5_2} yields the desired inequality. 
	
	For the second inequality, by Lemma \ref{lemma:comp_wise_bnd}, we  first have
	\small
	\$
	\underbrace{\gamma (I-\gamma P^{\mu^*,\nu^*})^{-1}(P-\hat {P})\hat{V}^{\mu^*,\nu^*}}_{Q^{\mu^*,\nu^*}-\hat{Q}^{\mu^*,\nu^*}}\leq Q^{*}-\hat{Q}^{\mu^*,*}=Q^{\mu^*,\nu^*}-\hat{Q}^{\mu^*,*}\leq \underbrace{\gamma (I-\gamma P^{\mu^*,\hat{\nu(\mu^*)}})^{-1}(P-\hat {P})\hat{V}^{\mu^*,*}}_{Q^{\mu^*,\hat{\nu(\mu^*)}}-\hat{Q}^{\mu^*,*}}.
	\$
	\normalsize
	Thus, we obtain that 
	\#
	&\big\|Q^{*}-\hat{Q}^{\mu^*,*}\big\|_\infty \leq \max\big\{\big\|Q^{\mu^*,\nu^*}-\hat{Q}^{\mu^*,\nu^*}\big\|_\infty,~~\big\|Q^{\mu^*,\hat{\nu(\mu^*)}}-\hat{Q}^{\mu^*,*}\big\|_\infty\big\}\label{equ:lemma:proof_main_res_Q_deviation2_trash_1}\\
	&\quad= \max\Big\{\gamma\big\|(I-\gamma P^{\mu^*,\nu^*})^{-1}(P-\hat {P})\hat{V}^{\mu^*,\nu^*}\big\|_\infty,~~\gamma\big\|(I-\gamma P^{\mu^*,\hat{\nu(\mu^*)}})^{-1}(P-\hat {P})\hat{V}^{\mu^*,*}\big\|_\infty\Big\}. \notag
	\#
	For the first term in the $\max$ operator above, by similar arguments from \eqref{equ:lemma:proof_main_res_Q_deviation_trash_2_2}-\eqref{equ:lemma:proof_main_res_Q_deviation_trash_5_2}, we have
	\#
	&\big\|Q^{\mu^*,\nu^*}-\hat{Q}^{\mu^*,\nu^*}\big\|_\infty=\gamma\big\|(I-\gamma P^{\mu^*,\nu^*})^{-1}(P-\hat {P})\hat{V}^{\mu^*,\nu^*}\big\|_\infty \notag\\
	&\quad \leq \gamma\sqrt{\frac{2\log\big(16|\cS||\cA||\cB|/[(1-\gamma)^2\delta]\big)}{N}}\Big\|(I-\gamma P^{\mu^*,\nu^*})^{-1}\sqrt{\Var_{P}(\hat{V}^{\mu^*,\nu^*})}\Big\|_\infty+\frac{\gamma\Delta_{\delta,N}'}{1-\gamma}\label{equ:lemma:proof_main_res_Q_deviation2_trash_2}\\
	&\quad \leq\gamma\sqrt{\frac{2\log\big(16|\cS||\cA||\cB|/[(1-\gamma)^2\delta]\big)}{N}}\Big\|(I-\gamma P^{\mu^*,\nu^*})^{-1}\sqrt{\Var_{P}({V}^{\mu^*,\nu^*}-\hat{V}^{\mu^*,\nu^*})}\Big\|_\infty\notag\\
	&\qquad +\gamma\sqrt{\frac{2\log\big(16|\cS||\cA||\cB|/[(1-\gamma)^2\delta]\big)}{N}}\Big\|(I-\gamma P^{\mu^*,\nu^*})^{-1}\sqrt{\Var_{P}({V}^{\mu^*,\nu^*})}\Big\|_\infty+\frac{\gamma\Delta_{\delta,N}'}{1-\gamma}\label{equ:lemma:proof_main_res_Q_deviation2_trash_3}\\
	&\quad \leq\gamma\sqrt{\frac{2\log\big(16|\cS||\cA||\cB|/[(1-\gamma)^2\delta]\big)}{N}}\cdot\frac{\big\|{Q}^{\mu^*,\nu^*}-\hat{Q}^{\mu^*,\nu^*}\big\|_\infty}{1-\gamma}\notag\\
	&\qquad +\gamma\sqrt{\frac{2\log\big(16|\cS||\cA||\cB|/[(1-\gamma)^2\delta]\big)}{N}}\cdot\sqrt{\frac{2}{(1-\gamma)^3}}+\frac{\gamma\Delta_{\delta,N}'}{1-\gamma}\label{equ:lemma:proof_main_res_Q_deviation2_trash_4},
	\# 
	where \eqref{equ:lemma:proof_main_res_Q_deviation2_trash_2} is due to Lemma  \ref{lemma:absorb_main_res_2}, \eqref{equ:lemma:proof_main_res_Q_deviation2_trash_3} uses triangle inequality, and \eqref{equ:lemma:proof_main_res_Q_deviation2_trash_4} uses  Lemma \ref{lemma:bellman_variance}. Solving for $\big\|{Q}^{\mu^*,\nu^*}-\hat{Q}^{\mu^*,\nu^*}\big\|_\infty$ gives the bound for it. 
	
	Similarly, the second term in   the $\max$ operator in \eqref{equ:lemma:proof_main_res_Q_deviation2_trash_1} can be bounded by
	\#
	\big\|Q^{\mu^*,\hat{\nu(\mu^*)}}-\hat{Q}^{\mu^*,*}\big\|_\infty &\leq\gamma\sqrt{\frac{2\log\big(16|\cS||\cA||\cB|/[(1-\gamma)^2\delta]\big)}{N}}\cdot\frac{\big\|Q^{\mu^*,\hat{\nu(\mu^*)}}-\hat{Q}^{\mu^*,*}\big\|_\infty}{1-\gamma}\notag\\
	&\quad +\gamma\sqrt{\frac{2\log\big(16|\cS||\cA||\cB|/[(1-\gamma)^2\delta]\big)}{N}}\cdot\sqrt{\frac{2}{(1-\gamma)^3}}+\frac{\gamma\Delta_{\delta,N}'}{1-\gamma}\label{equ:lemma:proof_main_res_Q_deviation2_trash_4},
	\#
	which can be solved to obtain a bound for $\big\|Q^{\mu^*,\hat{\nu(\mu^*)}}-\hat{Q}^{\mu^*,*}\big\|_\infty$. Combining the two bounds and \eqref{equ:lemma:proof_main_res_Q_deviation2_trash_1}, we prove the second inequality in the lemma. The proof for the third inequality is analogous. 
\end{proof}

With Lemma \ref{lemma:proof_main_res_Q_deviation} in hand, we are ready to prove Theorem \ref{thm:main_res}. Note that the condition on $N$ in   Theorem \ref{thm:main_res} makes $\alpha_{\delta,N}<1/2$. Thus, by \eqref{equ:error_decomp_1}-\eqref{equ:error_decomp_2} in Lemma \ref{lemma:comp_wise_bnd} with $(\mu,\nu)$ being replaced by $(\hat{\mu},\hat{\nu})$, we have
\small
\$
-\|Q^{\hat{\mu},\hat{\nu}}-\hat{Q}^{\hat{\mu},\hat{\nu}}\|_{\infty}-\gamma\epsilon_{opt}-\|\hat{Q}^{\mu^*,*}-Q^*\|_\infty\leq Q^{\hat{\mu},\hat{\nu}}-Q^*\leq 
\|Q^{\hat{\mu},\hat{\nu}}-\hat{Q}^{\hat{\mu},\hat{\nu}}\|_{\infty}+\gamma\epsilon_{opt}+\|\hat{Q}^{*,\nu^*}-Q^*\|_\infty,
\$
\normalsize
where we use 
\$
\|\hat{Q}^{\hat{\mu},\hat{\nu}}-\hat{Q}^*\|_\infty=\gamma\|P\hat{V}^{\hat{\mu},\hat{\nu}}-P\hat{V}^*\|_\infty\leq \gamma \|\hat{V}^{\hat{\mu},\hat{\nu}}-\hat{V}^*\|_\infty\leq \gamma\epsilon_{opt}.
\$ 
Substituting in the bounds of $\|Q^{\hat{\mu},\hat{\nu}}-\hat{Q}^{\hat{\mu},\hat{\nu}}\|_\infty$, $\|Q^{*}-\hat{Q}^{\mu^*,*}\|_\infty$, and $\|Q^{*}-\hat{Q}^{*,\nu^*}\|_\infty$ in Lemma \ref{lemma:proof_main_res_Q_deviation}, we arrive at the final bound for $\|Q^{\hat{\mu},\hat{\nu}}-Q^*\|_\infty$:
\small
\$
\|Q^{\hat{\mu},\hat{\nu}}-Q^*\|_\infty\leq 4\gamma \Bigg(\sqrt{\frac{c\log(c|\cS||\cA||\cB|/[(1-\gamma)^2\delta])}{(1-\gamma)^3 N}}+\frac{c\log(c|\cS||\cA||\cB|/[(1-\gamma)^2\delta])}{(1-\gamma)^2N}\Bigg)+\frac{4\gamma\epsilon_{opt}}{1-\gamma}+\gamma\epsilon_{opt}.
\$
\normalsize
With a certain choice of $c$, we have $\|Q^{\hat{\mu},\hat{\nu}}-Q^*\|_\infty\leq 2\epsilon/3+5\gamma\epsilon_{opt}/(1-\gamma)$. 

For the last argument in Theorem \ref{thm:main_res}, by triangle inequality, with the same constant $c$ used above, we have
\$
\|\hat{Q}^{\hat{\mu},\hat{\nu}}-Q^*\|_\infty \leq \|{Q}^{\hat{\mu},\hat{\nu}}-Q^*\|_\infty+\|\hat{Q}^{\hat{\mu},\hat{\nu}}-{Q}^{\hat{\mu},\hat{\nu}}\|_\infty\leq \epsilon+\frac{9\gamma\epsilon_{opt}}{1-\gamma},
\$
which completes the proof. 
\hfill\QED

\subsection{Proof of Corollary \ref{coro:main_res_coro}}\label{sec:proof_main_coro}
We now prove Corollary \ref{coro:main_res_coro}, based on Theorem \ref{thm:main_res}. 
For any state $s$, we have
\#
&V^*(s)-V^{\tilde{\mu},*}(s)=\min_{\vartheta\in\Delta(\cB)}\EE_{a\sim{\mu}^*(\cdot\given s),b\sim \vartheta}\big[Q^*(s,a,b)\big]-\min_{\vartheta\in\Delta(\cB)}\EE_{a\sim\tilde{\mu}(\cdot\given s),b\sim \vartheta}\big[Q^{\tilde{\mu},*}(s,a,b)\big]\notag\\
&\quad =\min_{\vartheta\in\Delta(\cB)}\EE_{a\sim{\mu}^*(\cdot\given s),b\sim \vartheta}\big[Q^*(s,a,b)\big]-\min_{\vartheta\in\Delta(\cB)}\EE_{a\sim\tilde{\mu}(\cdot\given s),b\sim \vartheta}\big[Q^*(s,a,b)\big]\notag\\
&\quad\qquad +\min_{\vartheta\in\Delta(\cB)}\EE_{a\sim\tilde{\mu}(\cdot\given s),b\sim \vartheta}\big[Q^*(s,a,b)\big]-\min_{\vartheta\in\Delta(\cB)}\EE_{a\sim\tilde{\mu}(\cdot\given s),b\sim \vartheta}\big[Q^{\tilde{\mu},*}(s,a,b)\big]\notag\\
&\quad \leq \min_{\vartheta\in\Delta(\cB)}\EE_{a\sim{\mu}^*(\cdot\given s),b\sim \vartheta}\big[Q^*(s,a,b)\big]-\min_{\vartheta\in\Delta(\cB)}\EE_{a\sim\tilde{\mu}(\cdot\given s),b\sim \vartheta}\big[Q^*(s,a,b)\big]+\gamma \|V^*-V^{\tilde{\mu},*}\|_\infty \label{equ:coro_transition_trash_0}\\
&\quad \leq \min_{\vartheta\in\Delta(\cB)}\EE_{a\sim{\mu}^*(\cdot\given s),b\sim \vartheta}\big[Q^*(s,a,b)\big]-\min_{\vartheta\in\Delta(\cB)}\EE_{a\sim{\mu}^*(\cdot\given s),b\sim \vartheta}\big[\hat{Q}^{\hat{\mu},\hat{\nu}}(s,a,b)\big]\notag\\
&\qquad+\min_{\vartheta\in\Delta(\cB)}\EE_{a\sim\tilde{\mu}(\cdot\given s),b\sim \vartheta}\big[\hat{Q}^{\hat{\mu},\hat{\nu}}(s,a,b)\big]-\min_{\vartheta\in\Delta(\cB)}\EE_{a\sim\tilde{\mu}(\cdot\given s),b\sim \vartheta}\big[Q^*(s,a,b)\big]+\gamma \|V^*-V^{\tilde{\mu},*}\|_\infty \label{equ:coro_transition_trash_1}\\
&\quad \leq 2\big\|Q^*-\hat{Q}^{\hat{\mu},\hat{\nu}}\big\|_\infty+\gamma \|V^*-V^{\tilde{\mu},*}\|_\infty,\label{equ:coro_transition_trash_2}
\#
where \eqref{equ:coro_transition_trash_0} uses the fact that
\$
&\min_{\vartheta\in\Delta(\cB)}\EE_{a\sim\tilde{\mu}(\cdot\given s),b\sim \vartheta}\big[Q^*(s,a,b)\big]-\min_{\vartheta\in\Delta(\cB)}\EE_{a\sim\tilde{\mu}(\cdot\given s),b\sim \vartheta}\big[Q^{\tilde{\mu},*}(s,a,b)\big]\notag\\
&\quad\leq \max_{\vartheta\in\Delta(\cB)}\bigg|\EE_{a\sim\tilde{\mu}(\cdot\given s),b\sim \vartheta}\big[Q^*(s,a,b)\big]-\EE_{a\sim\tilde{\mu}(\cdot\given s),b\sim \vartheta}\big[Q^{\tilde{\mu},*}(s,a,b)\big]\bigg|\leq \gamma \|V^*-V^{\tilde \mu,*}\|_\infty,
\$
and \eqref{equ:coro_transition_trash_1}  is due to the fact that 
\$
-\min_{\vartheta\in\Delta(\cB)}\EE_{a\sim{\mu}^*(\cdot\given s),b\sim \vartheta}\big[\hat{Q}^{\hat{\mu},\hat{\nu}}(s,a,b)\big]+
\min_{\vartheta\in\Delta(\cB)}\EE_{a\sim\tilde{\mu}(\cdot\given s),b\sim \vartheta}\big[\hat{Q}^{\hat{\mu},\hat{\nu}}(s,a,b)\big]\geq 0,
\$
by definition of $\tilde \mu$. Hence,   \eqref{equ:coro_transition_trash_2}, together with Theorem \ref{thm:main_res}, implies that 
\#\label{equ:coro_transition_trash_3}
V^*-V^{\tilde{\mu},*}\leq \frac{2\big\|Q^*-\hat{Q}^{\hat{\mu},\hat{\nu}}\big\|_\infty}{1-\gamma}=\tilde\epsilon. 
\#
By similar arguments, we have 
\#\label{equ:coro_transition_trash_4}
V^{*,\tilde{\nu}}-V^*\leq \frac{2\big\|Q^*-\hat{Q}^{\hat{\mu},\hat{\nu}}\big\|_\infty}{1-\gamma}=\tilde\epsilon. 
\#
Combining \eqref{equ:coro_transition_trash_3} and \eqref{equ:coro_transition_trash_4} yields
\$
V^{\tilde{\mu},\tilde{\nu}}-V^{\tilde{\mu},*}\leq V^{*,\tilde{\nu}}-V^{\tilde{\mu},*}\leq 2\tilde \epsilon,\qquad 
V^{*,\tilde{\nu}}-V^{\tilde{\mu},\tilde{\nu}}\leq V^{*,\tilde{\nu}}-V^{\tilde{\mu},*}\leq 2\tilde \epsilon,
\$
which completes the proof. 
\hfill\QED

\subsection{Proof of Theorem \ref{thm:main_res_2}}\label{sec:proof_main_2}

We now prove the second main result, Theorem \ref{thm:main_res_2}. 
First, following the proof of Corollary \ref{coro:main_res_coro}, it suffices to prove that  $V^*-V^{\hat{\mu},*} \leq \tilde \epsilon$, $~V^{*,\hat{\nu}}-V^* \leq \tilde \epsilon$, since they together imply  
that $(\hat{\mu},\hat{\nu})$ is a  $2\tilde \epsilon$-Nash equilibrium. The following analysis is devoted to proving this argument.

The idea is similar to that presented in \S\ref{sec:proof_main}, i.e., we use the component-wise error decompositions in Lemma \ref{lemma:comp_wise_bnd},  but use \eqref{equ:error_decomp_5}-\eqref{equ:error_decomp_6} instead.    In particular, letting $\mu=\hat{\mu}$ and $\nu=\hat{\nu}$,  we have 
\#
V^{\hat{\mu},*}- V^*&\geq-\|Q^{\hat{\mu},*}-\hat{Q}^{\hat{\mu},*}\|_{\infty}-\epsilon_{opt}-\|\hat{Q}^{\mu^*,*}-Q^*\|_\infty\label{equ:main_res_2_comp_bnd_1}\\
	V^{*,\hat{\nu}}-V^*&\leq \|Q^{*,\hat{\nu}}-\hat{Q}^{*,\hat{\nu}}\|_{\infty}+\epsilon_{opt}+\|\hat{Q}^{*,\nu^*}-Q^*\|_\infty\label{equ:main_res_2_comp_bnd_2}. 
\#
Note that the bounds for $\|\hat{Q}^{\mu^*,*}-Q^*\|_\infty$  and $\|\hat{Q}^{*,\nu^*}-Q^*\|_\infty$ have already been established  in Lemma \ref{lemma:proof_main_res_Q_deviation} (without dependence on $\epsilon_{opt}$ and the \textbf{Planning Oracle}). It now suffices to bound $\|Q^{\hat{\mu},*}-\hat{Q}^{\hat{\mu},*}\|_{\infty}$ and $\|Q^{*,\hat{\nu}}-\hat{Q}^{*,\hat{\nu}}\|_{\infty}$. 
For the former term,  by Lemma \ref{lemma:comp_wise_bnd}, we  first have
	\$
	\underbrace{\gamma (I-\gamma \hat{P}^{\hat{\mu},\nu(\hat{\mu})})^{-1}(P-\hat {P})V^{\hat{\mu},\nu(\hat{\mu})}}_{Q^{\hat{\mu},*}-\hat{Q}^{\hat{\mu},\nu(\hat{\mu})}}\leq Q^{\hat{\mu},*}-\hat{Q}^{\hat{\mu},*}\leq \underbrace{\gamma (I-\gamma P^{\hat{\mu},\hat{\nu(\hat{\mu})}})^{-1}(P-\hat {P})\hat{V}^{\hat{\mu},\hat{\nu(\hat{\mu})}}}_{Q^{\hat{\mu},\hat{\nu(\hat{\mu})}}-\hat{Q}^{\hat{\mu},*}}.
	\$
	Thus, we know that 
	\#
	&\big\|Q^{\hat{\mu},*}-\hat{Q}^{\hat{\mu},*}\big\|_\infty \label{equ:lemma:proof_main_res_Q_deviation2_2_trash_1}\\
	&\leq\max\Big\{\gamma \big\|(I-\gamma P^{\hat{\mu},\hat{\nu(\hat{\mu})}})^{-1}(P-\hat {P})\hat{V}^{\hat{\mu},\hat{\nu(\hat{\mu})}}\big\|_\infty,~~\gamma \big\|(I-\gamma \hat{P}^{\hat{\mu},\nu(\hat{\mu})})^{-1}(P-\hat {P})V^{\hat{\mu},\nu(\hat{\mu})}\big\|_\infty\Big\}. \notag
	\#
	The first term in the $\max$ operator, where the policies in the pair $(\hat{\mu},\hat{\nu(\hat{\mu})})$ are both obtained from the empirical model $\hat{\cG}$, can be bounded similarly as that for $\|Q^{\hat{\mu},\hat{\nu}}-\hat{Q}^{\hat{\mu},\hat{\nu}}\|_\infty$ in Lemma \ref{lemma:proof_main_res_Q_deviation}. Specifically, following \eqref{equ:lemma:proof_main_res_Q_deviation_trash_1}-\eqref{equ:lemma:proof_main_res_Q_deviation_trash_4}, we have
	\#
	&\gamma\big\|(I-\gamma P^{\hat{\mu},\hat{\nu(\hat{\mu})}})^{-1}(P-\hat {P})\hat{V}^{\hat{\mu},*}\big\|_\infty\notag\\
	&\quad\leq \gamma\big\|(I-\gamma P^{\hat{\mu},\hat{\nu(\hat{\mu})}})^{-1}(P-\hat{P})\hat{V}^{*}\big\|_\infty+\gamma\big\|(I-\gamma P^{\hat{\mu},\hat{\nu(\hat{\mu})}})^{-1}(P-\hat{P}){(\hat{V}^{\hat{\mu},*}-\hat{V}^{*})}\big\|_\infty\label{equ:lemma:proof_main_res_Q_deviation_2_trash_2}\\
	&\quad\leq \gamma\big\|(I-\gamma P^{\hat{\mu},\hat{\nu(\hat{\mu})}})^{-1}\big|(P-\hat{P})\hat{V}^{*}\big|\big\|_\infty+\frac{2\gamma{\epsilon_{opt}}}{1-\gamma},\label{equ:lemma:proof_main_res_Q_deviation_2_trash_2_4}
	\#
	where \eqref{equ:lemma:proof_main_res_Q_deviation_2_trash_2} uses triangle inequality, and \eqref{equ:lemma:proof_main_res_Q_deviation_2_trash_2_4} is due to the optimization error of $\hat{\mu}$. 
	Then, to bound $\gamma\big\|(I-\gamma P^{\hat{\mu},\hat{\nu(\hat{\mu})}})^{-1}\big|(P-\hat{P})\hat{V}^{*}\big|\big\|_\infty$, the rest of the proof is analogous to the derivations in \eqref{equ:lemma:proof_main_res_Q_deviation_trash_2_2}-\eqref{equ:lemma:proof_main_res_Q_deviation_trash_5_2}, by replacing $\hat{\nu}$ therein by $\hat{\nu(\hat{\mu})}$, and bound $\|\hat{V}^{\hat{\mu},*}-\hat{V}^*\|_\infty$ by $\epsilon_{opt}$. 
	Solving for $\|Q^{\hat{\mu},\hat{\nu(\hat{\mu})}}-\hat{Q}^{\hat{\mu},*}\|_\infty$  yields the desired bound for the first term in the $\max$ in \eqref{equ:lemma:proof_main_res_Q_deviation2_2_trash_1}, namely, there exists some constant $c$ such that with probability greater than $1-\delta$,  
\#\label{equ:thm_main_res_2_final_1}	
&\big\|Q^{\hat{\mu},\hat{\nu(\hat{\mu})}}-\hat{Q}^{\hat{\mu},*}\big\|_\infty \notag\\
&\leq \frac{\gamma}{1-\alpha'_{\delta,N}}\Bigg(\sqrt{\frac{c\log(c(C+1)|\cS||\cA||\cB|/[(1-\gamma)^4\delta])}{(1-\gamma)^3 N}}+\frac{c\log(c(C+1)|\cS||\cA||\cB|/[(1-\gamma)^4\delta])}{(1-\gamma)^2N}\Bigg)\notag\\
	&\quad +\frac{1}{1-\alpha'_{\delta,N}}\cdot \frac{\gamma\epsilon_{opt}}{(1-\gamma)}\Bigg(1+\sqrt{\frac{\log(c(C+1)|\cS||\cA||\cB|/[(1-\gamma)^4\delta])}{N}}\Bigg),
\#
where $\alpha'_{\delta,N}$ is defined as
\$
	\alpha'_{\delta,N}=\frac{\gamma}{1-\gamma}\sqrt{\frac{2\log(8(C+1)|\cS||\cA||\cB|/[(1-\gamma)^4\delta])}{N}}. 
\$

For the second term in the $\max$ in \eqref{equ:lemma:proof_main_res_Q_deviation2_2_trash_1}, note that $\hat{\mu}$ is obtained from $\hat{\cG}$, while $\nu(\hat{\mu})$ is obtained from the true model $\cG$. \kzedit{Note that this mismatch is one key difference from the single-agent setting \citep{yang2019optimality} and the above proof for the first term.} 
By Lemma \ref{lemma:V_hat_mu_nu_error},   it holds that 
\small
\#
&\gamma\big\|(I-\gamma \hat{P}^{\hat{\mu},{\nu(\hat{\mu})}})^{-1}\big|(P-\hat{P}){V}^{\hat{\mu},*}\big|\big\|_\infty \notag\\
&\leq \gamma\sqrt{\frac{2\log\big(8(C+1)|\cS||\cA||\cB|/[(1-\gamma)^4\delta]\big)}{N}}\Big\|(I-\gamma \hat{P}^{\hat{\mu},{\nu(\hat{\mu})}})^{-1}\sqrt{\Var_{P}({V}^{\hat{\mu},*})}\Big\|_\infty+\frac{\gamma\Delta_{\delta,N}'}{1-\gamma}\notag\\
	&\leq \gamma\sqrt{\frac{2\log\big(8(C+1)|\cS||\cA||\cB|/[(1-\gamma)^4\delta]\big)}{N}}\cdot\bigg[\Big\|(I-\gamma \hat{P}^{\hat{\mu},{\nu(\hat{\mu})}})^{-1} \Big(\sqrt{\Var_{\hat{P}}(\hat{V}^{\hat{\mu},\nu(\hat{\mu})})}\label{equ:lemma:proof_main_res_Q_deviation_final_trash_2_2}\\
	&\quad +\sqrt{\Var_{\hat{P}}({V}^{\hat{\mu},*}-\hat{V}^{\hat{\mu},\nu(\hat{\mu})})}\Big)\Big\|_\infty+\Big\|(I-\gamma \hat{P}^{\hat{\mu},{\nu(\hat{\mu})}})^{-1} \Big|\sqrt{\Var_{P}({V}^{\hat{\mu},*})}-\sqrt{\Var_{\hat{P}}({V}^{\hat{\mu},*})}\Big| \Big\|_\infty\bigg]+\frac{\gamma\Delta_{\delta,N}'}{1-\gamma}\notag\\
	&\leq \gamma\sqrt{\frac{2\log\big(8(C+1)|\cS||\cA||\cB|/[(1-\gamma)^4\delta]\big)}{N}}\Bigg(\sqrt{\frac{2}{(1-\gamma)^3}}+\frac{{\|Q^{\hat{\mu},*}-\hat{Q}^{\hat{\mu},\nu(\hat{\mu})}\|_\infty}}{1-\gamma}\Bigg)+\frac{\gamma\Delta_{\delta,N}'}{1-\gamma}\label{equ:lemma:proof_main_res_Q_deviation_final_trash_4_2}\\
	&\quad +\frac{\gamma}{1-\gamma}\sqrt{\frac{2\log\big(8(C+1)|\cS||\cA||\cB|/[(1-\gamma)^4\delta]\big)}{N}}\Big\|
	\Big|\sqrt{\Var_{P}({V}^{\hat{\mu},*})}-\sqrt{\Var_{\hat{P}}({V}^{\hat{\mu},*})}\Big|
	\Big\|_\infty,\notag
\#	
\normalsize
where \eqref{equ:lemma:proof_main_res_Q_deviation_final_trash_2_2} uses the norm-like triangle-inequality property of $\sqrt{\Var_P(V)}$ and triangle inequality,   \eqref{equ:lemma:proof_main_res_Q_deviation_final_trash_4_2} is due to Lemma \ref{lemma:bellman_variance}, and the facts that $\sqrt{\Var_{P}(X)}\leq \|X\|_\infty$, 
 $\|V^{\hat{\mu},*}-\hat{V}^{\hat{\mu},\nu(\hat{\mu})}\|_\infty\leq \|Q^{\hat{\mu},*}-\hat{Q}^{\hat{\mu},\nu(\hat{\mu})}\|_\infty$, and Lemma \ref{lemma:inf_norm_error}.  Moreover, notice that
\small
\#
&\Big\|\Big|\sqrt{\Var_{P}({V}^{\hat{\mu},*})}-\sqrt{\Var_{\hat{P}}({V}^{\hat{\mu},*})}\Big|\Big\|_\infty \notag\\
&\quad\leq \Big\|\Big|\sqrt{\Var_{P}({V}^{\hat{\mu},*})}-\sqrt{\Var_{{P}}({V}^{*})}\Big|\Big\|_\infty+\Big\|\Big|\sqrt{\Var_{\hat{P}}({V}^{\hat{\mu},*})}-\sqrt{\Var_{\hat{P}}({V}^{*})}\Big|\Big\|_\infty\notag\\
&\qquad\quad +\Big\|\Big|\sqrt{\Var_{{P}}({V}^{*})}-\sqrt{\Var_{\hat{P}}({V}^{*})}\Big|\Big\|_\infty\label{equ:lemma:proof_main_res_Q_deviation_final_trash_4_2.5}\\
&\quad \leq  \Big\|\sqrt{\Var_{P}({V}^{\hat{\mu},*}-{V}^{*})}\Big\|_\infty+\Big\|\sqrt{\Var_{\hat{P}}({V}^{\hat{\mu},*}-{V}^{*})}\Big\|_\infty+\Big\|\sqrt{\Big|\Var_{{P}}({V}^{*})-\Var_{\hat{P}}({V}^{*})\Big|}\Big\|_\infty\label{equ:lemma:proof_main_res_Q_deviation_final_trash_4_3}\\
&\quad \leq  2\big\|{V}^{\hat{\mu},*}-{V}^{*}\big\|_\infty+\sqrt{\Big\|\Var_{{P}}({V}^{*})-\Var_{\hat{P}}({V}^{*})\Big\|_\infty},\label{equ:lemma:proof_main_res_Q_deviation_final_trash_4_4}
\#
\normalsize
where \eqref{equ:lemma:proof_main_res_Q_deviation_final_trash_4_2.5} uses triangle inequality, \eqref{equ:lemma:proof_main_res_Q_deviation_final_trash_4_3} uses the norm-like triangle inequality of $\sqrt{\Var_P(V)}$ and $\sqrt{\Var_{\hat{P}}(V)}$, and the fact $|\sqrt{X}-\sqrt{Y}|\leq \sqrt{|X-Y|}$ for $X,Y\geq 0$, and \eqref{equ:lemma:proof_main_res_Q_deviation_final_trash_4_4} uses $\sqrt{\Var_{P}(X)}\leq \|X\|_\infty$ and the definition of $\|\cdot\|_\infty$. In addition, we know that with probability at least $1-\delta$, 
\#\label{equ:lemma:proof_main_res_Q_deviation_final_trash_4_5}
&\Big\|\Var_{{P}}({V}^{*})-\Var_{\hat{P}}({V}^{*})\Big\|_\infty=\Big\|(P-\hat{P})(V^*)^2-\Big((PV^*)^2-(\hat{P}V^*)^2\Big)\Big\|_\infty\notag\\
&\quad \leq \Big\|(P-\hat{P})(V^*)^2\Big\|_\infty+\Big\|(PV^*)^2-(\hat{P}V^*)^2\Big\|_\infty\notag\\
&\quad \leq \frac{1}{(1-\gamma)^2}\sqrt{\frac{2\log(2|\cS||\cA||\cB|/\delta)}{N}}+\frac{2}{1-\gamma}\big\|(P-\hat{P})V^*\big\|_\infty\leq \frac{3}{(1-\gamma)^2}\sqrt{\frac{2\log(2|\cS||\cA||\cB|/\delta)}{N}},
\#
due to Hoeffding bound and $\|V^*\|_\infty\leq 1/(1-\gamma)$.  
Combining \eqref{equ:lemma:proof_main_res_Q_deviation_final_trash_4_2}, \eqref{equ:lemma:proof_main_res_Q_deviation_final_trash_4_4}, and \eqref{equ:lemma:proof_main_res_Q_deviation_final_trash_4_5} yields
\small
\$
&\Big\|Q^{\hat{\mu},*}-\hat{Q}^{\hat{\mu},\nu(\hat{\mu})}\Big\|_\infty \\
&\leq \gamma\sqrt{\frac{2\log\big(8(C+1)|\cS||\cA||\cB|/[(1-\gamma)^4\delta]\big)}{N}}\Bigg(\sqrt{\frac{2}{(1-\gamma)^3}}+\frac{{\|Q^{\hat{\mu},*}-\hat{Q}^{\hat{\mu},\nu(\hat{\mu})}\|_\infty}}{1-\gamma}\Bigg)+\frac{\gamma\Delta_{\delta,N}'}{1-\gamma}\\
	&\qquad+\frac{\gamma}{1-\gamma}\sqrt{\frac{2\log\big(8(C+1)|\cS||\cA||\cB|/[(1-\gamma)^4\delta]\big)}{N}}\Big(2\big\|{V}^{\hat{\mu},*}-{V}^{*}\big\|_\infty\\
	&\qquad+\sqrt{\frac{3}{(1-\gamma)^2}\sqrt{\frac{2\log(2|\cS||\cA||\cB|/\delta)}{N}}}
	\Big).
\$
\normalsize
Solving for $\|Q^{\hat{\mu},*}-\hat{Q}^{\hat{\mu},\nu(\hat{\mu})}\|_\infty$ further leads to 
\#\label{equ:thm_main_res_2_final_2}
&\big\|Q^{\hat{\mu},*}-\hat{Q}^{\hat{\mu},\nu(\hat{\mu})}\big\|_\infty \notag\\
&\leq \frac{\gamma}{1-\alpha'_{\delta,N}}\Bigg(\sqrt{\frac{c\log(c(C+1)|\cS||\cA||\cB|/[(1-\gamma)^4\delta])}{(1-\gamma)^3 N}}+\frac{c\log(c(C+1)|\cS||\cA||\cB|/[(1-\gamma)^4\delta])}{(1-\gamma)^2N}\Bigg)\notag\\
	&\quad +\frac{1}{1-\alpha'_{\delta,N}}\cdot \frac{\gamma}{1-\gamma}\sqrt{\frac{2\log\big(8(C+1)|\cS||\cA||\cB|/[(1-\gamma)^4\delta]\big)}{N}}\Big(2\big\|{V}^{\hat{\mu},*}-{V}^{*}\big\|_\infty\notag\\
	&\quad+\frac{1}{1-\gamma}\sqrt[4]{\frac{c\log(c(C+1)|\cS||\cA||\cB|/\delta)}{N}}\Big), 
\#
for some absolute constant $c$. 

Now we substitute \eqref{equ:thm_main_res_2_final_1} and \eqref{equ:thm_main_res_2_final_2} into \eqref{equ:lemma:proof_main_res_Q_deviation2_2_trash_1}, to complete the bound in \eqref{equ:main_res_2_comp_bnd_1}. If the first term in the $\max$ in \eqref{equ:lemma:proof_main_res_Q_deviation2_2_trash_1} is larger, and noticing  that the choice of $N$ in the theorem can make $\alpha'_{\delta,N}<1/5$,   
\eqref{equ:main_res_2_comp_bnd_1}, \eqref{equ:lemma:proof_main_res_Q_deviation2_2_trash_1}, 
\eqref{equ:thm_main_res_2_final_1},   and  Lemma \ref{lemma:proof_main_res_Q_deviation} together lead to 
\small
\#\label{equ:thm_main_res_2_final_3}
V^*-V^{\hat{\mu},*} &\leq \frac{5\gamma}{2}\Bigg(\sqrt{\frac{c\log(c(C+1)|\cS||\cA||\cB|/[(1-\gamma)^4\delta])}{(1-\gamma)^3 N}}+\frac{c\log(c(C+1)|\cS||\cA||\cB|/[(1-\gamma)^4\delta])}{(1-\gamma)^2N}\Bigg) \notag\\
&\qquad+ \frac{5\gamma\epsilon_{opt}}{2(1-\gamma)}+\epsilon_{opt},
\#
\normalsize 
with some absolute constant $c$, where we have replaced the term $\log(1/(1-\gamma)^2)$ in the bounds for $\|Q^{*}-\hat{Q}^{\mu^*,*}\|_\infty$ and $\|Q^{*}-\hat{Q}^{*,\nu^*}\|_\infty$ in Lemma \ref{lemma:proof_main_res_Q_deviation} (including that in the definition of $\alpha_{\delta,N}$) by $\log((C+1)/(1-\gamma)^4)$, a larger number.  If the second term in the $\max$ in \eqref{equ:lemma:proof_main_res_Q_deviation2_2_trash_1} is larger, \eqref{equ:main_res_2_comp_bnd_1}, \eqref{equ:lemma:proof_main_res_Q_deviation2_2_trash_1}, 
\eqref{equ:thm_main_res_2_final_2},   and  Lemma \ref{lemma:proof_main_res_Q_deviation} together yield
\small
\$
 V^*-V^{\hat{\mu},*} &\leq \frac{5\gamma}{2}\Bigg(\sqrt{\frac{c\log(c(C+1)|\cS||\cA||\cB|/[(1-\gamma)^4\delta])}{(1-\gamma)^3 N}}+\frac{c\log(c(C+1)|\cS||\cA||\cB|/[(1-\gamma)^4\delta])}{(1-\gamma)^2N}\Bigg) \\
&\qquad +\frac{5}{4}\cdot\frac{\gamma}{1-\gamma}\sqrt{\frac{2\log\big(8(C+1)|\cS||\cA||\cB|/[(1-\gamma)^4\delta]\big)}{N}}\Big(2\big\|{V}^{\hat{\mu},*}-{V}^{*}\big\|_\infty\\
&\qquad+\frac{1}{1-\gamma}\sqrt[4]{\frac{c\log(c(C+1)|\cS||\cA||\cB|/\delta)}{N}}\Big)+\epsilon_{opt},
\$
\normalsize
where we have used the fact that $\alpha'_{\delta,N}<1/5$. Taking infinity norm on both sides and solving for $\|{V}^{\hat{\mu},*}-{V}^{*}\|_\infty$, we have
\#\label{equ:thm_main_res_2_final_4}
 &V^*-V^{\hat{\mu},*}\leq\big\|{V}^{\hat{\mu},*}-{V}^{*}\big\|_\infty \leq {5\gamma}\Bigg(\sqrt{\frac{c\log(c(C+1)|\cS||\cA||\cB|/[(1-\gamma)^4\delta])}{(1-\gamma)^3 N}}+ \\
&\quad \frac{c\log(c(C+1)|\cS||\cA||\cB|/[(1-\gamma)^4\delta])}{(1-\gamma)^2N}\Bigg)+\frac{5\gamma}{2(1-\gamma)^2}\Big(\frac{c\log(c(C+1)|\cS||\cA||\cB|/\delta)}{N}\Big)^{3/4}+2\epsilon_{opt},  \notag
\#
with some absolute constant $c$ (which can be different from that in \eqref{equ:thm_main_res_2_final_3}). 
Using the choice of $N$ in the theorem, and combining  \eqref{equ:thm_main_res_2_final_3} and \eqref{equ:thm_main_res_2_final_4}, 
we finally have $V^*-V^{\hat{\mu},*}\leq \epsilon+4\epsilon_{opt}/(1-\gamma)$. 
Note that on the right-hand side of  \eqref{equ:thm_main_res_2_final_4}, the $N$ that makes the 
third term   to be $\cO(\epsilon)$ is $\tilde\cO(1/[(1-\gamma)^{8/3}\epsilon^{4/3}])$, which is   dominated by $\tilde\cO(1/[(1-\gamma)^3\epsilon^2])$ when $\epsilon\in(0,1/(1-\gamma)^{1/2}]$. In addition, to make $\alpha'_{\delta,N}<1/5$, $N$ should be larger than $\tilde\cO(1/(1-\gamma)^2)$, which  is consistent with both the first and third terms on the right-hand side of   \eqref{equ:thm_main_res_2_final_4} to be $\tilde\cO(1/(1-\gamma)^{1/2})$, determining the allowed range of $\epsilon$ to be $(0,1/(1-\gamma)^{1/2}]$.  This proves the first bound in the theorem.

The proof for completing the bound in \eqref{equ:main_res_2_comp_bnd_2} is analogous: using Lemmas  \ref{lemma:V_hat_mu_nu_error}  and \ref{lemma:comp_wise_bnd} to bound  $\|Q^{*,\hat{\nu}}-\hat{Q}^{*,\hat{\nu}}\|_{\infty}$, which is then substituted into \eqref{equ:main_res_2_comp_bnd_2}. 
This completes the proof. 
\hfill\QED

\section{Concluding Remarks}\label{sec:conclusion}

In this paper, we have established the first (near-)minimax optimal sample complexity for model-based MARL, when a generative model is available. Our setting was focused on the basic model in MARL --- infinite-horizon discounted two-player zero-sum   Markov games \citep{littman1994markov}. By noticing that reward is not used in the sampling process of this model-based approach, we have separated the reward-aware and reward-agnostic cases, and established sample complexity lower bounds correspondingly, a unique separation in the multi-agent context. We have then shown  that this simple model-based approach is near-minimax optimal in the reward-aware case, with only a gap in the dependence on $|\cA|,|\cB|$; and is indeed minimax-optimal in the reward-agnostic case. This separation and the (near-)optimal results have not only justified the sample-efficiency of this simple approach, but also reflected both its power (easily handling multiple reward functions known in hindsight), and its limitation (less adaptive and can hardly achieve the optimal $\tilde\cO(|\cA|+|\cB|)$).  
We believe that our results may shed light  on the choice of model-free and model-based approaches in various MARL scenarios in practice.  

Our results  naturally open up the following interesting future directions. First, besides the turn-based setting in \cite{sidford2019solving} and the episodic setting in the concurrent work  \cite{bai2020near}, the minimax-optimal sample complexity in \emph{all}  parameters for \emph{model-free}  algorithms  is still open. As discussed in \S\ref{sec:res}, in the reward-aware case, the $\tilde\Omega(|\cA|+|\cB|)$ lower bound may only be attainable by model-free ones. It would be interesting to compare the results  with our model-based ones, in both reward-aware and reward-agnostic cases, to better understand their pros and cons in various MARL settings.  It would also  be interesting to explore the (near-)optimal sample complexity or regret of model-based approaches in other MARL scenarios, such as when no generative model is available, episodic and average-reward settings, general-sum Markov games, and the setting with function approximation.

\acks{The research of K.Z. and T.B. was supported in part by the US
  Army Research Laboratory (ARL) Cooperative Agreement
  W911NF-17-2-0196, and in part by the Office of Naval Research (ONR)
  MURI Grant N00014-16-1-2710. The research of S.K. was supported by the funding from the ONR award N00014-18-1-2247, and NSF Awards CCF-1703574 and CCF-1740551. We would also like to thank all the anonymous reviewers for their valuable feedback that helped  improve our paper. 
}   
 

\vskip 0.2in
 
\bibliographystyle{ims} 
\bibliography{main}

\begin{thebibliography}{83}
\providecommand{\natexlab}[1]{#1}
\providecommand{\url}[1]{\texttt{#1}}
\expandafter\ifx\csname urlstyle\endcsname\relax
  \providecommand{\doi}[1]{doi: #1}\else
  \providecommand{\doi}{doi: \begingroup \urlstyle{rm}\Url}\fi

\bibitem[Agarwal et~al.(2019{\natexlab{a}})Agarwal, Kakade, and
  Yang]{yang2019optimality}
Alekh Agarwal, Sham Kakade, and Lin~F Yang.
\newblock Model-based reinforcement learning with a generative model is minimax
  optimal.
\newblock \emph{arXiv preprint arXiv:1906.03804}, 2019{\natexlab{a}}.

\bibitem[Agarwal et~al.(2019{\natexlab{b}})Agarwal, Kakade, Lee, and
  Mahajan]{agarwal2019optimality}
Alekh Agarwal, Sham~M Kakade, Jason~D Lee, and Gaurav Mahajan.
\newblock Optimality and approximation with policy gradient methods in {M}arkov
  decision processes.
\newblock \emph{arXiv preprint arXiv:1908.00261}, 2019{\natexlab{b}}.

\bibitem[Azar et~al.(2012)Azar, Munos, and Kappen]{azar2012sample}
Mohammad~Gheshlaghi Azar, R{\'e}mi Munos, and Bert Kappen.
\newblock On the sample complexity of reinforcement learning with a generative
  model.
\newblock \emph{arXiv preprint arXiv:1206.6461}, 2012.

\bibitem[Azar et~al.(2013)Azar, Munos, and Kappen]{azar2013minimax}
Mohammad~Gheshlaghi Azar, R{\'e}mi Munos, and Hilbert~J Kappen.
\newblock Minimax {PAC} bounds on the sample complexity of reinforcement
  learning with a generative model.
\newblock \emph{Machine Learning}, 91\penalty0 (3):\penalty0 325--349, 2013.

\bibitem[Azar et~al.(2017)Azar, Osband, and Munos]{azar2017minimax}
Mohammad~Gheshlaghi Azar, Ian Osband, and R{\'e}mi Munos.
\newblock Minimax regret bounds for reinforcement learning.
\newblock In \emph{International Conference on Machine Learning-Volume 70},
  pages 263--272, 2017.

\bibitem[Bai and Jin(2020)]{bai2020provable}
Yu~Bai and Chi Jin.
\newblock Provable self-play algorithms for competitive reinforcement learning.
\newblock In \emph{International Conference on Machine Learning}, pages
  551--560. PMLR, 2020.

\bibitem[Bai et~al.(2020)Bai, Jin, and Yu]{bai2020near}
Yu~Bai, Chi Jin, and Tiancheng Yu.
\newblock Near-optimal reinforcement learning with self-play.
\newblock \emph{Advances in neural information processing systems},
  33:\penalty0 2159--2170, 2020.

\bibitem[Brafman and Tennenholtz(2002)]{brafman2002r}
Ronen~I Brafman and Moshe Tennenholtz.
\newblock R-{MAX}-{A} general polynomial time algorithm for near-optimal
  reinforcement learning.
\newblock \emph{Journal of Machine Learning Research}, 3\penalty0
  (Oct):\penalty0 213--231, 2002.

\bibitem[Busoniu et~al.(2008)Busoniu, Babuska, and
  De~Schutter]{busoniu2008comprehensive}
Lucian Busoniu, Robert Babuska, and Bart De~Schutter.
\newblock A comprehensive survey of multi-agent reinforcement learning.
\newblock \emph{IEEE Transactions on Systems, Man, And Cybernetics-Part C:
  Applications and Reviews, 38 (2), 2008}, 2008.

\bibitem[Cherukuri et~al.(2017)Cherukuri, Mallada, Low, and
  Cort{\'e}s]{cherukuri2017role}
Ashish Cherukuri, Enrique Mallada, Steven Low, and Jorge Cort{\'e}s.
\newblock The role of convexity in saddle-point dynamics: {L}yapunov function
  and robustness.
\newblock \emph{IEEE Transactions on Automatic Control}, 63\penalty0
  (8):\penalty0 2449--2464, 2017.

\bibitem[Chow et~al.(2018)Chow, Nachum, and Ghavamzadeh]{chow2018path}
Yinlam Chow, Ofir Nachum, and Mohammad Ghavamzadeh.
\newblock Path consistency learning in {T}sallis entropy regularized {MDP}s.
\newblock In \emph{International Conference on Machine Learning}, pages
  979--988, 2018.

\bibitem[Dann and Brunskill(2015)]{dann2015sample}
Christoph Dann and Emma Brunskill.
\newblock Sample complexity of episodic fixed-horizon reinforcement learning.
\newblock In \emph{Advances in Neural Information Processing Systems}, pages
  2818--2826, 2015.

\bibitem[Dantzig(1998)]{dantzig1998linear}
George~Bernard Dantzig.
\newblock \emph{Linear programming and extensions}, volume~48.
\newblock Princeton University Press, 1998.

\bibitem[Ding et~al.(2022)Ding, Wei, Zhang, and Jovanovic]{ding2022independent}
Dongsheng Ding, Chen-Yu Wei, Kaiqing Zhang, and Mihailo Jovanovic.
\newblock Independent policy gradient for large-scale {M}arkov potential games:
  {S}harper rates, function approximation, and game-agnostic convergence.
\newblock In \emph{International Conference on Machine Learning}, pages
  5166--5220. PMLR, 2022.

\bibitem[Facchinei and Pang(2007)]{facchinei2007finite}
Francisco Facchinei and Jong-Shi Pang.
\newblock \emph{Finite-Dimensional Variational Inequalities and Complementarity
  Problems}.
\newblock Springer Science \& Business Media, 2007.

\bibitem[Fan et~al.(2019)Fan, Yang, Xie, and Wang]{yang2019theoretical}
Jianqing Fan, Zhuoran Yang, Yuchen Xie, and Zhaoran Wang.
\newblock A theoretical analysis of deep {Q}-learning.
\newblock \emph{arXiv preprint arXiv:1901.00137}, 2019.

\bibitem[Fearnley and Savani(2016)]{fearnley2016finding}
John Fearnley and Rahul Savani.
\newblock Finding approximate {N}ash equilibria of bimatrix games via payoff
  queries.
\newblock \emph{ACM Transactions on Economics and Computation}, 4\penalty0
  (4):\penalty0 1--19, 2016.

\bibitem[Fearnley et~al.(2015)Fearnley, Gairing, Goldberg, and
  Savani]{JMLR:v16:fearnley15a}
John Fearnley, Martin Gairing, Paul~W. Goldberg, and Rahul Savani.
\newblock Learning equilibria of games via payoff queries.
\newblock \emph{Journal of Machine Learning Research}, 16\penalty0
  (39):\penalty0 1305--1344, 2015.

\bibitem[Feng et~al.(2019)Feng, Yin, and Yang]{feng2019does}
Fei Feng, Wotao Yin, and Lin~F Yang.
\newblock Does knowledge transfer always help to learn a better policy?
\newblock \emph{arXiv preprint arXiv:1912.02986}, 2019.

\bibitem[Gao and Pavel(2017)]{gao2017properties}
Bolin Gao and Lacra Pavel.
\newblock On the properties of the softmax function with application in game
  theory and reinforcement learning.
\newblock \emph{arXiv preprint arXiv:1704.00805}, 2017.

\bibitem[Geist et~al.(2019)Geist, Scherrer, and Pietquin]{pmlr-v97-geist19a}
Matthieu Geist, Bruno Scherrer, and Olivier Pietquin.
\newblock A theory of regularized {M}arkov decision processes.
\newblock In \emph{International Conference on Machine Learning}, pages
  2160--2169, 2019.

\bibitem[Ghavamzadeh et~al.(2015)Ghavamzadeh, Mannor, Pineau, Tamar,
  et~al.]{ghavamzadeh2015bayesian}
Mohammad Ghavamzadeh, Shie Mannor, Joelle Pineau, Aviv Tamar, et~al.
\newblock Bayesian reinforcement learning: {A} survey.
\newblock \emph{Foundations and Trends{\textregistered} in Machine Learning},
  8\penalty0 (5-6):\penalty0 359--483, 2015.

\bibitem[Greenwald et~al.(2003)Greenwald, Hall, and
  Serrano]{greenwald2003correlated}
Amy Greenwald, Keith Hall, and Roberto Serrano.
\newblock Correlated {Q}-learning.
\newblock In \emph{International Conference on Machine Learning}, volume~20,
  page 242, 2003.

\bibitem[Grill et~al.(2019)Grill, Domingues, M{\'e}nard, Munos, and
  Valko]{grill2019planning}
Jean-Bastien Grill, Omar~Darwiche Domingues, Pierre M{\'e}nard, R{\'e}mi Munos,
  and Michal Valko.
\newblock Planning in entropy-regularized {M}arkov decision processes and
  games.
\newblock In \emph{Advances in Neural Information Processing Systems}, pages
  12383--12392, 2019.

\bibitem[Gr{\"o}tschel et~al.(1981)Gr{\"o}tschel, Lov{\'a}sz, and
  Schrijver]{grotschel1981ellipsoid}
Martin Gr{\"o}tschel, L{\'a}szl{\'o} Lov{\'a}sz, and Alexander Schrijver.
\newblock The ellipsoid method and its consequences in combinatorial
  optimization.
\newblock \emph{Combinatorica}, 1\penalty0 (2):\penalty0 169--197, 1981.

\bibitem[Haarnoja et~al.(2018)Haarnoja, Zhou, Abbeel, and
  Levine]{pmlr-v80-haarnoja18b}
Tuomas Haarnoja, Aurick Zhou, Pieter Abbeel, and Sergey Levine.
\newblock Soft actor-critic: {O}ff-policy maximum entropy deep reinforcement
  learning with a stochastic actor.
\newblock In \emph{International Conference on Machine Learning}, pages
  1861--1870, 2018.

\bibitem[Hu and Wellman(2003)]{hu2003nash}
Junling Hu and Michael~P Wellman.
\newblock Nash {Q}-learning for general-sum stochastic games.
\newblock \emph{Journal of Machine Learning Research}, 4\penalty0
  (Nov):\penalty0 1039--1069, 2003.

\bibitem[Jaksch et~al.(2010)Jaksch, Ortner, and Auer]{jaksch2010near}
Thomas Jaksch, Ronald Ortner, and Peter Auer.
\newblock Near-optimal regret bounds for reinforcement learning.
\newblock \emph{Journal of Machine Learning Research}, 11\penalty0
  (Apr):\penalty0 1563--1600, 2010.

\bibitem[Jansen(1981)]{jansen1981regularity}
MJM Jansen.
\newblock Regularity and stability of equilibrium points of bimatrix games.
\newblock \emph{Mathematics of Operations Research}, 6\penalty0 (4):\penalty0
  530--550, 1981.

\bibitem[Jia et~al.(2019)Jia, Yang, and Wang]{jia2019feature}
Zeyu Jia, Lin~F Yang, and Mengdi Wang.
\newblock Feature-based {Q}-learning for two-player stochastic games.
\newblock \emph{arXiv preprint arXiv:1906.00423}, 2019.

\bibitem[Jin et~al.(2018)Jin, Allen-Zhu, Bubeck, and Jordan]{jin2018q}
Chi Jin, Zeyuan Allen-Zhu, Sebastien Bubeck, and Michael~I Jordan.
\newblock Is {Q}-learning provably efficient?
\newblock In \emph{Advances in Neural Information Processing Systems}, pages
  4863--4873, 2018.

\bibitem[Jin et~al.(2020)Jin, Krishnamurthy, Simchowitz, and Yu]{jin2020reward}
Chi Jin, Akshay Krishnamurthy, Max Simchowitz, and Tiancheng Yu.
\newblock Reward-free exploration for reinforcement learning.
\newblock \emph{arXiv preprint arXiv:2002.02794}, 2020.

\bibitem[Jin et~al.(2021)Jin, Liu, Wang, and Yu]{jin2021v}
Chi Jin, Qinghua Liu, Yuanhao Wang, and Tiancheng Yu.
\newblock V-learning -- {A} simple, efficient, decentralized algorithm for
  multiagent rl.
\newblock \emph{arXiv preprint arXiv:2110.14555}, 2021.

\bibitem[Kakade(2003)]{kakade2003sample}
Sham~Machandranath Kakade.
\newblock \emph{On the sample complexity of reinforcement learning}.
\newblock PhD thesis, University of London, England, 2003.

\bibitem[Karmarkar(1984)]{karmarkar1984new}
Narendra Karmarkar.
\newblock A new polynomial-time algorithm for linear programming.
\newblock In \emph{Annual ACM Symposium on Theory of Computing}, pages
  302--311, 1984.

\bibitem[Kearns and Singh(1999)]{kearns1999finite}
Michael~J Kearns and Satinder~P Singh.
\newblock Finite-sample convergence rates for {Q}-learning and indirect
  algorithms.
\newblock In \emph{Advances in Neural Information Processing Systems}, pages
  996--1002, 1999.

\bibitem[Krantz and Parks(2012)]{krantz2012implicit}
Steven~G Krantz and Harold~R Parks.
\newblock \emph{The Implicit Function Theorem: History, Theory, and
  Applications}.
\newblock Springer Science \& Business Media, 2012.

\bibitem[Lattimore and Hutter(2012)]{lattimore2012pac}
Tor Lattimore and Marcus Hutter.
\newblock {PAC} bounds for discounted mdps.
\newblock In \emph{International Conference on Algorithmic Learning Theory},
  pages 320--334. Springer, 2012.

\bibitem[Leonardos et~al.(2021)Leonardos, Overman, Panageas, and
  Piliouras]{leonardos2021global}
Stefanos Leonardos, Will Overman, Ioannis Panageas, and Georgios Piliouras.
\newblock Global convergence of multi-agent policy gradient in {M}arkov
  potential games.
\newblock In \emph{International Conference on Learning Representations}, 2021.

\bibitem[Li et~al.(2020)Li, Wei, Chi, Gu, and Chen]{li2020breaking}
Gen Li, Yuting Wei, Yuejie Chi, Yuantao Gu, and Yuxin Chen.
\newblock Breaking the sample size barrier in model-based reinforcement
  learning with a generative model.
\newblock \emph{arXiv preprint arXiv:2005.12900}, 2020.

\bibitem[Liang and Stokes(2019)]{pmlr-v89-liang19b}
Tengyuan Liang and James Stokes.
\newblock Interaction matters: A note on non-asymptotic local convergence of
  generative adversarial networks.
\newblock In \emph{International Conference on Artificial Intelligence and
  Statistics}, pages 907--915, 2019.

\bibitem[Littman(1994)]{littman1994markov}
Michael~L Littman.
\newblock Markov games as a framework for multi-agent reinforcement learning.
\newblock In \emph{Machine Learning Proceedings}, pages 157--163. Elsevier,
  1994.

\bibitem[Littman(2001)]{littman2001friend}
Michael~L Littman.
\newblock Friend-or-foe {Q}-learning in general-sum games.
\newblock In \emph{International Conference on Machine Learning}, volume~1,
  pages 322--328, 2001.

\bibitem[Liu et~al.(2021)Liu, Yu, Bai, and Jin]{liu2021sharp}
Qinghua Liu, Tiancheng Yu, Yu~Bai, and Chi Jin.
\newblock A sharp analysis of model-based reinforcement learning with
  self-play.
\newblock In \emph{International Conference on Machine Learning}, pages
  7001--7010. PMLR, 2021.

\bibitem[Luce and Raiffa(1989)]{luce1989games}
R~Duncan Luce and Howard Raiffa.
\newblock \emph{Games and Decisions: Introduction and Critical Survey}.
\newblock Courier Corporation, 1989.

\bibitem[Mao and Ba{\c{s}}ar(2023)]{mao2023provably}
Weichao Mao and Tamer Ba{\c{s}}ar.
\newblock Provably efficient reinforcement learning in decentralized
  general-sum {M}arkov games.
\newblock \emph{Dynamic Games and Applications}, 13\penalty0 (1):\penalty0
  165--186, 2023.

\bibitem[Mao et~al.(2022)Mao, Yang, Zhang, and Basar]{mao2022improving}
Weichao Mao, Lin Yang, Kaiqing Zhang, and Tamer Basar.
\newblock On improving model-free algorithms for decentralized multi-agent
  reinforcement learning.
\newblock In \emph{International Conference on Machine Learning}, pages
  15007--15049. PMLR, 2022.

\bibitem[Mertikopoulos and Sandholm(2016)]{mertikopoulos2016learning}
Panayotis Mertikopoulos and William~H Sandholm.
\newblock Learning in games via reinforcement and regularization.
\newblock \emph{Mathematics of Operations Research}, 41\penalty0 (4):\penalty0
  1297--1324, 2016.

\bibitem[Munos and Moore(1999)]{munos1999variable}
Remi Munos and Andrew~W Moore.
\newblock Variable resolution discretization for high-accuracy solutions of
  optimal control problems.
\newblock 1999.

\bibitem[Neu et~al.(2017)Neu, Jonsson, and G{\'o}mez]{neu2017unified}
Gergely Neu, Anders Jonsson, and Vicen{\c{c}} G{\'o}mez.
\newblock A unified view of entropy-regularized {M}arkov decision processes.
\newblock \emph{arXiv preprint arXiv:1705.07798}, 2017.

\bibitem[Nguyen and Reddi(2019)]{nguyen2019deep}
Thanh~Thi Nguyen and Vijay~Janapa Reddi.
\newblock Deep reinforcement learning for cyber security.
\newblock \emph{arXiv preprint arXiv:1906.05799}, 2019.

\bibitem[Nguyen et~al.(2020)Nguyen, Nguyen, and Nahavandi]{nguyen2020deep}
Thanh~Thi Nguyen, Ngoc~Duy Nguyen, and Saeid Nahavandi.
\newblock Deep reinforcement learning for multiagent systems: {A} review of
  challenges, solutions, and applications.
\newblock \emph{IEEE Transactions on Cybernetics}, 2020.

\bibitem[OpenAI(2018)]{OpenAI_dota}
OpenAI.
\newblock Openai five.
\newblock \url{https://blog.openai.com/openai-five/}, 2018.

\bibitem[Osband and Van~Roy(2014)]{osband2014model}
Ian Osband and Benjamin Van~Roy.
\newblock Model-based reinforcement learning and the eluder dimension.
\newblock In \emph{Advances in Neural Information Processing Systems}, pages
  1466--1474, 2014.

\bibitem[Osborne and Rubinstein(1994)]{osborne1994course}
Martin~J Osborne and Ariel Rubinstein.
\newblock \emph{A Course in Game Theory}.
\newblock MIT press, 1994.

\bibitem[Patek(1997)]{patek1997stochastic}
Stephen~David Patek.
\newblock \emph{Stochastic and Shortest Path Games: {T}heory and Algorithms}.
\newblock PhD thesis, Massachusetts Institute of Technology, 1997.

\bibitem[P\'erolat et~al.(2015)P\'erolat, Scherrer, Piot, and
  Pietquin]{perolat2015approx}
Julien P\'erolat, Bruno Scherrer, Bilal Piot, and Olivier Pietquin.
\newblock Approximate dynamic programming for two-player zero-sum {M}arkov
  games.
\newblock In \emph{International Conference on Machine Learning}, pages
  1321--1329, 2015.

\bibitem[P{\'e}rolat et~al.(2016{\natexlab{a}})P{\'e}rolat, Piot, Scherrer, and
  Pietquin]{perolat2016use}
Julien P{\'e}rolat, Bilal Piot, Bruno Scherrer, and Olivier Pietquin.
\newblock On the use of non-stationary strategies for solving two-player
  zero-sum {M}arkov games.
\newblock In \emph{International Conference on Artificial Intelligence and
  Statistics}, pages 893--901, 2016{\natexlab{a}}.

\bibitem[P{\'e}rolat et~al.(2016{\natexlab{b}})P{\'e}rolat, Strub, Piot, and
  Pietquin]{perolat2016learning}
Julien P{\'e}rolat, Florian Strub, Bilal Piot, and Olivier Pietquin.
\newblock Learning {N}ash equilibrium for general-sum {M}arkov games from batch
  data.
\newblock \emph{arXiv preprint arXiv:1606.08718}, 2016{\natexlab{b}}.

\bibitem[P\'erolat et~al.(2018)P\'erolat, Piot, and Pietquin]{perolat2018actor}
Julien P\'erolat, Bilal Piot, and Olivier Pietquin.
\newblock Actor-critic fictitious play in simultaneous move multistage games.
\newblock In \emph{International Conference on Artificial Intelligence and
  Statistics}, 2018.

\bibitem[Poupart et~al.(2006)Poupart, Vlassis, Hoey, and
  Regan]{poupart2006analytic}
Pascal Poupart, Nikos Vlassis, Jesse Hoey, and Kevin Regan.
\newblock An analytic solution to discrete {B}ayesian reinforcement learning.
\newblock In \emph{International Conference on Machine Learning}, pages
  697--704, 2006.

\bibitem[Sayin et~al.(2022)Sayin, Zhang, and Ozdaglar]{sayin2022fictitious}
Muhammed~O Sayin, Kaiqing Zhang, and Asuman Ozdaglar.
\newblock Fictitious play in {M}arkov games with single controller.
\newblock In \emph{Proceedings of the 23rd ACM Conference on Economics and
  Computation}, pages 919--936, 2022.

\bibitem[Shah et~al.(2020)Shah, Somani, Xie, and Xu]{shah2020reinforcement}
Devavrat Shah, Varun Somani, Qiaomin Xie, and Zhi Xu.
\newblock On reinforcement learning for turn-based zero-sum {M}arkov games.
\newblock \emph{arXiv preprint arXiv:2002.10620}, 2020.

\bibitem[Shalev-Shwartz et~al.(2016)Shalev-Shwartz, Shammah, and
  Shashua]{shalev2016safe}
Shai Shalev-Shwartz, Shaked Shammah, and Amnon Shashua.
\newblock Safe, multi-agent, reinforcement learning for autonomous driving.
\newblock \emph{arXiv preprint arXiv:1610.03295}, 2016.

\bibitem[Shapley(1953)]{shapley1953stochastic}
Lloyd~S Shapley.
\newblock Stochastic games.
\newblock \emph{Proceedings of the National Academy of Sciences}, 39\penalty0
  (10):\penalty0 1095--1100, 1953.

\bibitem[Sidford et~al.(2018)Sidford, Wang, Wu, Yang, and Ye]{sidford2018near}
Aaron Sidford, Mengdi Wang, Xian Wu, Lin Yang, and Yinyu Ye.
\newblock Near-optimal time and sample complexities for solving {M}arkov
  decision processes with a generative model.
\newblock In \emph{Advances in Neural Information Processing Systems}, pages
  5186--5196, 2018.

\bibitem[Sidford et~al.(2020)Sidford, Wang, Yang, and Ye]{sidford2019solving}
Aaron Sidford, Mengdi Wang, Lin~F Yang, and Yinyu Ye.
\newblock Solving discounted stochastic two-player games with near-optimal time
  and sample complexity.
\newblock In \emph{International Conference on Artificial Intelligence and
  Statistics}, 2020.

\bibitem[Silver et~al.(2016)Silver, Huang, Maddison, Guez, Sifre, Van
  Den~Driessche, Schrittwieser, Antonoglou, Panneershelvam, Lanctot,
  et~al.]{silver2016mastering}
David Silver, Aja Huang, Chris~J Maddison, Arthur Guez, Laurent Sifre, George
  Van Den~Driessche, Julian Schrittwieser, Ioannis Antonoglou, Veda
  Panneershelvam, Marc Lanctot, et~al.
\newblock Mastering the game of \relax{G}o with deep neural networks and tree
  search.
\newblock \emph{Nature}, 529\penalty0 (7587):\penalty0 484--489, 2016.

\bibitem[Silver et~al.(2017)Silver, Schrittwieser, Simonyan, Antonoglou, Huang,
  Guez, Hubert, Baker, Lai, Bolton, et~al.]{silver2017mastering}
David Silver, Julian Schrittwieser, Karen Simonyan, Ioannis Antonoglou, Aja
  Huang, Arthur Guez, Thomas Hubert, Lucas Baker, Matthew Lai, Adrian Bolton,
  et~al.
\newblock Mastering the game of \relax{G}o without human knowledge.
\newblock \emph{Nature}, 550\penalty0 (7676):\penalty0 354--359, 2017.

\bibitem[Srinivasan et~al.(2018)Srinivasan, Lanctot, Zambaldi, P{\'e}rolat,
  Tuyls, Munos, and Bowling]{srinivasan2018actor}
Sriram Srinivasan, Marc Lanctot, Vinicius Zambaldi, Julien P{\'e}rolat, Karl
  Tuyls, R{\'e}mi Munos, and Michael Bowling.
\newblock Actor-critic policy optimization in partially observable multiagent
  environments.
\newblock In \emph{Advances in Neural Information Processing Systems}, pages
  3422--3435, 2018.

\bibitem[Strehl et~al.(2009)Strehl, Li, and Littman]{strehl2009reinforcement}
Alexander~L Strehl, Lihong Li, and Michael~L Littman.
\newblock Reinforcement learning in finite {MDPs}: {PAC} analysis.
\newblock \emph{Journal of Machine Learning Research}, 10\penalty0
  (Nov):\penalty0 2413--2444, 2009.

\bibitem[Syrgkanis et~al.(2015)Syrgkanis, Agarwal, Luo, and
  Schapire]{syrgkanis2015fast}
Vasilis Syrgkanis, Alekh Agarwal, Haipeng Luo, and Robert~E Schapire.
\newblock Fast convergence of regularized learning in games.
\newblock In \emph{Advances in Neural Information Processing Systems}, pages
  2989--2997, 2015.

\bibitem[Vinyals et~al.(2019)Vinyals, Babuschkin, Chung, Mathieu, Jaderberg,
  Czarnecki, Dudzik, Huang, Georgiev, Powell, Ewalds, Horgan, Kroiss,
  Danihelka, Agapiou, Oh, Dalibard, Choi, Sifre, Sulsky, Vezhnevets, Molloy,
  Cai, Budden, Paine, Gulcehre, Wang, Pfaff, Pohlen, Wu, Yogatama, Cohen,
  McKinney, Smith, Schaul, Lillicrap, Apps, Kavukcuoglu, Hassabis, and
  Silver]{alphastarblog}
Oriol Vinyals, Igor Babuschkin, Junyoung Chung, Michael Mathieu, Max Jaderberg,
  Wojciech~M. Czarnecki, Andrew Dudzik, Aja Huang, Petko Georgiev, Richard
  Powell, Timo Ewalds, Dan Horgan, Manuel Kroiss, Ivo Danihelka, John Agapiou,
  Junhyuk Oh, Valentin Dalibard, David Choi, Laurent Sifre, Yury Sulsky, Sasha
  Vezhnevets, James Molloy, Trevor Cai, David Budden, Tom Paine, Caglar
  Gulcehre, Ziyu Wang, Tobias Pfaff, Toby Pohlen, Yuhuai Wu, Dani Yogatama,
  Julia Cohen, Katrina McKinney, Oliver Smith, Tom Schaul, Timothy Lillicrap,
  Chris Apps, Koray Kavukcuoglu, Demis Hassabis, and David Silver.
\newblock {AlphaStar: Mastering the Real-Time Strategy Game StarCraft II}.
\newblock
  \url{https://deepmind.com/blog/alphastar-mastering-real-time-strategy-game-starcraft-ii/},
  2019.

\bibitem[Von~Neumann et~al.(1953)Von~Neumann, Morgenstern, and
  Kuhn]{von1953theory}
John Von~Neumann, Oskar Morgenstern, and Harold~William Kuhn.
\newblock \emph{Theory of Games and Economic Behavior}.
\newblock Princeton University Press, 1953.

\bibitem[Wang(2017)]{wang2017randomized}
Mengdi Wang.
\newblock Randomized linear programming solves the discounted {M}arkov decision
  problem in nearly-linear running time.
\newblock \emph{arXiv preprint arXiv:1704.01869}, 2017.

\bibitem[Wei et~al.(2017)Wei, Hong, and Lu]{wei2017online}
Chen-Yu Wei, Yi-Te Hong, and Chi-Jen Lu.
\newblock Online reinforcement learning in stochastic games.
\newblock In \emph{Advances in Neural Information Processing Systems}, pages
  4987--4997, 2017.

\bibitem[Xie et~al.(2020)Xie, Chen, Wang, and Yang]{xie2020learning}
Qiaomin Xie, Yudong Chen, Zhaoran Wang, and Zhuoran Yang.
\newblock Learning zero-sum simultaneous-move {M}arkov games using function
  approximation and correlated equilibrium.
\newblock \emph{arXiv preprint arXiv:2002.07066}, 2020.

\bibitem[Zhang et~al.(2018{\natexlab{a}})Zhang, Yang, Liu, Zhang, and
  Ba\c{s}ar]{zhang2018fully}
Kaiqing Zhang, Zhuoran Yang, Han Liu, Tong Zhang, and Tamer Ba\c{s}ar.
\newblock Fully decentralized multi-agent reinforcement learning with networked
  agents.
\newblock In \emph{International Conference on Machine Learning}, pages
  5867--5876, 2018{\natexlab{a}}.

\bibitem[Zhang et~al.(2018{\natexlab{b}})Zhang, Yang, Liu, Zhang, and
  Ba{\c{s}}ar]{zhang2018finite}
Kaiqing Zhang, Zhuoran Yang, Han Liu, Tong Zhang, and Tamer Ba{\c{s}}ar.
\newblock Finite-sample analysis for decentralized batch multi-agent
  reinforcement learning with networked agents.
\newblock \emph{arXiv preprint arXiv:1812.02783}, 2018{\natexlab{b}}.

\bibitem[Zhang et~al.(2019{\natexlab{a}})Zhang, Yang, and
  Ba\c{s}ar]{zhang2019policy}
Kaiqing Zhang, Zhuoran Yang, and Tamer Ba\c{s}ar.
\newblock Policy optimization provably converges to {Nash} equilibria in
  zero-sum linear quadratic games.
\newblock In \emph{Advances in Neural Information Processing Systems}, pages
  11598--11610, 2019{\natexlab{a}}.

\bibitem[Zhang et~al.(2021{\natexlab{a}})Zhang, Yang, and
  Ba{\c{s}}ar]{zhang2019multi}
Kaiqing Zhang, Zhuoran Yang, and Tamer Ba{\c{s}}ar.
\newblock Multi-agent reinforcement learning: A selective overview of theories
  and algorithms.
\newblock \emph{Handbook of Reinforcement Learning and Control}, pages
  321--384, 2021{\natexlab{a}}.

\bibitem[Zhang et~al.(2021{\natexlab{b}})Zhang, Ren, and Li]{zhang2021gradient}
Runyu Zhang, Zhaolin Ren, and Na~Li.
\newblock Gradient play in stochastic games: stationary points, convergence,
  and sample complexity.
\newblock \emph{arXiv preprint arXiv:2106.00198}, 2021{\natexlab{b}}.

\bibitem[Zhang et~al.(2019{\natexlab{b}})Zhang, Zhang, Miehling, and
  Ba\c{s}ar]{zhang2019non}
Xiangyuan Zhang, Kaiqing Zhang, Erik Miehling, and Tamer Ba\c{s}ar.
\newblock Non-cooperative inverse reinforcement learning.
\newblock In \emph{Advances in Neural Information Processing Systems}, pages
  9482--9493, 2019{\natexlab{b}}.

\end{thebibliography}


\appendix 

 
\section{Lower Bounds}\label{sec:append_proof_lb}

Now we discuss lower bounds of the sample complexity given in \S\ref{sec:lower_bound}. 

\subsection{Reward-Aware Case}\label{sec:append_reward_aware_proof}

\paragraph{Proof of Lemma \ref{thm:lb_informal}.} The proof follows by recalling the hard cases of MDPs considered in \cite{azar2013minimax} or  \cite{feng2019does}, and replacing each action $a$ therein by a joint-action $(a,b)$. Without loss of generality, suppose $|\cA|\geq |\cB|$. Then, we design a Markov game such that agent $2$ has no effect on the reward or the transition. Thus, finding an NE is now the same as agent $1$ finding the optimal value/policy.  
By the arguments in \cite{azar2013minimax,feng2019does}, the sample complexity is at least $\tilde\Omega\big(|\cS|\cdot\max\{|\cA|,|\cB|\}\cdot(1-\gamma)^{-3}\epsilon^{-2}\big)$, where $\tilde\Omega$ suppresses some log factors  of $|\cS|,|\cA|,|\cB|$ and $1/\delta$. Noticing  that $\max\{|\cA|,|\cB|\}=(|\cA|+|\cB|+\big||\cA|-|\cB|\big|)/2$, we obtain the lower bound. 
\hfil\QED 

\paragraph{Challenge in Obtaining $\tilde\Omega(|\cA||\cB|)$.} Note that the proof of a $\tilde\Omega\big(|\cS|(|\cA|+|\cB|)\cdot(1-\gamma)^{-3}\epsilon^{-2}\big)$ lower bound is a straightforward adaptation from the single-agent result.  The lower bound can also be obtained in several other  ways (via a treatment of turn-based Markov games, or the attempts to be introduced next).  
Nevertheless, these attempts can hardly lead to a lower bound of $\tilde\Omega(|\cA||\cB|)$, in this reward-aware case.  
We  highlight the challenges as follows. 

{
}

The core proof idea of \cite{azar2013minimax,feng2019does} for the single-agent setting lower bound is to create a class of $\cO(|\cS||\cA|)$ number of MDPs, which are hard to distinguish from each other. 
When the reward function is given (i.e.,  in the reward-aware setting), 
one can only change the transition model to obtain different hard MDPs.
Hence, in \cite{azar2013minimax}, 
their approach is to first create a null hypothesis, in which the optimal $Q$-value and $\epsilon$-optimal actions at every state are fixed. 
Then in each of the $\cO(|\cS||\cA|)$ alternative hypothesis, they change 
the transition probability of a distinct state-action pair $(s,a)$ in the null case to make the Q-value slightly differ from the null-setting and such that $a$ is an $\epsilon$-optimal action at state $s$. 
They construct the hard instance cleverly such that if an algorithm correctly outputs the optimal Q-value (or optimal policy) in an alternative hypothesis with high probability, then it must have sampled  $\tilde\Omega((1-\gamma)^{-3}\epsilon^{-2})$ samples at the corresponding $(s,a)$ pair in the null hypothesis. As this holds for all $\cO(|\cS||\cA|)$ alternative hypotheses, 
we obtain 
an $\tilde\Omega(|\cS||\cA|(1-\gamma)^{-3}\epsilon^{-2})$ sample lower bound.

In the game setting, however, the above idea requires to change the Nash equilibrium (say, a unique pure strategy) to a different state-action-action tuple at any state while only make changes to the probability transition of the corresponding state-joint-action tuple.
Nevertheless, this is challenging to achieve in general, as the NE value of zero-sum matrix games is \emph{not sensitive} to the small number of element changes in the payoff matrices.  This can be evidenced  either by the stability of the NE in this case against the payoff perturbation \citep{jansen1981regularity}, or by the  sensitivity analysis of the equivalent linear program of the game  \citep{luce1989games} against the problem data \citep{dantzig1998linear}. Indeed, one can verify that only changing $\cO(1)$ elements in the transition probability matrix, and thus changing $\cO(1)$ elements in the Q-value table at each state, by a small amount,  can hardly change the NE value/policy too much. Some order of $\cO(|\cA|)$ (or $\cO(|\cB|)$) number of changes may suffice, but will eventually yields $\cO(|\cB|)$ (or $\cO(|\cA|)$) hard alternative cases, leading to the same $\tilde\Omega(|\cA|+|\cB|)$ result as Lemma \ref{thm:lb_informal}. In other words, one can hardly obtain the sufficient number of required hard cases ($\tilde\Omega(|\cA||\cB|)$ in total) by changing only $\cO(1)$ elements in the transition probability matrix of each alternative hypothesis case.

 On the other hand, interestingly, we note that there are some results on the \emph{payoff query complexity}, i.e., the number of queries for the elements in the payoff matrix,  for finding the NE \citep{JMLR:v16:fearnley15a,fearnley2016finding}. It is possible to use $\cO(k\log(k)/\epsilon^2)$ queries to find the $\epsilon$-NE in zero-sum matrix games when $|\cA|=|\cB|$, where $k=|\cA|=|\cB|$ \citep{fearnley2016finding}. Note that the lower bound given in \cite{fearnley2016finding}, though  being  $\Omega(k^2)$, requires the accuracy $\epsilon\leq 1/k$ to be small, which cannot be used in our previous analysis with a dimension-free choice of $\epsilon$. From a different angle, these results imply that it may indeed be unnecessary  to  accurately estimate \emph{all} elements in the matrix, in order to obtain an approximate Nash equilibrium.  
 
 In light of these observations, we have conjectured that with reward knowledge,    the lower bound of $\tilde\Omega(|\cA|+|\cB|)$ is indeed unimprovable, which might be matched by some other (possibly \emph{model-free}) MARL algorithms, as general model-based approaches inherently require  $\tilde\Omega(|\cA||\cB|)$ for transition model estimation. Such a $\tilde\Omega(|\cA|+|\cB|)$ lower bound on {regret} has been provided recently in \cite{bai2020provable}, though in a different setting. More interestingly,  though not entirely comparable to us, in the concurrent work \cite{bai2020near}, the $\tilde \cO(|\cA|+|\cB|)$ complexity is indeed shown to be attainable by a model-free Nash-V learning algorithm {in the episodic setting}, with the reward information guiding the online update.



\subsection{Reward-Agnostic Case}\label{sec:append_reward_agnostic_proof}

Now we establish the lower bound for the reward-agnostic case, i.e., the proof of  Theorem \ref{thm:lb_reward_agnostic}. 
The idea to construct hard cases is similar to that discussed in \S\ref{sec:append_reward_aware_proof}, which is motivated by \cite{azar2013minimax,feng2019does},  but with additional flexibility to design the reward function that is unknown in the sampling stage.   
Our hard cases apply to both finding the $\epsilon$-NE policy pair and finding the $\epsilon$-approximate NE value. 
For the sake of presentation, we focus on proving the lower bound for the $\epsilon$-NE policy. 
Let us first formally define the notion of a correct algorithm in terms of learning an $\epsilon$-NE policy in this reward-agnostic case.

\begin{definition}(($\epsilon, \delta$)-correct reward-agnostic algorithm)\label{def:epsilonalgorithm}
	 We say that an RL algorithm $\mathfrak{A}$ is $(\epsilon, \delta)$-correct in the reward-agnostic case, if for any unknown MG $\cG=(\cS, \cA, \cB, r, P, \gamma)$, $\mathfrak{A}$ first calls a generative model on $(\cS, \cA, \cB, P, \gamma)$, and is then fed with the reward $r$, and  outputs an $\epsilon$-NE policy  $(\mu, \nu)$ with probability at least $1-\delta$. 
\end{definition}  
 
Note that $r$ is only revealed to   $
\mathfrak{A}$ after the sampling, and such an $
\mathfrak{A}$ should be able to output an ($\epsilon, \delta$)-correct NE policy for any single $r$ in the underlying model. Thus, for  $M$ reward functions defined over the same $(\cS, \cA, \cB, P, \gamma)$, using a union bound argument, the $\epsilon$-NE policy corresponding to all $M$ reward functions can be obtained simultaneously with probability greater than $1-M\delta$ (of course with a small enough $\delta$).  To prove the theorem, we will construct a class of Markov game models. We show that if algorithm $
\mathfrak{A}$ only draws samples much fewer than the lower bound, there exists an MG $\cG$ such that $\mathfrak{A}$ cannot be an $(\epsilon, \delta)$-correct reward-agnostic algorithm for. \kzedit{Compared to the reward-aware case, we now allow more freedom to construct hard instances, by not only perturbing the transition matrix, but also choosing the reward function judiciously. This would eventually allow us to obtain $\Theta(|\cA||\cB|)$ hard cases, combating the insensitivity of NE to the perturbation of the payoff matrices (c.f. discussion in \S\ref{sec:append_reward_aware_proof}).}

\begin{figure*}[!t]
	\centering
	\begin{tabular}{c}
		\includegraphics[width=0.98\textwidth]{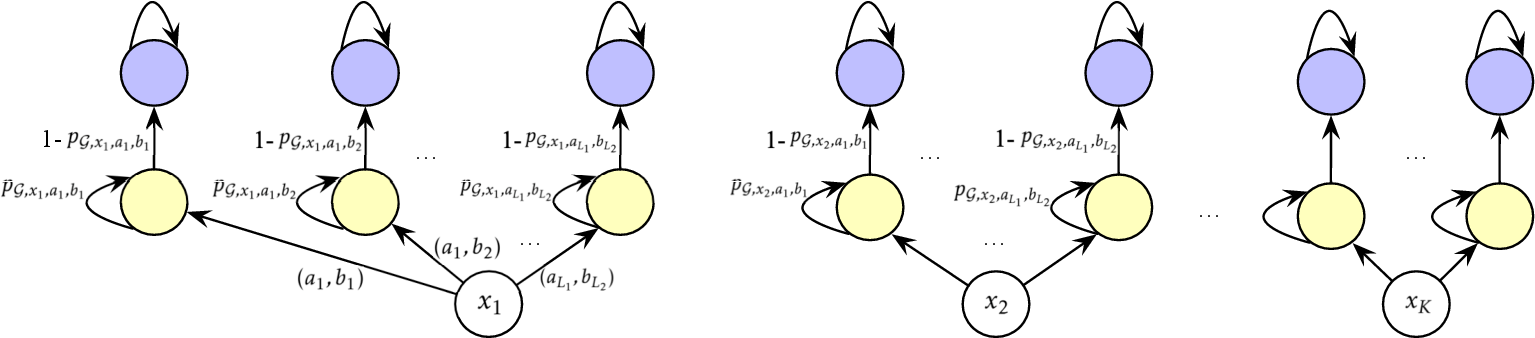} 
	\end{tabular}
	\caption{The class of zero-sum Markov games $\mathbb{G}$ considered in the proof of  Theorem \ref{thm:lb_reward_agnostic}. Circles denote the states and arrows denote the transitions. White, yellow, and blue circles denote the three disjoint subsets of states $\cX$ , $\cY_1$, and $\cY_2$, respectively. } 
	\label{fig:hard_case}
\end{figure*}
 
\paragraph{Construction of the Hard Case.}
We define a family of MGs $\mathbb{G}$. {See illustrations in Figure \ref{fig:hard_case}}.  The state space $\cS$ consists of three disjoint subsets $\cX$ , $\cY_1$, and $\cY_2$. The set $\cX$ includes $K$ states $\{x_1, x_2, \dots, x_K\}$ and each of them has $L_1>1$ available max-player actions $\{a_1,a_2,\dots,a_{L_1}\}=:\cA$, and $L_2>1$ min-player actions $\{b_1,b_2,\dots,b_{L_2}\}=:\cB$. Each  state in $\cY_1:=\{y_{1,x,a,b}: ~ \forall x\in \cX, a\in \cA, b\in \cB\}$ and $\cY_2:=\{y_{2,x,a,b}: ~ \forall x\in \cX, a\in \cA, b\in \cB\}$ only has a single joint-action pair to choose.   
In total, there are $N:=3KL_1L_2$ state-joint-action pairs. For state $x\in\cX$, by taking a joint-action   $(a,b)$ for $a\in\cA, b\in \cB$, it transitions to a state $y_{1,x,a,b}\in\cY_1$ with probability 1. For state $y_{1,x,a,b}\in\cY_1$, there is only a single joint-action for both players to choose from, which is the $(a,b)$ pair that leads to this $y_{1,x,a,b}$. It  then transitions to itself with probability $p_{\cG,x,a,b}\in(1/2,1)$ and to a corresponding state $y_{2,x,a,b} \in \cY_2$ with probability $1-p_{\cG,x,a,b}$. Note that $p_{\cG,x,a,b}$ can be different for different state-joint-action tuples. All states in $\cY_2$ are absorbing. The reward function is: for any $y_{1,x,a,b}\in\cY_1$, 
$R(y_{1,x,a,b})=\iota_{\cG,x,a,b}$ for some $\iota_{\cG,x,a,b}\in[0,1]$ (to be specified later); and for all other states,  $R(s)=0$. 
And the Q-function of the MGs can be computed as 
\begin{align} 
Q_{\cG}(x,a,b) = \frac{\gamma \iota_{\cG,x,a,b}}{1-\gamma p_{\cG,x,a,b}}, \quad \forall~ (x,a,b)\in\cX\times\cA\times \cB,
\end{align}
which is fully characterized by $p_{\cG,x,a,b}$ and $\iota_{\cG,x,a,b}$. 

\paragraph{Transition Model Hypotheses of $\cG$.} 
We restrict $\gamma\in(1/2,1)$.
Let $p_0=\gamma$ and $\alpha_1,\alpha_2 \in (0,1)$. We consider $M+1$ possibilities of the transition models of  $\cG$, where   $M:=K[L_1(L_2-1)]$ --- the null hypothesis is:
\begin{align}
\cG_{1}:& \begin{cases}p_{\cG_1,x_k,a_1, b_1} = p_0-\alpha_1,\quad\forall~ k\in[K],\\
p_{\cG_1,x_k,a_l, b_1} = p_0-2\alpha_1,\quad\forall~ k\in[K],l\in [L_1]\backslash\{1\},\\ p_{\cG_1,x_k,a_{l_1}, b_{l_2}}=p_0,\quad \forall~k\in[K], l_1\in [L_1], l_2\neq1;\end{cases}
\end{align}
and for all $k\in[K]$,  $l_1\in[L_1]$, and $l_2\in[L_2]\backslash\{1\}$ the $M$ alternative hypotheses are:
\begin{align}\label{equ:construct_G_alt}
\cG_{k,l_1, l_2}:& \begin{cases}p_{\cG_{k,l_1, l_2}, x_{k},a_{l_1}, b_{l_2}} = p_{\cG_{1}, x_{k},a_{l_1}, b_{l_2}} - \alpha_2=p_0-\alpha_2,\\
p_{\cG_{k,l_1, l_2}, x_{k},a_{l_1'}, b_{l_2}} = p_{\cG_{1}, x_{k},a_{l_1'}, b_{l_2}}=p_0
 ,\quad \forall l_1'\neq l_1,\\
p_{\cG_{k,l_1, l_2}, x_{k'},a_{l_1'}, b_{l_2'}} = p_{\cG_{1}, x_{k'},a_{l_1'}, b_{l_2'}},\quad \forall (k',l_1', l_2')\neq(k,l_1,l_2),\end{cases}
\end{align} 
where $\alpha_1=c'(1-\gamma p_0)^2\epsilon/\gamma$, $\alpha_2=c(1-\gamma p_0)^2\epsilon/\gamma$ for some $\epsilon\in(0,1)$ and absolute constants $c', c>0$ to be determined later. Note that each alternative hypothesis only has one element in the transition model  different from the null one. 

\paragraph{Reward Functions.}  
We define a class of $M+1$ reward functions $\mathfrak{R}=\{r_1\}\bigcup\{r_{k,l_1,l_2}: ~ \forall k\in [K], l_1\in[L_1], l_2\in[L_2]\backslash\{1\}\}$, which is unknown to $\mathfrak{A}$ during sampling, and is defined as follows (recall that other than the value specified by $\iota_{\cG,x,a, b}$, rewards are all zero):  
\begin{align*}
r_{1}&:\begin{cases}\iota_{\cG,x_k,a_1, b_1}=1,\quad\forall~ k\in[K],\\
\iota_{\cG,x_k,a_l, b_1}=1,\quad\forall~ k\in[K],l\in [L_1]\backslash\{1\},\\ \iota_{\cG,x_k,a_{l_1}, b_{l_2}}=1,\quad \forall~k\in[K], l_1\in [L_1], l_2\neq1;\end{cases}\\
r_{k,l_1, l_2}&:\begin{cases}\iota_{\cG, x_{k},a_{l_1}, b_{l_2}}=1,\\
\iota_{\cG, x_{k},a_{l_1'}, b_{l_2}}=\frac{1-\gamma p_{\cG_{1}, x_{k},a_{l_1'}, b_{l_2}}}{1-\gamma \big(p_{\cG_{1}, x_{k},a_{l_1'}, b_{l_2}}-2\alpha_2\big)},\quad \forall l_1'\neq l_1,\\
\iota_{\cG, x_{k'},a_{l_1'}, b_{l_2'}}=1,\quad \forall (k',l_1', l_2')\neq(k,l_1,l_2),\end{cases}
\end{align*}
for all $k\in[K]$,  $l_1\in[L_1]$, and $l_2\in[L_2]\backslash\{1\}$.

By the construction above, if the reward function $r_m\in\mathfrak{R}$ is assigned to the corresponding transition model in $\cG_{m}$, for either $m=1$ or any  $m=(k,l_1, l_2)$, then the corresponding Q-values become 
\small
\begin{align}
\text{for}~~\cG_{1}:& \begin{cases}Q_{\cG_1}(x_k,a_1, b_1) = \frac{\gamma}{1-\gamma (p_0-\alpha_1)},\quad\forall~ k\in[K],\\
Q_{\cG_1}(x_k,a_l, b_1) = \frac{\gamma}{1-\gamma (p_0-2\alpha_1)},\quad\forall~ k\in[K],l\in [L_1]\backslash\{1\},\\ 
Q_{\cG_1}(x_k,a_{l_1}, b_{l_2})=\frac{\gamma}{1-\gamma p_0},\quad \forall~k\in[K], l_1\in [L_1], l_2\neq1;\end{cases}\label{equ:nomimal_assign_1}
\end{align}
and $\forall k\in[K],~l_1\in[L_1],~l_2\in[L_2]\backslash\{1\}$,  
\begin{align}
\text{for}~~\cG_{k,l_1, l_2}:& \begin{cases}Q_{\cG_{k,l_1, l_2}}(x_{k},a_{l_1}, b_{l_2}) = \frac{\gamma}{1-\gamma \big(p_{\cG_{1}, x_{k},a_{l_1}, b_{l_2}}-\alpha_2\big)},\\
Q_{\cG_{k,l_1, l_2}}(x_{k},a_{l_1'}, b_{l_2}) = \frac{\gamma}{1-\gamma \big(p_{\cG_{1}, x_{k},a_{l_1'}, b_{l_2}}-2\alpha_2\big)},\quad \forall l_1'\neq l_1,\\
Q_{\cG_{k,l_1, l_2}}(x_{k'},a_{l_1'},  b_{l_2'}) = \frac{\gamma}{1-\gamma p_{\cG_{1}, x_{k'},a_{l_1'}, b_{l_2'}}},\quad \forall (k',l_1', l_2')\neq(k,l_1,l_2).\end{cases}\label{equ:nomimal_assign_2} 
\end{align}
\normalsize
We then select $\alpha_1$ such that for $(l_1,l_2)\neq(1,1)$, 
\begin{align}\label{equ:def_Q_G_1_distance}
\big|Q_{\cG_{1}}&(x_k,a_1, b_1)-Q_{\cG_1}(x_k,a_{l_1},b_{l_2})\big|\nonumber\\
&\ge\min\Big(\frac{\gamma}{1-\gamma p_0} - \frac{\gamma}{1-\gamma(p_0-\alpha_1)},  \frac{\gamma}{1-\gamma(p_0-\alpha_1)}-\frac{\gamma}{1-\gamma (p_0-2\alpha_1)} \Big)\geq 20\epsilon
\end{align}
and $\alpha_2$ is selected such that $\alpha_2\geq 2\alpha_1$ and 
\begin{align}\label{equ:choose_alpha_2}
48\epsilon\geq |Q_{\cG_{k,l_1,l_2}}(x_{k},a_{l_1}, b_{l_2})-Q_{\cG_{k,l_1, l_2}}(x_{k},a_{l_1'}, b_{l_2'})| 
\geq 20\epsilon
\end{align}
for all $(l_1', l_2') \neq (l_1, l_2)$. 
Moreover, we require that
$p_0\in (1/2+2\alpha_1+2\alpha_2, 1)$,
$\alpha_2/(1-p_0)\in (0, 1/2)$ and $\alpha_2/(p_0 -2\alpha_1- 2\alpha_2)\in (0,1/2)$. Hence, $\epsilon \le \cO(1/(1-\gamma))$.
 
In the sequel, we denote $\EE_{1}$ and $\PP_1$ to measure the expectation and probability of an event under the transition model hypothesis $\cG_1$. Similarly, we denote $\EE_{k, l_1, l_2}$ and $\PP_{k, l_1, l_2}$ to measure the expectation and probability of an event under hypothesis $\cG_{k, l_1, l_2}$.  
It is not hard to verify that in the above case (with $r_m$ being assigned to $\cG_m$ correspondingly, for $m=1$ or $m=(k,l_1,l_2)$), there is a unique NE policy pair under hypothesis $\cG_1$: for $x\in \cX$, 
$\mu_1^*(x) = a_1$ and $\nu_1^*(x)=b_1$; and that there is a unique NE policy under hypothesis $\cG_{k,l_1, l_2}$ for all $l_1\in[L_1]$ and $l_2\in[L_2]\backslash\{1\}$: for $k'\neq k$, 
$\mu_{k,l_1,l_2}^*(x_{k'}) = a_1$, 
$\nu_{k,l_1,l_2}^*(x_{k'}) = b_1$, and 
$\mu_{k,l_1,l_2}^*(x_k) = a_{l_1}$, $\nu_{k,l_1,l_2}^*(x_k) = b_{l_2}$.

Moreover, one can verify that if any reward $r_{k,l_1,l_2}$ with $k\in[K]$,  $l_1\in[L_1]$, and $l_2\in[L_2]\backslash\{1\}$ (instead of $r_1$ as in \eqref{equ:nomimal_assign_1}) is assigned to the transition model of $\cG_1$, then the NE policy at $x_k$ can never be the pure strategy $\mu_{1}^*(x_k) = a_{l_1}$, $\nu_{1}^*(x_k) = b_{l_2}$ (it can be some mixed NE policy). As a consequence, for algorithm $\mathfrak{A}$, after estimating the transition model of $\cG_1$, if $r_{k,l_1,l_2}$ is revealed, then it will output some $\epsilon$-NE policy with probability greater than $1-\delta$; this $\epsilon$-NE policy pair, which can be  mixed strategies at $x_k$, should output the joint-action $(a_{l_1},b_{l_2})$ with a small probability, which is smaller than 
\small
\#\label{equ:def_beta_1}
\beta:=\frac{\frac{\gamma}{1-\gamma(p_0-\alpha_1)}-\frac{\gamma}{1-\gamma(p_0-2\alpha_2)}+\epsilon}{\frac{\gamma}{1-\gamma p_0}-\frac{\gamma}{1-\gamma (p_0-2\alpha_2)}}=1-\frac{\frac{\gamma}{1-\gamma p_0}-\frac{\gamma}{1-\gamma (p_0-\alpha_1)}-\epsilon}{\frac{\gamma}{1-\gamma p_0}-\frac{\gamma}{1-\gamma (p_0-2\alpha_2)}}\leq 1-\frac{19}{96}\leq 1-\frac{19\epsilon(1-\gamma p_0)}{\gamma},
\# 
\normalsize
(implying that $\epsilon\leq \gamma/[96(1-\gamma p_0)]$), where the first inequality is due to \eqref{equ:def_Q_G_1_distance}-\eqref{equ:choose_alpha_2}, and the last one follows by upper-bounding  ${\gamma}/{(1-\gamma p_0)}-{\gamma}/{[1-\gamma (p_0-2\alpha_2)]}$ simply by ${\gamma}/{(1-\gamma p_0)}$. This is because  otherwise, the value of the $\epsilon$-NE policy at $x_k$, denoted by $V^*_{\cG_1}(x_k)$ satisfies 
\#\label{equ:V_G_1_x_k}
V^*_{\cG_1}(x_k)\geq \beta\cdot\frac{\gamma}{1-\gamma p_{0}}+(1-\beta)\cdot\frac{\gamma}{1-\gamma (p_{0}-2\alpha_2)}= \frac{\gamma}{1-\gamma (p_{0}-\alpha_1)}+\epsilon,
\#
where the first inequality is because with reward $r_{k,l_1,l_2}$ being assigned to  model $\cG_1$, at state $x_k$ and with the joint-action $(a_{l_1},b_{l_2})$, the Q-value is ${\gamma}/{(1-\gamma p_{0})}$, while the smallest Q-value at state $x_k$ is ${\gamma}/{[1-\gamma (p_{0}-2\alpha_2)]}$; the last equation  is due to the definition of $\beta$ in \eqref{equ:def_beta_1}. However,  one can verify that the NE-value in this case lies in the range $[{\gamma}/{(1-\gamma (p_{0}-2\alpha_1))},{\gamma}/{(1-\gamma (p_{0}-\alpha_1))}]$, by finding the minimax and maximin elements in the payoff matrix, i.e., the Q-value table at $x_k$ (using {Lemma} \ref{lemma:NE_value_range}). Thus, \eqref{equ:V_G_1_x_k} contradicts the fact that this policy is an $\epsilon$-NE policy (thus making $V^*_{\cG_1}(x_k)$  $\epsilon$-close to the NE-value).  If we define the following  events  for every $k\in[K]$, $l_1\in[L_1]$, and $l_2\in[L_2]\backslash\{1\}$: 
\small 
\begin{align}\label{equ:B_event_def}
B_{k,l_1, l_2} &= \Big\{\text{when fed with }r_{k,l_1,l_2}\in\mathfrak{R},~\mathfrak{A} \text{ outputs } (\mu,\nu) \text{ s.t.  at } x_{k},~\big(a_{l_1}, b_{l_2}\big) \text{ is generated w.p.} \leq \beta\Big\},
\end{align}
\normalsize 
then the above argument can be written as $\PP_1\big(B_{k,l_1, l_2}\big)\geq 1-\delta$. 


Now, we fix $\epsilon\in(0, \epsilon_0)$ and $\delta\in(0, \delta_0)$, where $\epsilon_0$ and $\delta_0$ will be determined later. 
Let 
$$t^* = \frac{c_1}{(1-\gamma)^3\epsilon^2}\log\Big(\frac{1}{4\delta}\Big),$$
where $c_1>0$ is an absolute constant to be determined later. We also define $T_{k,l_1, l_2}$ to be the number of samples that algorithm $\mathfrak{A}$ calls from the generative model with input state $y_{1,x_k,a_{l_1}, b_{l_2}}$ till $\mathfrak{A}$ stops (these sample calls are not necessarily consecutive). Note that no reward information is used/revealed to the agent in this sampling process of $\mathfrak{A}$. For every $k\in[K]$, $l_1\in[L_1]$, and $l_2\in[L_2]\backslash\{1\}$, we define the following two events:
\begin{align}
A_{k,l_1, l_2} &= \{T_{k,l_1, l_2}\leq 4t^*\},\\ 
C_{k,l_1, l_2} &= \Big\{S_{k,l_1, l_2}-p_{\cG_1, x_k, a_{l_1}, b_{l_2}}T_{k,l_1, l_2}\leq\sqrt{2p_{\cG_1, x_k, a_{l_1}, b_{l_2}}(1-p_{\cG_1, x_k, a_{l_1}, b_{l_2}})T_{k,l_1, l_2}\log(1/4\delta)}\Big\} ,
\end{align}
where $S_{k,l_1, l_2}$ is the number of transitions to itself in the $T_{k,l_1, l_2}$ calls to the  generative model with input state $y_{1, x_k,a_{l_1}, b_{l_2}}$. For these events, we have the following lemmas.
 
\begin{lemma}\label{lemma:lb_1}
	For any $k\in[K]$, $l_1\in[L_1]$, and $l_2\in[L_2]\backslash\{1\}$, if ~$\mathbb{E}_1[T_{k,l_1, l_2}]\leq t^*$, $\mathbb{P}_1(A_{k,l_1, l_2})> 3/4$.
\end{lemma}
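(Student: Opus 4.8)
The plan is to prove this by a single application of Markov's inequality, since $A_{k,l_1,l_2}$ is precisely a tail event for the nonnegative random variable $T_{k,l_1,l_2}$ under the null model $\cG_1$. The key observation is that $T_{k,l_1,l_2}\geq 0$, being a count of generative-model calls, and that its expectation under $\PP_1$ is assumed to obey the hypothesis $\EE_1[T_{k,l_1,l_2}]\leq t^*$.

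First I would pass to the complementary event $A_{k,l_1,l_2}^{c}=\{T_{k,l_1,l_2}>4t^*\}$ and bound its probability directly by Markov's inequality:
\[
\PP_1\big(T_{k,l_1,l_2}>4t^*\big)\leq \PP_1\big(T_{k,l_1,l_2}\geq 4t^*\big)\leq \frac{\EE_1[T_{k,l_1,l_2}]}{4t^*}\leq \frac{t^*}{4t^*}=\frac14 .
\]
Taking complements then gives $\PP_1(A_{k,l_1,l_2})=1-\PP_1(T_{k,l_1,l_2}>4t^*)\geq 3/4$, which is already the bulk of the claim.

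To upgrade this to the strict inequality $>3/4$ stated in the lemma, I would exploit that $T_{k,l_1,l_2}$ is integer-valued while $4t^*$ is generically noninteger. Hence $\{T_{k,l_1,l_2}>4t^*\}=\{T_{k,l_1,l_2}\geq \lceil 4t^*\rceil\}$ with $\lceil 4t^*\rceil>4t^*$, and Markov's inequality applied at the integer threshold yields $\PP_1(T_{k,l_1,l_2}\geq \lceil 4t^*\rceil)\leq t^*/\lceil 4t^*\rceil<1/4$, so that $\PP_1(A_{k,l_1,l_2})>3/4$. Alternatively one may note that Markov's bound is strict unless $T_{k,l_1,l_2}$ is supported on $\{0,4t^*\}$, which is impossible for an integer-valued count when $4t^*\notin\mathbb{Z}$.

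There is essentially no obstacle here: the entire content of the lemma is Markov's inequality under the null hypothesis, with the generous factor $4$ buying the comfortable $3/4$ success probability. The only mild subtlety is the strict-versus-nonstrict inequality, which the integrality argument above settles. Conceptually, this lemma is the first of the two standard ingredients in the sample-complexity lower bound—the second being the concentration/change-of-measure control encoded in the event $C_{k,l_1,l_2}$—which together will force that too few samples at $y_{1,x_k,a_{l_1},b_{l_2}}$ under $\cG_1$ leave $\mathfrak{A}$ unable to distinguish $\cG_1$ from the alternative $\cG_{k,l_1,l_2}$.
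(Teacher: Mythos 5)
Your proposal is correct and is essentially the paper's proof: the paper's one-line argument is exactly Markov's inequality applied to the nonnegative count $T_{k,l_1,l_2}$ at threshold $4t^*$, written in the strict form $\mathbb{E}_1[T_{k,l_1,l_2}]>4t^*\,\mathbb{P}_1(T_{k,l_1,l_2}>4t^*)$ (valid whenever $\mathbb{P}_1(T_{k,l_1,l_2}>4t^*)>0$, the complementary case being trivial). Your integrality detour for the strict inequality is unnecessary and slightly fragile (it assumes $4t^*\notin\mathbb{Z}$); a cleaner fix is that equality in Markov's inequality would force $T_{k,l_1,l_2}$ to be supported on $\{0,4t^*\}$, which would give $\mathbb{P}_1(T_{k,l_1,l_2}>4t^*)=0$, a contradiction, so strictness holds regardless.
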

	\begin{proof}
	Notice that 
		$$t^*\geq \mathbb{E}_1[T_{k,l_1, l_2}]> 4t^*\mathbb{P}_1(T_{k,l_1, l_2}>4t^*)=4t^*(1-\mathbb{P}_1(T_{k,l_1, l_2}\leq 4t^*)).$$
		Thus, $\mathbb{P}_1(A_{k,l_1, l_2})> 3/4$.
	\end{proof}

\begin{lemma}\label{lemma:lb_2}
	For any $k\in[K]$, $l_1\in[L_1]$, and $l_2\in[L_2]\backslash\{1\}$, if ~$\delta<1/16$, $\mathbb{P}_1(C_{k,l_1, l_2})\geq 3/4$. 
\end{lemma}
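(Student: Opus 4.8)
The plan is to recognize $C_{k,l_1,l_2}$ as a one-sided deviation bound for a sum of i.i.d.\ Bernoulli trials evaluated at a \emph{random stopping time}, and to control it by a maximal (martingale) version of Bernstein's inequality. Write $p := p_{\cG_1,x_k,a_{l_1},b_{l_2}}\in(1/2,1)$. Under hypothesis $\cG_1$, each call to the generative model at $y_{1,x_k,a_{l_1},b_{l_2}}$ returns an independent sample that stays at the same state with probability $p$; let $Z_1,Z_2,\dots$ be the i.i.d.\ Bernoulli($p$) self-transition indicators, so that after $t$ such calls the count is $S^{(t)}=\sum_{i=1}^t Z_i$ and, at the random time the algorithm stops querying this state, $S_{k,l_1,l_2}=S^{(T_{k,l_1,l_2})}$. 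Crucially, $T_{k,l_1,l_2}$ is a stopping time with respect to the filtration generated by all samples drawn by $\mathfrak{A}$ (no reward is revealed during sampling), and the increments $Z_i-p$ form a bounded martingale-difference sequence with conditional variance $p(1-p)\le 1/4$.

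First I would rephrase the complementary event. Setting $M_t:=S^{(t)}-pt=\sum_{i=1}^t(Z_i-p)$ and $\Delta_t:=\sqrt{2p(1-p)t\log(1/4\delta)}$, the event $C_{k,l_1,l_2}^c$ is exactly $\{M_{T_{k,l_1,l_2}}>\Delta_{T_{k,l_1,l_2}}\}$, i.e.\ the empirical self-transition frequency overshoots its mean $p$ by $\sqrt{2\log(1/4\delta)}$ running standard deviations. The goal is thus to show $\mathbb{P}_1(C_{k,l_1,l_2}^c)\le 4\delta$, whence $\mathbb{P}_1(C_{k,l_1,l_2})\ge 1-4\delta>3/4$ \emph{precisely because} $\delta<1/16$; note how this last step simultaneously pins down the constant $3/4$ and explains the hypothesis $\delta<1/16$.

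For a single deterministic horizon $t$, Bernstein's inequality for $M_t$ gives $\mathbb{P}_1(M_t>\Delta_t)\le \exp\big(-\frac{\Delta_t^2}{2(p(1-p)t+\Delta_t/3)}\big)\approx 4\delta$, so the threshold appearing in $C_{k,l_1,l_2}$ is calibrated exactly to produce the target tail $4\delta$. The substantive step is to upgrade this fixed-time statement to the random time $T_{k,l_1,l_2}$: I would apply the maximal/time-uniform form of Bernstein's bound, obtained from the exponential supermartingale $\exp(\lambda M_t-t\psi_p(\lambda))$ with $\psi_p(\lambda)=\log\mathbb{E}_1 e^{\lambda(Z_1-p)}$, together with optional stopping / Ville's maximal inequality, so that the deviation bound holds at the stopping time $T_{k,l_1,l_2}$ while losing at most constant factors. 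Because the threshold is calibrated by the \emph{running} variance $p(1-p)t$, the cleanest route is a curved-boundary (self-normalized) maximal inequality, or equivalently a dyadic peeling of the range of $T_{k,l_1,l_2}$ into scales $[2^j,2^{j+1})$ with a Freedman bound on each scale; since the sample budget (hence the relevant $4t^*$) is only poly-logarithmic in $1/\delta$, only a handful of scales contribute and their union stays below $4\delta$ up to an absolute constant.

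The main obstacle is exactly this coupling between the random stopping time $T_{k,l_1,l_2}$ and a deviation threshold that scales like $\sqrt{T_{k,l_1,l_2}}$, with the true but unknown $p$ entering both $M_T$ and the bound: a naive fixed-$t$ Bernstein inequality does not apply, and a purely anytime $\sqrt{t}$-boundary cannot hold for \emph{all} $t$ (by the law of the iterated logarithm), so some control on the effective range of $T_{k,l_1,l_2}$ — furnished by the stopping-time/maximal-inequality machinery together with the poly-logarithmic sample budget — is essential. Everything else (the i.i.d.\ Bernoulli structure under $\cG_1$, the bounded increments, and the variance bound $p(1-p)\le 1/4$) is routine.
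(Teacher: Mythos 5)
Your calibration of the event matches the paper's: the threshold $\sqrt{2p(1-p)T_{k,l_1,l_2}\log(1/(4\delta))}$ is designed so that the one-sided tail is at most $4\delta$, and then $\delta<1/16$ gives $\mathbb{P}_1(C_{k,l_1,l_2})\geq 1-4\delta>3/4$. But that is essentially all the paper does: its proof is a single application of the Chernoff--Hoeffding bound in KL form to the Binomial count $S_{k,l_1,l_2}$, with $T_{k,l_1,l_2}$ treated as the given number of trials, followed by the lower bound $\mathrm{KL}\big(p+\epsilon/T\,\|\,p\big)\geq \epsilon^2/\big(2p(1-p)T^2\big)$, which is valid because $p\geq p_0-2\alpha_1>1/2$ makes $\xi(1-\xi)$ maximal at $\xi=p$ over the relevant range. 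No stopping-time or maximal-inequality machinery appears anywhere in the paper's argument; the adaptivity of $T_{k,l_1,l_2}$ is silently ignored (equivalently, the bound is read conditionally on the value of $T_{k,l_1,l_2}$).

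The bulk of your proposal therefore goes beyond the paper, and, as sketched, it does not close. Two concrete problems. First, the constants are fatal: a Freedman/Doob bound over one dyadic scale $[2^j,2^{j+1})$ against the boundary $\sqrt{2p(1-p)2^j\log(1/(4\delta))}$ gives a per-scale tail of order $\exp\big(-\tfrac{1}{2}\log(1/(4\delta))\big)=(4\delta)^{1/2}$, since the variance at the top of the scale is twice what the boundary at the bottom is calibrated for; with $\delta$ merely below $1/16$ this is already $\geq 1/2$ for a single scale, so no union over scales can recover $\mathbb{P}_1(C_{k,l_1,l_2}^c)\leq 1/4$, and the lemma's stated constants leave no room for the "absolute constant" losses you allow yourself. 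Second, the range control you invoke to evade your own law-of-the-iterated-logarithm obstruction is not available in this lemma: nothing in Lemma~\ref{lemma:lb_2} bounds $T_{k,l_1,l_2}$ --- the bound $T_{k,l_1,l_2}\leq 4t^*$ is the separate event $A_{k,l_1,l_2}$, whose probability estimate in Lemma~\ref{lemma:lb_1} requires the hypothesis $\mathbb{E}_1[T_{k,l_1,l_2}]\leq t^*$, which is not assumed here --- so "only a handful of scales contribute" is unjustified, and, as you yourself note, a bound uniform over all $t$ with a $\sqrt{t}$ boundary is false. In short, you flagged a real subtlety that the paper glosses over, but the machinery you propose cannot prove the lemma with its stated hypothesis and conclusion; the intended proof is just the fixed-time KL-form Chernoff bound plus the observation that $p>1/2$.
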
	
	\begin{proof}
		We denote \emph{outcome} to be $1$ if the transition from $y_{1,x_k,a_{l_1}, b_{l_2}}$ ends up on itself; otherwise 0. 
		By definition, the outcomes from state $y_{1,x_k,a_{l_1}, b_{l_2}}$ are i.i.d. Bernoulli-$p_{\cG_{1},x_k,a_{l_1}, b_{l_2}}$ random variables. Let  $\epsilon:=\sqrt{2p_{\cG_1,x_k,a_{l_1}, b_{l_2}}(1-p_{\cG_1,x_k,a_{l_1}, b_{l_2}})T_{k,l_1,l_2}\log(1/4\delta)}$. 
		By Chernoff-Hoeffding bound and $p_{\cG_1, x_k, a_{l_1}, b_{l_2}}\ge p_0 - 2\alpha_1>1/2$, we have that
		\begin{align}
		&\mathbb{P}_1\bigg(S_{k,l}-p_{\cG_1,x_k,a_{l_1}, b_{l_2}}T_{k,l_1,l_2} \leq \epsilon\bigg)\notag\\
		&\geq ~1-\exp\bigg(-\text{KL}\bigg(p_{\cG_1,x_k,a_{l_1}, b_{l_2}}+\frac{\epsilon}{T_{k,l_1,l_2}}~\Big|\Big|~p_{\cG_1,x_k,a_{l_1}, b_{l_2}}\bigg)\cdot T_{k,l_1,l_2}\bigg)\nonumber\\
		&\geq 1-\exp\bigg(-\frac{\epsilon^2}{2p_{\cG_1,x_k,a_{l_1}, b_{l_2}}(1-p_{\cG_1,x_k,a_{l_1}, b_{l_2}})T_{k,l_1,l_2}}\bigg)\ge 1-4\delta.
		\end{align}
		\normalsize 
		Additional application of 
		$\delta<1/16$ 
		 proves the lemma.
	\end{proof}

Let $\delta_0=1/16$ and $\epsilon_0={\gamma}/{[96(1-\gamma p_0)]}$. Then, for $\delta\in(0,\delta_0)$ and $\epsilon\in(0,\epsilon_0)$, and with the  transition model of $\cG_1$ being input, by the argument after \eqref{equ:B_event_def}, we have $\mathbb{P}_1(B_{k,l_1,l_2})\geq 1-\delta\geq 1-1/16\geq 3/4$,  for all $k\in[K]$, $l_1\in[L_1]$, and $l_2\in[L_2]\backslash\{1\}$. Define the event $\cE_{k,l_1, l_2}:=A_{k,l_1, l_2}\cap B_{k,l_1,l_2} \cap C_{k,l_1, l_2}$. Combining Lemmas \ref{lemma:lb_1} and \ref{lemma:lb_2} and $\mathbb{P}_1(B_{k,l_1,l_2})\geq 3/4$, we have that
\begin{align}
\mathbb{P}_1(\cE_{k,l_1, l_2})>(3/4)^3>1/4, \quad \forall~ k\in[K],~l_1\in[L_1],~l_2\in[L_2]\backslash\{1\},
\end{align}
if $\mathbb{E}_1[T_{k,l_1, l_2}]\leq t^*$, $\delta\in(0,\delta_0)$ and $\epsilon\in(0,\epsilon_0)$.  
Next, we show that if the expectation of the number of samples in $\mathfrak{A}$ on any $y_{1, x_k,a_{l_1}, b_{l_2}}$ is no greater than $t^*$ under the hypothesis $\cG_{1}$, then $B_{k,l_1,l_2}$ occurs with probability greater than $\delta$ under the hypothesis $\cG_{k,l_1, l_2}$.

\begin{lemma}\label{lemma:newversion}
	Let $\epsilon_0 =\min\Big\{\frac{\gamma}{96(1-\gamma p_0)},~c''\min\Big\{ \frac{\gamma}{(1-\gamma p_0)^2}, \frac{1}{1-\gamma}\Big\}\Big\}$ for some constant $c''>0$. For any $k\in[K]$,  $l_1\in[L_1]$, and $l_2\in[L_2]\backslash\{1\}$, when $\epsilon\in(0, \epsilon_0)$, if ~{$\mathbb{E}_1[T_{k,l_1, l_2}]\leq t^*$}, then {$\mathbb{P}_{k,l_1, l_2}(B_{k,l_1,l_2})\geq \delta$}.
\end{lemma}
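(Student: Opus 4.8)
The plan is to run a change-of-measure (likelihood-ratio) argument between the null model $\cG_1$ and the alternative $\cG_{k,l_1,l_2}$, exploiting that the two models differ in exactly one transition kernel. Recalling that $l_2\neq 1$, so that $p_{\cG_1,x_k,a_{l_1},b_{l_2}}=p_0$, the only coordinate in which the transition models disagree is the self-loop probability at $y_{1,x_k,a_{l_1},b_{l_2}}$, namely $p_0$ under $\cG_1$ versus $p_0-\alpha_2$ under $\cG_{k,l_1,l_2}$. Hence, on the canonical sample space, the Radon--Nikodym derivative $d\PP_{k,l_1,l_2}/d\PP_1$ factorizes over the $T_{k,l_1,l_2}$ samples drawn at that single state (all other sample coordinates have identical laws, and this holds for any, possibly adaptive, sampling rule, since the per-step kernels agree elsewhere). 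Writing $S=S_{k,l_1,l_2}$ and $T=T_{k,l_1,l_2}$, the log-likelihood ratio is $\Lambda = S\log\frac{p_0-\alpha_2}{p_0}+(T-S)\log\frac{1-p_0+\alpha_2}{1-p_0}$. First I would note $\PP_{k,l_1,l_2}(B_{k,l_1,l_2})\ge \PP_{k,l_1,l_2}(\cE_{k,l_1,l_2})=\EE_1\big[\mathbf 1_{\cE_{k,l_1,l_2}}\,e^{\Lambda}\big]$, where $\cE_{k,l_1,l_2}=A_{k,l_1,l_2}\cap B_{k,l_1,l_2}\cap C_{k,l_1,l_2}$, so it suffices to lower bound $e^{\Lambda}$ on $\cE_{k,l_1,l_2}$ and invoke $\PP_1(\cE_{k,l_1,l_2})>1/4$.

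The core estimate is to show $\Lambda\ge \log(4\delta)$ on $\cE_{k,l_1,l_2}$. Since $\log\frac{p_0-\alpha_2}{p_0}<0<\log\frac{1-p_0+\alpha_2}{1-p_0}$, the map $S\mapsto\Lambda$ is affine and strictly decreasing, so the upper bound on $S$ furnished by the event $C_{k,l_1,l_2}$ (using $p_{\cG_1,x_k,a_{l_1},b_{l_2}}=p_0$) yields a lower bound on $\Lambda$. Substituting $S\le p_0T+\sqrt{2p_0(1-p_0)T\log(1/4\delta)}=:p_0T+D$ and regrouping, the first-order terms cancel and one is left with
\[
\Lambda\ \ge\ -T\,\mathrm{KL}(p_0\,\|\,p_0-\alpha_2)\ -\ D\cdot\Big|\log\tfrac{(p_0-\alpha_2)(1-p_0)}{p_0(1-p_0+\alpha_2)}\Big|.
\]
This cancellation is the crux: a naive bound retaining the $O(T\alpha_2)$ contributions would diverge, and only after the exact cancellation do both surviving terms become of the harmless order $O(\log(1/\delta))$.

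To finish, I would bound the two residual terms using the smallness conditions on $\alpha_2$ guaranteed by $\epsilon<\epsilon_0$ (namely $\alpha_2/(1-p_0)<1/2$ and $\alpha_2/(p_0-2\alpha_1-2\alpha_2)<1/2$), under which $\mathrm{KL}(p_0\|p_0-\alpha_2)=O\big(\alpha_2^2/[p_0(1-p_0)]\big)$ and $\big|\log\frac{(p_0-\alpha_2)(1-p_0)}{p_0(1-p_0+\alpha_2)}\big|=O\big(\alpha_2/[p_0(1-p_0)]\big)$. Plugging in $p_0=\gamma$, so $1-\gamma p_0=\Theta(1-\gamma)$ and $\alpha_2=\Theta((1-\gamma)^2\epsilon)$, together with $T\le 4t^*=\Theta\big((1-\gamma)^{-3}\epsilon^{-2}\log(1/4\delta)\big)$ on $A_{k,l_1,l_2}$ and $D=\Theta\big(\sqrt{c_1}\,(1-\gamma)^{-1}\epsilon^{-1}\log(1/4\delta)\big)$, the two residual terms collapse to $O(c_1\log(1/4\delta))$ and $O(\sqrt{c_1}\log(1/4\delta))$ respectively. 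Choosing the absolute constant $c_1$ in the definition of $t^*$ small enough then forces $\Lambda\ge\log(4\delta)$, i.e. $e^{\Lambda}\ge 4\delta$, on $\cE_{k,l_1,l_2}$. Combining with $\PP_1(\cE_{k,l_1,l_2})>1/4$ (from Lemmas \ref{lemma:lb_1}, \ref{lemma:lb_2} and $\PP_1(B_{k,l_1,l_2})\ge 3/4$) gives $\PP_{k,l_1,l_2}(B_{k,l_1,l_2})\ge 4\delta\cdot\PP_1(\cE_{k,l_1,l_2})>\delta$, as claimed. The main obstacle is exactly this first-order cancellation together with the careful tracking of the $(1-\gamma)$- and $\epsilon$-dependencies through $\alpha_2$, $t^*$, and $D$ needed to certify that the surviving second-order KL term and the fluctuation term are both $O(\log(1/\delta))$, hence absorbable into a single small constant $c_1$.
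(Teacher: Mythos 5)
Your proof is correct and follows essentially the same route as the paper's: a change of measure from $\cG_1$ to $\cG_{k,l_1,l_2}$ restricted to the event $\cE_{k,l_1,l_2}=A_{k,l_1,l_2}\cap B_{k,l_1,l_2}\cap C_{k,l_1,l_2}$, a lower bound on the likelihood ratio on that event in which the first-order terms in $\alpha_2$ cancel (leaving a second-order term of order $c_1\log(1/\delta)$ and a fluctuation term of order $\sqrt{c_1}\log(1/\delta)$ controlled by $C_{k,l_1,l_2}$), and then a sufficiently small choice of the absolute constant $c_1$. The only difference is bookkeeping: you work additively with the log-likelihood ratio and the binary KL divergence, exploiting monotonicity in $S$, whereas the paper factorizes the ratio multiplicatively; your sign conventions are in fact the consistent ones (the paper's displayed ratio swaps $1+\alpha_2/p_1$ and $1-\alpha_2/(1-p_1)$, with compensating slips later), so your version is, if anything, a cleaner rendering of the same argument.
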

	\begin{proof} 
		Let $W$ be the length-$T_{k,l_1, l_2}$ random sequence of the next states by calling the generative model $T_{k,l_1, l_2}$ times  with the input state $y_{1, x_k,a_{l_1}, b_{l_2}}$. 
		To simplify notation, we represent $W$ as a binary sequence where $1$ represents the next state from  $y_{1, x_k,a_{l_1}, b_{l_2}}$ to itself and $0$ otherwise.
		If $(l_1, l_2)\neq (1,1)$ and $\cG=\cG_1$, $W$ forms an i.i.d. Bernoulli-$p_{\cG_{1}, x_k, a_{l_1}, b_{l_2}}$ sequence; if $\cG=\cG_{k,l_1, l_2}$, this is an i.i.d Bernoulli-$p_{\cG_{k, l_1, l_2}, x_k, a_{l_1}, b_{l_2}}$ sequence. We define the likelihood function $\cL_{k,l_1, l_2}$ as 
		$$
		\forall w\in \{0,1\}^{T_{k,l_1, l_2}}:\quad \cL_{k,l_1, l_2}(w) = \mathbb{P}_{k,l_1, l_2}[W=w]\quad\text{and}\quad  \cL_{1}(w) = \mathbb{P}_{1}[W=w].$$ Recall that the notation $S_{k,l_1, l_2}$  denotes the total number of $1$'s in $W$. 
		For convenience, let us denote
		\[
		p_1 = p_{\cG_1, x_k, a_{l_1}, b_{l_2}}, \quad\text{and} \quad 
		p_2 = p_{\cG_{k, l_1, l_2}, x_k, a_{l_1}, b_{l_2}}.
		\]
		Note that
		\[
		p_1-p_2 = \alpha_2.
		\]
		To additionally simplify the notation, we define $T=T_{k, l_1, l_2}$ and $S=S_{k, l_1, l_2}$.
		With these new notations, we compute  $\cL_{k,l_1, l_2}(W)/\cL_1(W)$ as follows
		\begin{align}
		\frac{\cL_{k,l_1, l_2}(W)}{\cL_1(W)}&=\frac{(p_2)^{S}(1-p_2)^{T-S}}{(p_1)^{S}(1-p_1)^{T-S}}= \left(1+\frac{\alpha_2}{p_1}\right)^{S}\left(1-\frac{\alpha_2}{1-p_1}\right)^{T-S}\notag\\
		&= \left(1+ \frac{\alpha_2}{p_1}\right)^{S}\left(1-\frac{\alpha_2}{1-p_1}\right)^{S\frac{1-p_1}{p_1}}\left(1-\frac{\alpha_2}{1-p_1}\right)^{T-S/p_1}.
		\end{align}
		Note that $p_0-2\alpha_1\le p_1\le p_0$.
		By our choice of $p_0$, $\alpha_1$, $\alpha_2$, and $\epsilon$, it holds that $\alpha_2/(1-p_1)\in(0, 1/2)$  
		 and $\alpha_2/p_1\in(0,1/2)$. 
		 With the fact that $\log (1-u) \geq-u-u^{2}$ for $u\in[0,1/2]$ and $\exp (-u) \geq 1-u$ for $u\in[0,1]$, we have that
		\begin{align}
		\bigg(1-\frac{\alpha_2}{1-p_1}\bigg)^{\frac{1-p_1}{p_1}} \geq \exp \left(\frac{1-p_1}{p_1}\left(-\frac{\alpha_2}{1-p_1}-\big(\frac{\alpha_2}{1-p_1}\big)^{2}\right)\right)  \geq\left(1-\frac{\alpha_2}{p_1}\right)\left(1-\frac{\alpha_2^{2}}{p_1(1-p_1)}\right).
		\end{align}
		Thus
		\begin{align} \frac{\cL_{k,l_1, l_2}(W)}{\cL_{1}(W)} & \geq\left(1-\frac{\alpha_2^2}{p_1^2}\right)^{S}\left(1-\frac{\alpha_2^{2}}{p_1(1-p_1)}\right)^{S}\bigg(1-\frac{\alpha_2}{1-p_1}\bigg)^{T-S/p_1} \\ & \geq\left(1-\frac{\alpha_2^2}{p_1^2}\right)^{T}\left(1-\frac{\alpha_2^{2}}{p_1(1-p_1)}\right)^{T}\bigg(1-\frac{\alpha_2}{1-p_1}\bigg)^{T-S/p_1}
		\end{align}
		due to $S\leq T$. Next, we proceed on the event $\cE_{k,l_1, l_2}$. By definition, if $\cE_{k,l_1, l_2}$ occurs, event $A_{k,l_1, l_2}$ has occurred. Using $\log(1-u)\geq -2u$ for $u\in[0,1/2]$, it follows that
		$$\left(1-\frac{\alpha_2^2}{p_1^2}\right)^{T} \geq \left(1-\frac{\alpha_2^2}{p_1^2}\right)^{4t^*} \geq \exp \left(-8t^* \frac{\alpha_2^2}{p_1^2}\right) \geq\left(4\delta\right)^{128c^2c_1},$$
		where we use the fact that
		\begin{align*}
		t^* \cdot \frac{\alpha_2^2}{p_1^2}
		&= \frac{c_1}{(1-\gamma)^3\epsilon^2}\log\Big(\frac{1}{4\delta}\Big) \cdot \frac{c^2(1-\gamma p_0)^4\epsilon^2}{\gamma^2p_1^2}\le \frac{c_1}{(1-\gamma)^3}\log\Big(\frac{1}{4\delta}\Big) \cdot \frac{c^2(1-\gamma p_0)^4}{\gamma^2p_1^2}\\
		&\le 16c_1c^2(1-\gamma)\cdot\log(1/4\delta). 
		\end{align*}
		Using $\log(1-u)\geq -2u$ for $u\in[0,1/2]$, we also have that
		$$\left(1-\frac{\alpha_2^{2}}{p_1(1-p_1)}\right)^{T} \geq \left(1-\frac{\alpha_2^{2}}{p_1(1-p_1)}\right)^{4t^*}\geq \exp \left(-8t^* \frac{\alpha_2^{2}}{p_1(1-p_1)}\right) \geq\left(4\delta\right)^{64c^2c_1},$$
		where we use
		\begin{align*}
		t^* \cdot \frac{\alpha_2^2}{p_1(1-p_1)}
		&= \frac{c_1}{(1-\gamma)^3\epsilon^2}\log\Big(\frac{1}{4\delta}\Big) \cdot \frac{c^2(1-\gamma p_0)^4\epsilon^2}{\gamma^2p_1(1-p_1)}\\
		&\le \frac{c_1}{(1-\gamma)^3}\log\Big(\frac{1}{4\delta}\Big) \cdot \frac{c^2(1-\gamma p_0)^4}{\gamma^2(p_1)(1-p_0)}\le 8c_1c^2\cdot\log(1/4\delta).
		\end{align*} 
		Further, we have that when $\cE_{k,l_1, l_2}$ occurs, $C_{k,l_1, l_2}$ also occurs. Therefore,
		\begin{align*}
		\left(1-\frac{\alpha_2}{1-p_1}\right)^{T-S/{p_1}} & \geq\left(1-\frac{\alpha_2}{1-p_1}\right)^{\sqrt{\frac{1-p_1}{p_1}T\log(1/4\delta)}}
		\geq\left(1-\frac{\alpha_2}{1-p_1}\right)^{\sqrt{\frac{1-p_1}{p_1}4t^*\log(1/4\delta)}} \\
		& \geq \exp \left(-\sqrt{16\frac{\alpha_2^2}{p_1(1-p_1)}t^*\log(1/4\delta)}\right)  \geq\left(4\delta \right)^{\sqrt{16c_1c^2}}.
		\end{align*}
		By taking $c_1$ small enough, e.g.,  $c_1=10^{-5} c^{-2}$, we have 
		${\cL_{k,l_1,l_2}(W)}/{\cL_{1}(W)} \geq 4\delta$. Note that by \eqref{equ:construct_G_alt},  the probability measure of the whole sample sequence  under the two hypotheses $\cG_1$ and $\cG_{k,l_1, l_2}$ only differ at $(k,l_1,l_2)$.  
		By a change of measure, we deduce that 
		\begin{equation}\label{eq:p2}
		\mathbb{P}_{k,l_1, l_2}(B_{k,l_1,l_2})\geq \mathbb{P}_{k,l_1, l_2}(\cE_{k,l_1, l_2})=\mathbb{E}_{k,l_1, l_2}[\mathbf{1}_{\cE_{k,l_1, l_2}}]=\mathbb{E}_1\left[\frac{\cL_{k,l_1, l_2}(W)}{\cL_1(W)}\mathbf{1}_{\cE_{k,l_1, l_2}}\right]\geq 4\delta\cdot 1/4=\delta,
		\end{equation}
		which completes the proof. 
	\end{proof}

If $\mathfrak{A}$ is an $(\epsilon, \delta)$-correct reward-agnostic algorithm, then  under transition model hypothesis $\cG_{k,l_1, l_2}$, when fed with $r_{k,l_1, l_2}$, it produces an $\epsilon$-NE policy pair $(\mu, \nu)$ with probability at least $1-\delta$.   At state $x_k$, this $\epsilon$-NE policy should generate the joint-action $(a_{l_1},b_{l_2})$ with a high  probability. To see this, note that now $(a_{l_1},b_{l_2})$ is the unique NE strategy at state $x_k$, which is a pure strategy. By Lemma \ref{lemma:eps_NE_inter}, $(\mu(\cdot\given x_k),\mathbbm{1}_{b=b_{l_2}})$ is an $2\epsilon$-NE strategy at $x_k$.  Let $\zeta\in[0,1]$ denote the probability of choosing $a_{l_1}$, i.e., $\zeta=\mu(a_{l_1}\given x_k)$. Then, the value at $x_k$ under $(\mu(\cdot\given x_k),\mathbbm{1}_{b=b_{l_2}})$, denoted by $V_{\mu,b_{l_2}}(x_k)$, is  
\$
V_{\mu,b_{l_2}}(x_k)=\zeta\cdot \frac{\gamma}{1-\gamma (p_0-\alpha_2)}+(1-\zeta)\cdot \frac{\gamma}{1-\gamma (p_0-2\alpha_2)}\leq \frac{\gamma}{1-\gamma (p_0-\alpha_2)},
\$
which, by the $2\epsilon$-NE property, should satisfy 
\$
V_{\mu,b_{l_2}}(x_k)\geq \frac{\gamma}{1-\gamma (p_0-\alpha_2)}-2\epsilon \Longrightarrow \zeta\geq 1-\frac{2\epsilon}{\frac{\gamma}{1-\gamma (p_0-\alpha_2)}-\frac{\gamma}{1-\gamma (p_0-2\alpha_2)}}\geq 1-\frac{2\epsilon}{20\epsilon}=\frac{9}{10}.
\$
Similarly, let $\xi=\nu(b_{l_2}\given x_k)$,  we have
\$
V_{a_{l_1},\nu}(x_k)&\geq \xi\cdot \frac{\gamma}{1-\gamma (p_0-\alpha_2)}+(1-\xi)\cdot \Big[\frac{\gamma}{1-\gamma (p_0-\alpha_2)}+20\epsilon\Big]\\
&=\frac{\gamma}{1-\gamma (p_0-\alpha_2)}+20(1-\xi)\epsilon,
\$
where the inequality is due to \eqref{equ:choose_alpha_2}. 
As $(\mathbbm{1}_{a=a_{l_1}},\nu(\cdot\given x_k))$ is an $2\epsilon$-NE at $x_k$, we have $V_{a_{l_1},\nu}(x_k)\leq {\gamma}/{[1-\gamma (p_0-\alpha_2)]}+2\epsilon$, leading to $\xi\geq 9/10$.  Thus, for the $\epsilon$-NE $(\mu,\nu)$, the probability of generating $(a_{l_1},b_{l_2})$ is at least $\zeta\cdot\xi\geq 81/100$.  
 
Hence, recalling the definition in \eqref{equ:B_event_def} and the fact that $\beta\leq 1-19/96<81/100$, we have 
$\mathbb{P}_{k,l_1, l_2}\big(B_{k,l_1, l_2}\big)<\delta$ for all $k\in[K]$, $l_1\in[L_1]$, and $l_2\in[L_2]\backslash\{1\}$. From Lemma~\ref{lemma:newversion},  this  does not happen unless $\mathbb{E}_1[T_{k,l_1,l_2}]>t^*$ for all $k\in[K]$, $l_1\in[L_1]$, and $l_2\in[L_2]\backslash\{1\}$.   
By linearity of expectation,  
the expected number of samples required by $\mathfrak{A}$ under hypothesis $\cG_1$ is at least $K[L_1(L_2-1)]t^*= \Omega\left(\frac{N}{(1-\gamma)^3\epsilon^2}\log(1/\delta)\right)$, which proves 
the lower bound for finding the $\epsilon$-NE policy. 

On the lower bound for finding $\epsilon$-approximate NE value, the hard cases  above  can also be used. In fact, suppose some algorithm $\mathfrak{A}$ returns some $\hat{Q}$ such that $\|\hat{Q}-Q^*\|_{\infty}\leq \epsilon/4$ with probability at least $1-\delta$, then it can identify the pure NE strategy as described in the paragraph  before \eqref{equ:B_event_def} for the  Q-values given in \eqref{equ:nomimal_assign_1}-\eqref{equ:nomimal_assign_2} (when reward $r_m$ is assigned to transition model of $\cG_m$ correspondingly), under our choices of the parameters.  This can be done by solving for the NE of the corresponding $\hat{Q}$. Moreover, when $r_{k,l_1,l_2}$ is assigned to $\cG_1$ (instead of $r_1$ as in \eqref{equ:nomimal_assign_1}), this procedure of solving the NE policy for $\hat{Q}$ will also output some policy that makes $B_{k,l_1, l_2}$ in \eqref{equ:B_event_def} hold with $\PP_1(B_{k,l_1, l_2})\geq 1-\delta$, following similar arguments around \eqref{equ:V_G_1_x_k}. Indeed, otherwise, if this procedure  outputs $\mu(x_k) = a_{l_1}, \nu(x_k)=b_{l_2}$ with a high probability, then the NE value under payoff matrix $\hat{Q}(x_k,\cdot,\cdot)$ will be at least $\epsilon/2$ away from the NE value under payoff matrix ${Q}^*(x_k,\cdot,\cdot)$, due to our choice of $\alpha_1$. However, as one-step of the $\max\min$ operation onto $\hat{Q}(x_k,\cdot,\cdot)$ (and $Q^*(x_k,\cdot,\cdot)$) is non-expansive, the NE values under these two payoff matrices should differ  no greater than $\epsilon/4$, as $\|\hat{Q}-Q^*\|_{\infty}\leq \epsilon/4$.  This shows $\PP_1(B_{k,l_1, l_2})\geq 1-\delta$. 
Then, using almost identical arguments as above, we    
obtain a lower bound of the same order, and thus prove  Theorem \ref{thm:lb_reward_agnostic}. 

\section{Auxiliary Results}

\subsection{A Smooth \textbf{Planning Oracle}}\label{append:planning}

We now show that solving the regularized matrix game induced by $\hat{Q}^*$, see \eqref{equ:solve_reg_minimax}, leads to a smooth \textbf{Planning Oracle} \kzedit{with certain smoothness coefficient $C$ (see Definition \ref{def:smooth_oracle})}. 


{\color{black}
\begin{lemma}\label{lemma:smooth_reg}
	Suppose that the nonnegative  regularizers $\Omega_i$ for $i=1,2$ in \eqref{equ:solve_reg_minimax} are 
	 twice continuously differentiable,  strongly convex, and bounded over the simplex. Suppose that for each $s\in\cS$,  the solution policy pair $(\hat{\mu}(\cdot\given s),\hat{\nu}(\cdot\given s))$ of  \eqref{equ:solve_reg_minimax} with $\tau_1=\tau_2=(1-\gamma)^2\epsilon>0$ lies in the relative interior of the simplexes $\Delta(\cA)$ and $\Delta(\cB)$, respectively.    
	 Then,  $(\hat{\mu},\hat{\nu})$ is smooth with respect to $\hat{Q}^*$, namely, this \textbf{Planning Oracle} follows Definition \ref{def:smooth_oracle}, with some constant $C={\texttt{poly}}(|\cA|,|\cB|,|\cS|,1/\epsilon,1/(1-\gamma))$, and meanwhile $\|\hat{V}^{\hat{\mu},*}-\hat{V}^*\|_\infty\leq \cO((1-\gamma)\epsilon),~\|\hat{V}^{*,\hat{\nu}}-\hat{V}^*\|_\infty\leq \cO((1-\gamma)\epsilon)$, namely, $\epsilon_{opt}$ in Theorem \ref{thm:main_res_2} satisfies $\epsilon_{opt}\leq \cO((1-\gamma)\epsilon)$.     
\end{lemma}  
\begin{proof}
Let $Q_s:=\hat{Q}^*(s,\cdot,\cdot)\in\RR^{|\cA|\times|\cB|}$ denote the  payoff matrix of the game at state $s$. Note that $Q_s\in [0,(1-\gamma)^{-1}]^{|\cA|\times|\cB|}$, $u\in [0,1]^{|\cA|}$ and $\vartheta\in [0,1]^{|\cB|}$. Also, note that by the simplex constraints on $u,\vartheta$, there are $|\cA|-1$ and $|\cB|-1$ free variables, and the last dimension can be represented as $1-\sum_{i=1}^{|\cA|-1}u(a_i)$, where we use $a_i$ to denote the $i$-th element in $\cA=\{a_1,a_2,\cdots,a_{|\cA|}\}$. Thus, we introduce new vectors $\tilde u=[u(a_1),u(a_2),\cdots,u(a_{|\cA|-1})]^\top$ and $\tilde \vartheta=[\vartheta(a_1),\vartheta(a_2),\cdots,\vartheta(a_{|\cA|-1})]^\top$ of dimensions $\RR^{|\cA|-1}$ and $\RR^{|\cB|-1}$, respectively. As the solution to  \eqref{equ:solve_reg_minimax} 
	lies in the relative interior of the simplex, we know that $1-\sum_{i=1}^{|\cA|-1}u(a_i)>0$ and $1-\sum_{i=1}^{|\cB|-1}\vartheta(b_i)>0$. We can then redefine the objective in \eqref{equ:solve_reg_minimax} as
	\#\label{equ:minimax_reform}
	f(\tilde u,\tilde \vartheta):=\Lambda_1(\tilde u)^\top Q_s \Lambda_2(\tilde \vartheta) -\tau_1\Omega_1(\Lambda_1(\tilde u))+\tau_2\Omega_2(\Lambda_2(\tilde \vartheta))
	\#
	where $u=\Lambda_1(\tilde u)=\left[\begin{matrix}
		I \\
 		-\bm{1}^\top   
	\end{matrix}\right]\tilde u+\be_{|\cA|}$
	and $ \vartheta=\Lambda_2(\tilde \vartheta)=\left[\begin{matrix}
		I \\
 		-\bm{1}^\top   
	\end{matrix}\right]\tilde\vartheta+\be_{|\cB|}$,  
 $\bm{1}$ denotes the all-one vector of proper dimension, and $\be_{i}$ denotes the vector of proper dimension whose $i$-th element is one and all other elements are zero. 
 
 Since the  solution  lies in the relative interior of $\Delta(\cA)$ and $\Delta(\cB)$, by first-order optimality,  
	we have that for each $s\in\cS$
\#
	\nabla_{\tilde u}f(\tilde u,\tilde \vartheta)&=-\tau_1\nabla_{\tilde u} \Omega_1(\Lambda_1(\tilde u)) +\left[\begin{matrix}
		I & -\bm{1}
	\end{matrix}\right]Q_s \Lambda_2(\tilde\vartheta) =0,\label{equ:1st_order_opt}\\
	\nabla_{\tilde \vartheta}f(\tilde u,\tilde \vartheta)&=\tau_2\nabla_{\tilde\vartheta} \Omega_2(\Lambda_2(\tilde\vartheta)) +\left[\begin{matrix}
		I & -\bm{1}
	\end{matrix}\right]Q_s^\top \Lambda_1(\tilde u)=0,\label{equ:1st_order_opt_2}
	\#
	whose solution is unique since \eqref{equ:minimax_reform} is still a strongly-convex-strongly-concave minimax problem. In particular, note that by the chain rule, the Hessians of $f$ are  $\nabla_{\tilde u}^2 f(\tilde u,\tilde \vartheta)=\left[\begin{matrix}
		I & -\bm{1}
	\end{matrix}\right]\nabla_{u}^2 g(u,\vartheta)\left[\begin{matrix}
		I \\ -\bm{1}^\top 
	\end{matrix}\right]$ and $\nabla_{\tilde \vartheta}^2 f(\tilde u,\tilde \vartheta)=\left[\begin{matrix}
		I & -\bm{1}
	\end{matrix}\right]\nabla_{\vartheta}^2 g(u,\vartheta)\left[\begin{matrix}
		I \\ -\bm{1}^\top 
	\end{matrix}\right]$, where 
	\$
	g(u,\vartheta):=u^\top Q_s  \vartheta-\tau_1\Omega_{1}(u)+\tau_2\Omega_{2}(\vartheta)
	\$
	is the original objective used in \eqref{equ:solve_reg_minimax}. Let $\eta_i>0$ be the strong-convexity coefficient for $\Omega_i$, then, we have $\nabla_{\tilde u}^2 f(\tilde u,\tilde \vartheta)\preceq -\tau_1\eta_1 \cdot I $ and $\nabla_{\tilde \vartheta}^2 f(\tilde u,\tilde \vartheta)\succeq \tau_2\eta_2  \cdot I$, since for any  vector $x\in\RR^{|\cA|-1}$ (or $x\in\RR^{|\cB|-1}$) that is not $\bm{0}$, $\left[\begin{matrix}
		I \\ -\bm{1}^\top 
	\end{matrix}\right]x$ is not $\bm{0}$.

	Define a function $F:\RR^{|\cA|-1}\times \RR^{|\cB|-1}\times \RR^{|\cA||\cB|}\to \RR^{|\cA|+|\cB|-2}$ as follows,  such that \eqref{equ:1st_order_opt}-\eqref{equ:1st_order_opt_2} is equivalent to
	\$
	F\big(\tilde u,\tilde \vartheta,\vect(Q_s)\big):=\left[\begin{matrix}
		\tau_1\nabla_{\tilde u} \Omega_1(\Lambda_1(\tilde u)) -\left[\begin{matrix}
		I & -\bm{1}
	\end{matrix}\right]Q_s \Lambda_2(\tilde\vartheta) \\ \tau_2\nabla_{\tilde\vartheta} \Omega_2(\Lambda_2(\tilde\vartheta)) +\left[\begin{matrix}
		I & -\bm{1}
	\end{matrix}\right]Q_s^\top \Lambda_1(\tilde u)
	\end{matrix}\right]=0. 
	\$
	As the solution to \eqref{equ:1st_order_opt}-\eqref{equ:1st_order_opt_2} lies in the relative interior of the simplexes, for any choice of $Q_s\in\RR^{|\cA|\times|\cB|}$ (not just $[0,(1-\gamma)^{-1}]^{|\cA|\times |\cB|}$), the domain of $F$ can be specified  as $\Delta^o(\cA)\times \Delta^o(\cB)\times \cQ^o$, where $\Delta^o(\cA)$ and $\Delta^o(\cB)$ denote the sets of $\tilde u$ and $\tilde \vartheta$ whose corresponding  $u$ and $\vartheta$ lie in the 	interiors of $\Delta(\cA)$ and $\Delta(\cB)$, respectively, and $\cQ^o\subset\RR^{|\cA||\cB|}$ denotes some open set that contains $[0,(1-\gamma)^{-1}]^{|\cA||\cB|}$. 
	
	Notice that the Jacobian of $F$ with respect to $[\tilde u^\top~~\tilde\vartheta^\top]^\top$ is
	\small
	\#\label{equ:jacobian_F}
	M\big(\tilde u,\tilde \vartheta,\vect(Q_s)\big):=\left[\begin{matrix}
	\frac{\partial F}{\partial \tilde u} &
	\frac{\partial F}{\partial \tilde\vartheta}
	\end{matrix}\right]=\left[\begin{matrix}
	\tau_1\left[\begin{matrix}
		I & -\bm{1}
	\end{matrix}\right]\nabla^2_u \Omega_1(\Lambda_1(\tilde u))\left[\begin{matrix}
		I \\ -\bm{1}^\top 
	\end{matrix}\right] & -\left[\begin{matrix}
		I & -\bm{1}
	\end{matrix}\right]Q_s\left[\begin{matrix}
		I \\ -\bm{1}^\top 
	\end{matrix}\right] \\
	\left[\begin{matrix}
		I & -\bm{1}
	\end{matrix}\right]Q_s^\top \left[\begin{matrix}
		I \\ -\bm{1}^\top 
	\end{matrix}\right]& \tau_2\left[\begin{matrix}
		I & -\bm{1}
	\end{matrix}\right]\nabla^2_\vartheta \Omega_2(\Lambda_2(\tilde\vartheta))\left[\begin{matrix}
		I \\ -\bm{1}^\top 
	\end{matrix}\right]
	\end{matrix}\right],
	\#
	\normalsize
	which is always invertible for any point in $\Delta^o(\cA)\times \Delta^o(\cB)\times \cQ^o$. This is because $\Omega_i$ are strongly convex, and thus the real parts of the eigenvalues of the matrix, which are  the eigenvalues of $(M+M^\top)/2$, are always positive and uniformly lower bounded. Specifically, we have 
	\$ 
	&\min_{i} \lambda_i\big(M(\tilde u,\tilde\vartheta,\vect(Q_s))+M^\top(\tilde u,\tilde\vartheta,\vect(Q_s))\big)\\
	&\qquad\geq 2\min\{\tau_1\eta_1,~\tau_2\eta_2\}=2(1-\gamma)^2\min\{\eta_1,~\eta_2\}\cdot\epsilon >0, 
	\$ 
	with $\lambda_i(\cdot)$ being the $i$-th largest eigenvalues of the corresponding matrix. This further implies that for any $(\tilde u,\tilde \vartheta,\vect(Q_s))\in \Delta^o(\cA)\times \Delta^o(\cB)\times \cQ^o$, 
	\#\label{equ:ub_M_inv}
	&\big\|M(\tilde u,\tilde \vartheta,\vect(Q_s))^{-1}\big\|_2=\frac{1}{\min_{i}~\sigma_i(M(\tilde u,\tilde \vartheta,\vect(Q_s)))}\\
	&\quad\leq \frac{2}{\min_{i}~\lambda_i(M(\tilde u,\tilde \vartheta,\vect(Q_s))+M^\top(\tilde u,\tilde \vartheta,\vect(Q_s)))}\leq \frac{1}{\min\{\eta_1,~\eta_2\}(1-\gamma)^2\cdot\epsilon},  
	\#
	where $\sigma_i(\cdot)$ is the $i$-th largest  singular value of the corresponding matrix. 
	
	By the implicit function theorem \citep{krantz2012implicit}, for any point that solves $F(\tilde u,\tilde \vartheta,\vect(Q_s))=0$, since $M(\tilde u,\tilde \vartheta,\vect(Q_s))$ is invertible, there exists a neighborhood $U\subseteq \Delta^o(\cA)$, $V\subseteq \Delta^o(\cB)$, and $W\subseteq \cQ^o$ around it, such that $[\tilde u^\top  ~~\tilde \vartheta^\top]^\top\in U\times V$ is a unique function of $\vect(Q_s)$ for all $\vect(Q_s)\in W$, and 
	\small
	\$
	\frac{\partial [\tilde u^\top~\tilde \vartheta^\top]^\top}{\partial \vect(Q_s)}=-\left[\begin{matrix}
	\frac{\partial F}{\partial \tilde u} &
	\frac{\partial F}{\partial \tilde\vartheta}
	\end{matrix}\right]^{-1}\cdot \frac{\partial F}{\partial \vect(Q_s)}=-M(\tilde u,\tilde \vartheta,\vect(Q_s))^{-1}\cdot \left[\begin{matrix}
		-\Lambda_2(\tilde\vartheta)^\top \otimes \left[\begin{matrix}
 	I & -\bm{1}
 \end{matrix}\right]\\
\left[\begin{matrix}
 	I & -\bm{1}
 \end{matrix}\right] \otimes \Lambda_1(\tilde u)^\top  
		\end{matrix}\right],
	\$
	\normalsize
	where $\otimes$ denotes the Kronecker product.  
	Thus, we have
	\$
	\bigg\|\frac{\partial [\tilde u^\top~\tilde \vartheta^\top]^\top}{\partial \vect(Q_s)}\bigg\|_2&\leq \big\|M(\tilde u,\tilde \vartheta,\vect(Q_s))^{-1}\big\|_2\cdot \bigg\|\frac{\partial F}{\partial \vect(Q_s)}\bigg\|_2\\
	&\leq \frac{\sqrt{(|\cA|+|\cB|-2)|\cA||\cB|}}{\min\{\eta_1,~\eta_2\}(1-\gamma)^2\cdot\epsilon}\cdot \bigg\|\frac{\partial F}{\partial \vect(Q_s)}\bigg\|_{\infty}\leq  \frac{2{(|\cA|\sqrt{|\cB|}+|\cB|\sqrt{|\cA|})}}{\min\{\eta_1,~\eta_2\}(1-\gamma)^2\cdot\epsilon},
	\$
	where we have used \eqref{equ:ub_M_inv}, the fact that for matrix $A\in\RR^{m\times n}$, $\|A\|_2\leq \sqrt{mn} \|A\|_\infty$, and the fact that 
	\$
	\bigg\|\frac{\partial F}{\partial \vect(Q_s)}\bigg\|_\infty=2 \cdot \max\left\{\|\Lambda_2(\tilde\vartheta)\|_1,~\|\Lambda_1(\tilde u)\|_1\right\}=2.
	\$ 
	Notice that this is a uniform bound on the gradient of the implicit function, at any point in $\Delta^o(\cA)\times \Delta^o(\cB)\times \cQ^o$, which together with the  mean-value theorem leads to 
	\$
	\big\|[\tilde u^\top_1~\tilde \vartheta^\top_1]-[\tilde u^\top_2~\tilde \vartheta^\top_2]\big\|_2\leq \frac{2{(|\cA|\sqrt{|\cB|}+|\cB|\sqrt{|\cA|})}}{\min\{\eta_1,~\eta_2\}(1-\gamma)^2\cdot\epsilon} \cdot\big\|\vect(Q_{s,1})-\vect(Q_{s,2})\big\|_2,
	\$
	where the pair $(\tilde u_i,\tilde\vartheta_i)$ is the unique solution of $F=0$ corresponding to $Q_{s,i}$. 
	By the equivalence of norms and considering all $s\in\cS$, we can find some constant $C$ (which  depends on $|\cA|$, $|\cB|$, $|\cS|$, as well as $1/\epsilon$  and $1/(1-\gamma)$ polynomially) as the smooth coefficient, and this completes the first argument of the result. 
	
	Now, it suffices to prove that the obtained solution $(\hat{\mu},\hat{\nu})$ with parameter $\tau_1=\tau_2=(1-\gamma)^2\epsilon$ also leads to small $\epsilon_{opt}$. Let $D_i>0$ denotes the upper bound of the regularizer $\Omega_i$ over the simplex. Then, we have that for any $s\in\cS$ 
\#
	0&\leq \hat{V}^*(s)-\hat{V}^{\hat{\mu},*}(s)=\min_{\vartheta\in\Delta(\cB)}\EE_{a\sim{\hat{\mu}}^*(\cdot\given s),b\sim \vartheta}\big[\hat{Q}^*(s,a,b)\big]-\min_{\vartheta\in\Delta(\cB)}\EE_{a\sim\hat{\mu}(\cdot\given s),b\sim \vartheta}\big[\hat{Q}^{\hat{\mu},*}(s,a,b)\big]\notag\\
\quad &=\min_{\vartheta\in\Delta(\cB)}\EE_{a\sim{\hat{\mu}}^*(\cdot\given s),b\sim \vartheta}\big[\hat{Q}^*(s,a,b)\big]-\min_{\vartheta\in\Delta(\cB)}\EE_{a\sim\hat{\mu}(\cdot\given s),b\sim \vartheta}\big[\hat{Q}^*(s,a,b)\big]\notag\\
&\quad\qquad +\min_{\vartheta\in\Delta(\cB)}\EE_{a\sim\hat{\mu}(\cdot\given s),b\sim \vartheta}\big[\hat{Q}^*(s,a,b)\big]-\min_{\vartheta\in\Delta(\cB)}\EE_{a\sim\hat{\mu}(\cdot\given s),b\sim \vartheta}\big[\hat{Q}^{\hat{\mu},*}(s,a,b)\big]\notag\\
\quad &\leq \min_{\vartheta\in\Delta(\cB)}\EE_{a\sim{\hat{\mu}}^*(\cdot\given s),b\sim \vartheta}\big[\hat{Q}^*(s,a,b)\big]-\min_{\vartheta\in\Delta(\cB)}\EE_{a\sim\hat{\mu}(\cdot\given s),b\sim \vartheta}\big[\hat{Q}^*(s,a,b)\big]+\gamma \|\hat{V}^*-\hat{V}^{\hat{\mu},*}\|_\infty \label{equ:small_eps_opt_1}\\
\quad &\leq  \min_{\vartheta\in\Delta(\cB)}\EE_{a\sim{\hat{\mu}}^*(\cdot\given s),b\sim \vartheta}\big[\hat{Q}^*(s,a,b)\big]-\Bigg(\min_{\vartheta\in\Delta(\cB)}\EE_{a\sim\hat{\mu}(\cdot\given s),b\sim \vartheta}\big[\hat{Q}^*(s,a,b)\big]-\tau_1\Omega_1(\hat{\mu}(\cdot\given s))\notag\\
&\qquad\qquad+\tau_2\Omega_2(\vartheta)\Bigg)
+\tau_2 D_2
+\gamma \|\hat{V}^*-\hat{V}^{\hat{\mu},*}\|_\infty\label{equ:small_eps_opt_2}\\
\quad &\leq \min_{\vartheta\in\Delta(\cB)}\EE_{a\sim{\hat{\mu}}^*(\cdot\given s),b\sim \vartheta}\big[\hat{Q}^*(s,a,b)\big]-\Bigg(\min_{\vartheta\in\Delta(\cB)}\EE_{a\sim\hat{\mu}^*(\cdot\given s),b\sim \vartheta}\big[\hat{Q}^*(s,a,b)\big]-\tau_1\Omega_1(\hat{\mu}^*(\cdot\given s))\notag\\
&\qquad\qquad+\tau_2\Omega_2(\vartheta)\Bigg)
+\tau_2 D_2
+\gamma \|\hat{V}^*-\hat{V}^{\hat{\mu},*}\|_\infty\label{equ:small_eps_opt_3}\\
\quad &\leq \EE_{a\sim{\hat{\mu}}^*(\cdot\given s),b\sim \tilde\vartheta}\big[\hat{Q}^*(s,a,b)\big]-\Bigg(\min_{\vartheta\in\Delta(\cB)}\EE_{a\sim\hat{\mu}^*(\cdot\given s),b\sim \vartheta}\big[\hat{Q}^*(s,a,b)\big]+\tau_2\Omega_2(\vartheta)\Bigg)\notag\\
&\qquad\qquad
+\tau_1D_1+\tau_2 D_2
+\gamma \|\hat{V}^*-\hat{V}^{\hat{\mu},*}\|_\infty\label{equ:small_eps_opt_4}\\
\quad &\leq \tau_1D_1+\tau_2 D_2
+\gamma \|\hat{V}^*-\hat{V}^{\hat{\mu},*}\|_\infty \label{equ:small_eps_opt_5} 
\#
where $(\hat{\mu}^*,\hat{\nu}^*)$ denotes a Nash equilibrium policy in the empirical model $\hat{G}$, with $\hat{V}^*=\hat{V}^{\hat{\mu}^*,\hat{\nu}^*}$, \eqref{equ:small_eps_opt_1} uses Bellman equation to relate $Q$-function and $V$-function, \eqref{equ:small_eps_opt_2} uses the boundedness of $\Omega_2$, \eqref{equ:small_eps_opt_3} uses \eqref{equ:solve_reg_minimax}, and in \eqref{equ:small_eps_opt_4} $\tilde\vartheta$ denotes the $\argmin$ of $\vartheta\in\Delta(\cB)$ in the second term in \eqref{equ:small_eps_opt_3}. By the choices of $\tau_1=\tau_2=(1-\gamma)^2\epsilon$ and \eqref{equ:small_eps_opt_5}, we have that $\|\hat{V}^*-\hat{V}^{\hat{\mu},*}\|_\infty\leq \cO(\max\{\tau_1,\tau_2\}/(1-\gamma))=\cO((1-\gamma)\epsilon)$. The proof for $\|\hat{V}^{*,\hat{\nu}}-\hat{V}^*\|_\infty\leq \cO((1-\gamma)\epsilon)$ is symmetric and analogous. This completes the proof.
\end{proof} 
}

To ensure that the solution $(\hat{\mu}(\cdot\given s),\hat{\nu}(\cdot\given s))$ of  \eqref{equ:solve_reg_minimax} lies  in the relative interior of the simplexes, the  common choice of \emph{steep} regularizers will suffice \citep{mertikopoulos2016learning}. The steep regularizer means that for any  $u$ (resp. $\vartheta$) on the boundary of the simplex $\Delta(\cA)$ (resp. $\Delta(\cB)$), and for every interior sequence $u_n\to u$ (resp. $\vartheta_n\to \vartheta$) that approaches it, it holds that 	
	$\big\|\frac{d\Omega_1(u)}{du}\big|_{u=u_n}\big\|_2\to\infty$ (resp. $\big\|\frac{d\Omega_2(\vartheta)}{d\vartheta}\big|_{\vartheta=\vartheta_n}\big\|_2\to\infty$). This way, the optimizer is not on the boundary of the simplexes. 
Examples of steep regularizers in Lemma \ref{lemma:smooth_reg} include the commonly used negative entropy, Tsallis entropy and R\'enyi entropy with certain parameters; see \cite{mertikopoulos2016learning} for more discussions.  \kzedit{Also note that they are bounded over simplex for standard choices of the parameters, and thus satisfy the conditions in our Lemma \ref{lemma:smooth_reg}.}

\subsection{Properties of ($\epsilon$-)NE in Zero-Sum Matrix Games}

Now we establish several properties of the ($\epsilon$-)NE strategies in zero-sum matrix games, which have been used in the proof in \S\ref{sec:append_reward_agnostic_proof}. 

\begin{lemma}[NE Value Range]\label{lemma:NE_value_range}
	Consider a two-player zero-sum matrix game $\cM$ with the action spaces $\cA$ and $\cB$, and the payoff matrix $M\in\RR^{|\cA|\times |\cB|}$ with $=\{m_{ij}\}_{i\in[|\cA|],j\in[|\cB|]}$ for the maximizer (agent-$1$). Then, the NE value of the game $V^*$ is bounded between the \emph{maximin}  and \emph{minimax} elements  in $M$, i.e., 
	\$
	\max_i\min_j~m_{ij}\leq V^*\leq \min_j\max_i~m_{ij}.
	\$
\end{lemma}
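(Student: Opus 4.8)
The plan is to express the NE value through its minimax characterization and then sandwich it by restricting each player to pure strategies. By the minimax theorem \citep{von1953theory,shapley1953stochastic}, the NE value of $\cM$ admits the two equivalent representations
\[
V^* = \max_{u\in\Delta(\cA)}\min_{\vartheta\in\Delta(\cB)} u^\top M\vartheta = \min_{\vartheta\in\Delta(\cB)}\max_{u\in\Delta(\cA)} u^\top M\vartheta.
\]
Both inequalities will then follow from the elementary fact that a linear functional over a simplex attains its extrema at a vertex, combined with the monotonicity of $\max$ and $\min$ under shrinking the feasible set.

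For the lower bound, I would first note that for any fixed $u\in\Delta(\cA)$ the inner minimization is a linear program over $\Delta(\cB)$, whose optimum is attained at some vertex, so that $\min_{\vartheta\in\Delta(\cB)} u^\top M\vartheta = \min_j (u^\top M)_j$. Since restricting the outer maximization from all of $\Delta(\cA)$ to the vertices $\{e_i\}$ can only decrease the value, we obtain
\[
V^* \;\geq\; \max_i \min_{\vartheta\in\Delta(\cB)} e_i^\top M\vartheta \;=\; \max_i\min_j m_{ij}.
\]
For the upper bound, I would argue symmetrically from the maximin representation: for any fixed $\vartheta\in\Delta(\cB)$ one has $\max_{u\in\Delta(\cA)} u^\top M\vartheta = \max_i (M\vartheta)_i$, and restricting the outer minimization from $\Delta(\cB)$ to the vertices $\{e_j\}$ can only increase the value, giving $V^*\leq \min_j\max_i m_{ij}$.

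There is essentially no difficult step; the statement is a standard corollary of the minimax theorem, and the only real work is invoking von Neumann's theorem to identify the common value of the maximin and minimax mixed-strategy problems. The one point requiring a little care is to keep the directions straight: shrinking the domain of a $\max$ decreases its value, whereas shrinking the domain of a $\min$ increases it, which is precisely what turns the pure-strategy restriction into a lower bound in the first case and an upper bound in the second.
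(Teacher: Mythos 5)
Your proof is correct and takes essentially the same route as the paper's: both invoke the minimax theorem, use the fact that a linear function over a simplex is optimized at a vertex to replace the inner mixed-strategy optimization by a pure-strategy one, and then restrict the outer optimization to pure strategies with the appropriate monotonicity direction. (The only nitpick is that your upper-bound argument starts from what you call the "maximin representation" but actually uses the minimax form $\min_{\vartheta}\max_{u}$, which is what the argument requires.)
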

\begin{proof}
	Note that 
	\$
	\max_{u\in\Delta(\cA)}\min_{j}~u^\top M \bm{e}_j=\max_{u\in\Delta(\cA)}\min_{\vartheta\in\Delta(\cB)}~u^\top M \vartheta=V^*=\min_{\vartheta\in\Delta(\cB)}\max_{u\in\Delta(\cA)}~u^\top M \vartheta=\min_{\vartheta\in\Delta(\cB)}\max_{i}~\bm{e}_i^\top M \vartheta,
	\$
	where $\bm{e}_i$ denote the all-zero vector except a single $1$ at element $i$, with proper dimensions. Also, notice that
	\$
	\min_{\vartheta\in\Delta(\cB)}\max_{i}~\bm{e}_i^\top M \vartheta\leq \min_{j}\max_{i}~\bm{e}_i^\top M \bm{e}_j=\min_j\max_i~m_{ij},
	\$
	where the inequality is due to that $\bm{e}_i\in\Delta(\cB)$ and the $\min$ on the right is taken over a smaller set, thus has a larger value. This proves the right-hand side of the inequality. Proof for the other side is analogous. 
\end{proof}

\begin{lemma}[$\epsilon$-NE Strategy Interchangeability]\label{lemma:eps_NE_inter}
	Consider the game as above in Lemma \ref{lemma:NE_value_range}. Let $u_1,u_2\in\Delta(\cA)$ and $\vartheta_1,\vartheta_2\in\Delta(\cB)$ be strategies such that $(u_1,\vartheta_1)$ is a Nash equilibrium strategy, and $(u_2,\vartheta_2)$ is an $\epsilon$-NE strategy. Then, both $(u_1,\vartheta_2)$ and $(u_2,\vartheta_1)$ are $2\epsilon$-NE strategy pairs. 
\end{lemma}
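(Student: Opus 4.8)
The plan is to exploit the fact that in a zero-sum matrix game, the equilibrium value is unique, and that being an $\epsilon$-NE strategy is equivalent to each player's strategy guaranteeing a payoff within $\epsilon$ of the game value against the worst-case opponent. Let $V^*=\max_{u\in\Delta(\cA)}\min_{\vartheta\in\Delta(\cB)} u^\top M \vartheta$ denote the unique value of the game. First I would record the saddle-point characterization of the exact NE $(u_1,\vartheta_1)$: for all $u\in\Delta(\cA)$ and $\vartheta\in\Delta(\cB)$,
\[
u^\top M \vartheta_1 \le u_1^\top M \vartheta_1 = V^* \le u_1^\top M \vartheta.
\]
In particular $u_1$ is a \emph{security strategy} for the maximizer ($\min_\vartheta u_1^\top M \vartheta = V^*$) and $\vartheta_1$ is a security strategy for the minimizer ($\max_u u^\top M \vartheta_1 = V^*$). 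Next I would translate the $\epsilon$-NE property of $(u_2,\vartheta_2)$ into the analogous two inequalities: from the definition of an $\epsilon$-NE (the saddle-point inequalities relaxed by $\epsilon$, as in \eqref{equ:def_approx_NE}), one obtains
\[
\min_{\vartheta}\, u_2^\top M \vartheta \ge V^* - \epsilon,
\qquad
\max_{u}\, u^\top M \vartheta_2 \le V^* + \epsilon,
\]
i.e. $u_2$ guarantees the maximizer at least $V^*-\epsilon$ and $\vartheta_2$ caps the maximizer at $V^*+\epsilon$. (I would verify these bounds carefully, since the precise constant in front of $\epsilon$ is exactly what must be tracked; the relaxed saddle inequalities give each player's security level within $\epsilon$ of $V^*$.)

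The core of the argument is then to bound the exploitability of the two \emph{mixed} pairs $(u_1,\vartheta_2)$ and $(u_2,\vartheta_1)$. Consider $(u_2,\vartheta_1)$. Since $\vartheta_1$ is an exact security strategy for the minimizer, the maximizer can gain at most $V^*$ by deviating: $\max_u u^\top M \vartheta_1 = V^* \le (u_2^\top M \vartheta_1) + \epsilon$ must be checked — more directly, the maximizer's incentive to deviate from $u_2$ against $\vartheta_1$ is $\max_u u^\top M \vartheta_1 - u_2^\top M \vartheta_1 = V^* - u_2^\top M \vartheta_1$, and the minimizer's incentive to deviate from $\vartheta_1$ against $u_2$ is $u_2^\top M \vartheta_1 - \min_\vartheta u_2^\top M \vartheta \le u_2^\top M \vartheta_1 - (V^*-\epsilon)$. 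To close both, I would show $|u_2^\top M \vartheta_1 - V^*|\le \epsilon$: the upper bound follows from $u_2^\top M \vartheta_1 \le \max_u u^\top M \vartheta_1 = V^*$ (so this gap is actually $\le 0 \le \epsilon$), and combining with $\min_\vartheta u_2^\top M\vartheta \ge V^*-\epsilon$ gives the minimizer's deviation gain $\le \epsilon$. Hence each player's gain from unilateral deviation at $(u_2,\vartheta_1)$ is at most $\epsilon$, which is even stronger than the claimed $2\epsilon$; the $2\epsilon$ slack is what remains after writing the two-sided NE inequalities \eqref{equ:def_approx_NE} at the common value, where the two one-sided $\epsilon$ errors may add. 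The symmetric pair $(u_1,\vartheta_2)$ is handled identically by swapping the roles of the exact and approximate strategies and using that $u_1$ is an exact maximizer security strategy together with $\max_u u^\top M\vartheta_2 \le V^*+\epsilon$.

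The main obstacle — really the only subtle point — is bookkeeping the additive constants so that the final bound is genuinely $2\epsilon$ and not, say, $\epsilon$ or $3\epsilon$, and making sure the $\epsilon$-NE definition is applied with its correct orientation (which player is guaranteeing vs. capping the value). Once the two security-level inequalities for each of the four strategies $u_1,u_2,\vartheta_1,\vartheta_2$ are written down relative to the single value $V^*$, the interchangeability is an immediate consequence of the fact that all four inequalities reference the \emph{same} scalar $V^*$ — this is precisely the feature of zero-sum (value uniqueness) that fails for general-sum games. I would therefore present the proof as: (i) extract the four one-sided value guarantees, (ii) substitute into the definition of a $2\epsilon$-NE for each cross pair, and (iii) conclude, flagging that the bound is in fact tight at $2\epsilon$ only when the $\epsilon$-errors of $u_2$ and $\vartheta_2$ are aligned.
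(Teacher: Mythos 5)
Your overall route---anchoring all four strategies to the unique game value $V^*$ via their security levels---is a legitimate alternative to the paper's argument (the paper never introduces $V^*$; it simply chains the exact saddle-point inequalities for $(u_1,\vartheta_1)$ and the relaxed ones for $(u_2,\vartheta_2)$ through the two quantities $u_1^\top M\vartheta_1$ and $u_2^\top M\vartheta_2$). However, your proof as written fails at exactly the step you flagged for ``careful verification.'' The $\epsilon$-NE definition gives $\max_u u^\top M\vartheta_2\le u_2^\top M\vartheta_2+\epsilon$ and $\min_\vartheta u_2^\top M\vartheta\ge u_2^\top M\vartheta_2-\epsilon$; combining these with $\max_u u^\top M\vartheta_2\ge V^*\ge \min_\vartheta u_2^\top M\vartheta$ yields only $|u_2^\top M\vartheta_2-V^*|\le\epsilon$, and hence
\[
\min_\vartheta u_2^\top M\vartheta\ge V^*-2\epsilon,\qquad \max_u u^\top M\vartheta_2\le V^*+2\epsilon.
\]
Your claimed bounds with $\epsilon$ in place of $2\epsilon$ are false, and so is the downstream conclusion that the cross pairs are actually $\epsilon$-NE (``even stronger than the claimed $2\epsilon$''). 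Concretely, take $\cA=\{a_1,a_2\}$, $\cB=\{b_1,b_2\}$, with payoffs $M(a_1,b_1)=M(a_1,b_2)=0$, $M(a_2,b_1)=-2\epsilon$, $M(a_2,b_2)=-4\epsilon$. Then $V^*=0$ and $(u_1,\vartheta_1)=(a_1,b_2)$ is an exact NE, while $u_2=\tfrac{1}{2}a_1+\tfrac{1}{2}a_2$, $\vartheta_2=b_1$ is an $\epsilon$-NE (both defining inequalities hold with equality). But $\min_\vartheta u_2^\top M\vartheta=-2\epsilon=V^*-2\epsilon$, violating your security bound, and at the cross pair $(u_2,\vartheta_1)$ the maximizer's deviation gain is $\max_u u^\top M\vartheta_1-u_2^\top M\vartheta_1=0-(-2\epsilon)=2\epsilon$, so $(u_2,\vartheta_1)$ is not a $c\epsilon$-NE for any $c<2$: the lemma's constant is tight.

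The error is repairable without changing your architecture: substituting the corrected bounds into your step (ii) bounds each player's unilateral deviation gain at both cross pairs by $2\epsilon$. For $(u_2,\vartheta_1)$, the maximizer's gain is $V^*-u_2^\top M\vartheta_1\le V^*-\min_\vartheta u_2^\top M\vartheta\le 2\epsilon$, and the minimizer's gain is $u_2^\top M\vartheta_1-\min_\vartheta u_2^\top M\vartheta\le V^*-(V^*-2\epsilon)=2\epsilon$; the pair $(u_1,\vartheta_2)$ is symmetric, using $\min_\vartheta u_1^\top M\vartheta=V^*$ and $\max_u u^\top M\vartheta_2\le V^*+2\epsilon$. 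Once fixed, your proof is a fine alternative to the paper's---it makes explicit where value uniqueness enters---but as submitted it proves a false strengthening of the statement.
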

\begin{proof}
Let $V(u,\vartheta):=u^\top M \vartheta$ denote the value under any strategy pair  $(u,\vartheta)$.   
	By definition, we have that for any $u\in\Delta(\cA)$ and $\vartheta\in\Delta(\cB)$
	\$
	V(u,\vartheta_1)\leq V(u_1,\vartheta_1)\leq V(u_1,\vartheta),\qquad\qquad V(u,\vartheta_2)-\epsilon \leq V(u_2,\vartheta_2)\leq V(u_2,\vartheta)+\epsilon.
	\$
	Then, we have 
	\$
	V(u_1,\vartheta_1)\geq V(u_2,\vartheta_1) \geq V(u_2,\vartheta_2)-\epsilon,\qquad\qquad V(u_1,\vartheta_1)\leq V(u_1,\vartheta_2)\leq V(u_2,\vartheta_2)+\epsilon.
	\$
	Combining the two, we have
	\$
	V(u,\vartheta_2)-2\epsilon\leq V(u_2,\vartheta_2)-\epsilon\leq V(u_1,\vartheta_1)&\leq V(u_1,\vartheta_2)\\
	&\leq V(u_2,\vartheta_2)+\epsilon\leq V(u_1,\vartheta_1)+2\epsilon\leq V(u_1,\vartheta)+2\epsilon
	\$
	for any $u\in\Delta(\cA)$ and $\vartheta\in\Delta(\cB)$, showing that $(u_1,\vartheta_2)$ is an $2\epsilon$-NE. The proof for the pair $(u_2,\vartheta_1)$ is analogous. 
\end{proof}

\end{document}